\newcommand{\indep}{\perp \!\!\! \perp}
\DeclareMathOperator*{\argmin}{arg\,min}
\title{Evaluation of Active Feature Acquisition Methods for Static Feature Settings
}
\author{
\name \hspace{-5pt}  Henrik von Kleist$^{1,2,3}$ \email henrik.vonkleist@helmholtz-munich.de \\
       \name Alireza Zamanian$^{2,4}$ \email alireza.zamanian@iks.fraunhofer.de \\
      \name Ilya Shpitser$^{3}$ \email ishpits1@jhu.edu
      \\
        \name Narges Ahmidi$^{1,3,4}$ \email narges.ahmidi@helmholtz-munich.de
      \\
      \addr $^1$Institute of Computational Biology, Helmholtz Munich - German Research Center for Environmental Health,
Neuherberg, Germany
\\
\addr $^2$TUM School of Computation, Information and Technology, Technical University of Munich, 
Garching, Germany
\\
\addr $^3$Department of Computer Science, 
Johns Hopkins University
Baltimore, Baltimore, MD, 
USA
\\
\addr $^4$Fraunhofer Institute for Cognitive Systems IKS, 
Munich, Germany
      }
\begin{document}

\maketitle

\begin{abstract}%
Active feature acquisition (AFA) agents, crucial in domains like healthcare where acquiring features is often costly or harmful, determine the optimal set of features for a subsequent classification task. As deploying an AFA agent introduces a shift in missingness distribution, it's vital to assess its expected performance at deployment using retrospective data.
In a companion paper, we introduce a semi-offline reinforcement learning (RL) framework for active feature acquisition performance evaluation (AFAPE) where features are assumed to be time-dependent. Here, we study and extend the AFAPE problem to cover static feature settings, where features are time-invariant, and hence provide more flexibility to the AFA agents in deciding the order of the acquisitions. 
In this static feature setting, we derive and adapt new inverse probability weighting (IPW),  direct method (DM), and  double reinforcement learning (DRL) estimators within the semi-offline RL framework. These estimators can be applied when the missingness in the retrospective dataset follows a 
missing-at-random (MAR) pattern. They also can be applied to missing-not-at-random (MNAR) patterns in conjunction with appropriate existing missing data techniques. We illustrate the improved data efficiency offered by the semi-offline RL estimators in synthetic and real-world data experiments under synthetic MAR and MNAR missingness.
\end{abstract}

\keywords{active feature acquisition \and semi-offline reinforcement learning \and dynamic testing regimes \and missing data \and causal inference}

\section{Introduction}

Machine learning (ML) methods generally assume the ready availability of the complete set of input features at deployment, typically incurring little to no cost. 
However, this assumption does not hold universally, especially in scenarios where feature acquisitions are associated with substantial costs. 
In contexts like medical diagnostics, the cost of acquiring certain features, such as X-rays, biopsies, etc. encompasses not only financial costs but also poses potential risks to patient well-being. 
In such cases, the cost or harm of the feature acquisition should be balanced against the predictive value of the feature. 

Active 
feature acquisition (AFA) 
addresses this problem by training two AI components: 
i) the "AFA agent," an AI system tasked with determining which features should be observed, and 
ii) an ML prediction model that undertakes the prediction task based on the acquired feature set.  
While missingness was effectively determined by, for example, a physician during the acquisition of the retrospective dataset, the missingness at the deployment of the AFA agent is determined by the AFA agent, thereby leading to a missingness distribution shift.

In our companion paper \cite{von_kleist_evaluation_2023}, we formulate the problem of active feature acquisition performance evaluation (AFAPE) which addresses the task of estimating the performance an AFA agent would have at deployment, from the retrospective dataset. 
Consequently, upon completing the AFAPE problem, the physician will be well-informed about expected rates of incorrect diagnoses and the average costs associated with feature acquisitions when the AFA system is put into operation.

\subsection{Paper Goal} 
In this paper, we address,  as in the companion paper, the AFAPE problem.
We do so, however, by employing an additional static feature assumption. 
In the static feature setting, feature values do not change over time. This allows the agent to wait and consider the optimal order of acquisitions. Additionally, we assume said order is not known in the retrospective dataset, but only the overall acquired set of features is known. 
We also investigate how different missingness assumptions including the missing-at-random (MAR) and the missing-not-at-random (MNAR) assumptions affect the AFAPE problem. Note that these missingness assumptions always refer to the missingness present in the retrospective dataset.  
The paper aims to achieve two main objectives:
i) Identification, which entails pinpointing the assumptions that enable the unbiased estimation of costs and prediction performance from retrospective data; and 
ii) Estimation, which is centered on delivering accurate and precise cost estimates.

\subsection{Paper Outline and Contributions} 

We start the remainder of this paper by discussing relevant methods and background in Section \ref{sec_related_methods}
and define the AFAPE problem for static feature configurations in Section \ref{sec_afape}.
We then show in Section \ref{sec_missing_data} that the AFAPE problem can, also in static feature settings, be solved from a missing data (+online RL) viewpoint. 

In Section \ref{sec_semi_offline_RL}, we assume a special MAR scenario and extend the semi-offline RL viewpoint which was developed in the companion paper \cite{von_kleist_evaluation_2023} for general time-series settings, to the static feature setting. 
In the semi-offline RL setting, a retrospective data point is used to simulate the feature acquisition process of the AFA agent. The name "semi-offline RL" arises, because the AFA agent is allowed to try out different acquisitions (the online part), but the acquisition of features that are missing in the retrospective dataset is blocked (the offline part). The semi-offline RL viewpoint drastically reduces positivity requirements (i.e. requirements for which patterns of missingness exist in the data) compared to the missing data view. 
We further extend the semi-offline RL versions of the direct method (DM) \cite{von_kleist_evaluation_2023, levine_offline_2020}, inverse probability weighting (IPW) \cite{von_kleist_evaluation_2023,levine_offline_2020}, and the double reinforcement learning (DRL) estimator \cite{von_kleist_evaluation_2023,kallus_double_2020}.
The DRL estimator maintains its consistency, even in the presence of misspecifications in either the underlying Q-function or the propensity score model.

In Section \ref{sec_MNAR}, we extend the semi-offline RL framework to MNAR settings. We propose a hybrid missing data + semi-offline RL viewpoint that allows the trading of the benefits of both views.

In Section \ref{sec_experiments},
we demonstrate the improved data efficiency 
of the semi-offline RL estimators in synthetic and real-world data experiments with exemplary synthetically induced MAR, and MNAR missingness.

\section{Background and Related Methods}
\label{sec_related_methods}

In the following, we review the general AFA literature, how AFA methods have been evaluated previously, and give some general background on semi-parametric theory. 

\subsection{Active feature acquisition (AFA)} 

Research on active feature acquisition (AFA) and related problem formulations is largely scattered over different research communities.  Initially, scholars in the fields of economics and decision science explored the concept of "Value of Information" (VoI) \cite{lavalle_cash_1968,lavalle_cash_1968-1,gould_risk_1974,hilton_determinants_1979,hess_risk_1982,keisler_value_2014}.
Furthermore, similar concepts have been used to assess the cost-effectiveness of screening policies in the public health literature \cite{mushlin_is_1992,krahn_screening_1994,botteman_health_2003,force_screening_2009}. 
Additionally, AFA has been examined under the names "dynamic testing regimes" \cite{liu_efficient_2021, robins_estimation_2008} and "dynamic monitoring regimes" \cite{neugebauer_identification_2017,kreif_exploiting_2021} in the causal inference literature, although not with the task of feature acquisition for optimal prediction, but for optimal treatment recommendations. The applied causal inference concepts have, to our knowledge, not been extended to regular AFA and neither to static feature settings. They are thus not directly comparable with the semi-offline RL viewpoint developed in this paper. For a comparison with semi-offline RL in time-series AFA settings, see our companion paper \cite{von_kleist_evaluation_2023}.

The term "active feature acquisition" (AFA) is commonly found in the machine learning literature, yet alternative terms are also frequently employed. These encompass, among others, "active sensing" \cite{yoon_asac_2019, yoon_deep_2018, tang_adversarial_2020, jarrett_inverse_2020}, "active feature elicitation" \cite{natarajan_whom_2018,das_clustering_2021},"dynamic feature acquisition" \cite{li_dynamic_2021}, "dynamic active feature selection" \cite{zhang_novel_2019}, "element-wise efficient information acquisition" \cite{gong_icebreaker_2019}, "classification with costly features" \cite{janisch_classification_2020}, and "test-cost sensitive classification" \cite{xiaoyong_chai_test-cost_2004}.

This paper does not focus on the training of an AFA agent but on the evaluation of \textit{any} AFA agent under the inevitable missingness distribution shift at deployment. Nevertheless, we provide a short overview of some common approaches to train AFA agents in Appendix \ref{app_AFA_methods}. These encompass greedy information-theoretic and reinforcement learning-based approaches.

\subsection{Active Feature Acquisition Performance Evaluation (AFAPE)}

Our companion paper \cite{von_kleist_evaluation_2023} introduces the active feature acquisition performance evaluation (AFAPE) problem for the first time and shows three viewpoints that can be taken to solve AFAPE. In the following, we categorize the ways AFA performances have been previously reported into these categories. 

\textit{Offline RL view: } 
Offline RL can be applied to solve AFAPE, especially when feature measurements affect the underlying feature values \cite{von_kleist_evaluation_2023}. In this paper, we do, however, assume the absence of these effects (known as the no direct effect (NDE) assumption \cite{von_kleist_evaluation_2023, robins_estimation_2008}). Under the NDE assumption, offline RL will be inefficient and impose strong positivity requirements \cite{von_kleist_evaluation_2023}. As we additionally assume the order of acquisitions is not known (and thus the individual actions), it is not possible to apply offline RL in our static feature setting. 
Nevertheless, offline RL has still been applied in static feature AFA settings \cite{chang_dynamic_2019,cheng_optimal_2018}.
Chang \textit{et. al} \cite{chang_dynamic_2019}, for example, resolved the problem through the assumption of random acquisition orders, which does, however, only lead to unbiased estimation under the strong MCAR assumption.

\textit{Missing data + online RL view:}
Our companion paper, also establishes that a missing data view can be applied to solve AFAPE \cite{von_kleist_evaluation_2023}. We extend this viewpoint to the static feature setting and allow more complex MNAR scenarios in this paper. 
The missing data view has also been applied in AFA settings before, but to our knowledge only in the form of (conditional) mean imputation \cite{an_reinforcement_2022, erion_coai_2021, janisch_classification_2020}, which leads to estimation bias \cite{von_kleist_evaluation_2023}.

\textit{Semi-offline RL view:}
In the companion paper \cite{von_kleist_evaluation_2023}, we propose a novel semi-offline RL framework to solve AFAPE. We extend this framework in this paper to static feature settings. Semi-offline RL involves the simulation of an online interaction of the agent with its environment, but certain actions, where the underlying feature is not available in the retrospective dataset, are blocked. This blocking setup has been previously applied in AFA settings \cite{janisch_classification_2020, yoon_deep_2018} (including static feature settings), but the resulting bias, introduced by the blocking operation, has not been corrected before.


\subsection{Semi-parametric Theory}

The objective of AFAPE is to assess the anticipated costs and prediction performance associated with the deployment of an AFA system. In broader terms, this involves estimating a target (performance) parameter, denoted as $J = J(p)$, from an unknown distribution $p$, using a set of observed samples derived from this distribution (referred to as the retrospective dataset).
In the realm of semi-parametric theory, the aim is to identify suitable estimators for this target parameter $J$ while allowing at least some part of the data-generating process $p$ to be unrestricted. This flexibility of the data generating process allows for more more reliable and trustworthy estimates. 
For a more comprehensive overview of these concepts, please refer to Appendix \ref{app_semiparametric_theory}.

\section{AFAPE Problem Definition}
\label{sec_afape}

This section gives an introduction to the mathematical notation used in the context of the AFA setting and the AFAPE problem. A glossary of important terms and variables is shown in Appendix \ref{Appendix_glossary}.

\begin{figure}
\centering
\includegraphics[width=0.8\textwidth]
{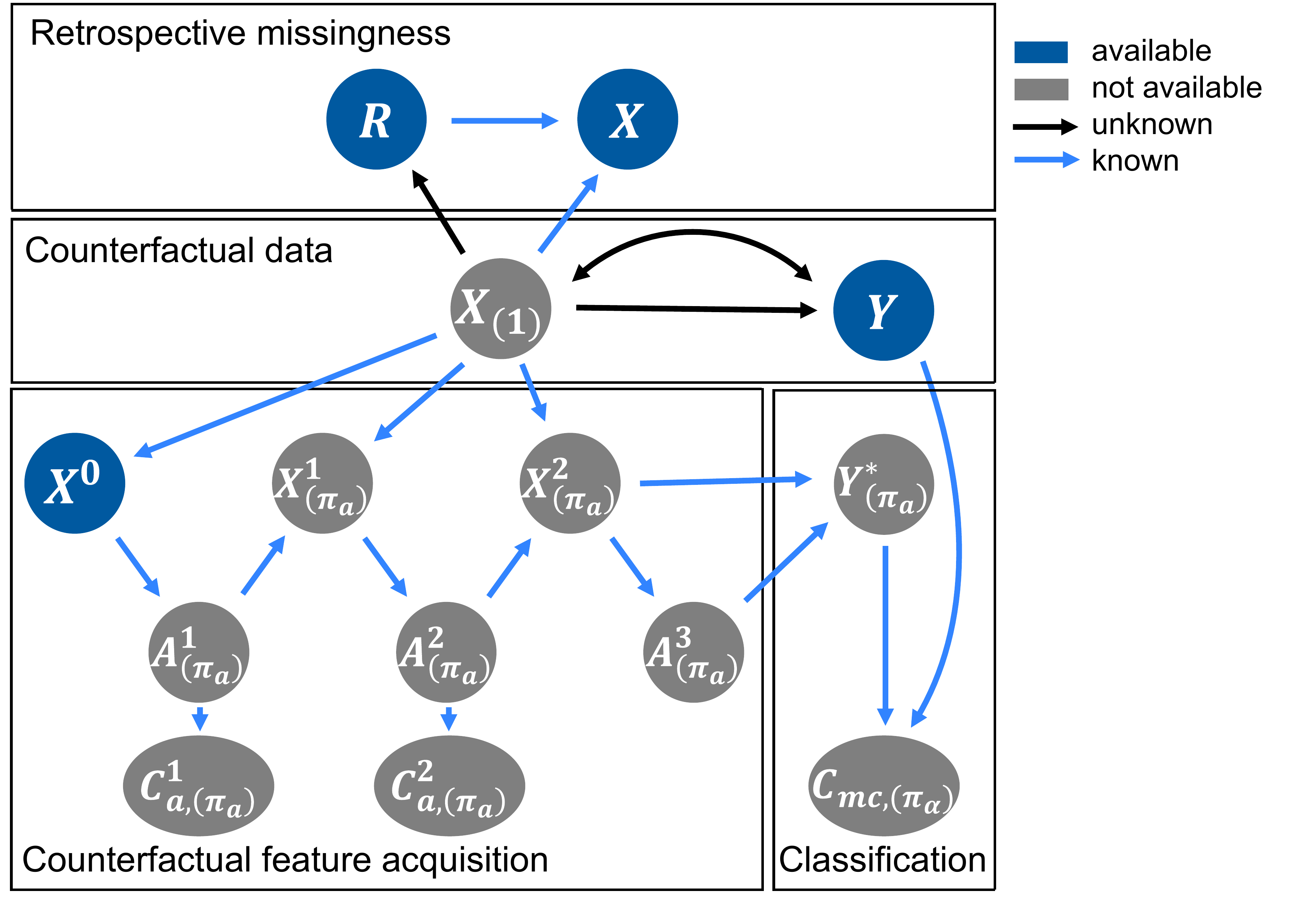}
\caption{ 
The causal graph depicting the AFA setting under a static feature assumption. 
The graph depicts the retrospective dataset $\mathcal{D}$, given by missingness indicators $R$, observed features $X$, and the label $Y$. It further shows the counterfactual feature acquisition process under the acquisition decisions of an AFA policy $\pi_\alpha$: Based on a set of always observed, free features $X^0$, $\pi_\alpha$ would have determined the first counterfactual acquisition decision $A^1_{(\pi_\alpha)}$ and produced an acquisition cost $C_{a,(\pi_\alpha)}^1$. The corresponding feature $X^1_{(\pi_\alpha)}$ would have been revealed (from the total set of counterfactual features $X_{(1)}$). After a certain number of such acquisitions, the acquisition process would have concluded with the action $A^T_{(\pi_\alpha)}$ = "stop \& predict" and a prediction $Y^*_{(\pi_\alpha)}$ would have been performed. A mismatch between $Y$ and the $Y^*_{(\pi_\alpha)}$ would have resulted in a misclassification cost $C_{mc,(\pi_\alpha)}$. 
The following edges showing long-term dependencies were excluded from the graph for visual clarity: 
$\underline{X}^{t-1}_{(\pi_\alpha)}, \underline{A}^{t-1}_{(\pi_\alpha)} \rightarrow  A^{t}_{(\pi_\alpha)}$ and $\underline{X}^{T-1}_{(\pi_\alpha)}, \underline{A}^{T-1}_{(\pi_\alpha)} \rightarrow  Y^*_{(\pi_\alpha)}$. }
\label{graph_general}
\end{figure}

\subsection{Retrospective Data}
The available retrospective dataset $\mathcal{D}$ contains always observed (categorical) labels $Y \in \{0,..., Y_{K-1} \}$, measured feature values $X \in ( \mathbb{R} \cup \{ "?" \} )^{d_x}$ (where $"?"$ denotes a special value to represent that a certain feature was not acquired) and missingness indicators $R \in \{0,1\}^{d_x}$. We assume no measurement error and that measurements do not affect the underlying feature values (known as the no direct effect assumption (NDE) \cite{von_kleist_evaluation_2023,robins_estimation_2008}. This establishes the following relationship between the underlying counterfactual feature values $X_{(1)}$, the measured features $X$, and the missingness indicators $R$: 
\begin{align}
\label{eq_consistency}
X_i =
\begin{cases}
X_{(1),i} & \text{ if } R_i = 1\\
"?" & \text{ if } R_i = 0.
\end{cases}
\end{align}
\noindent 
$X_{(1)}$ denotes the potential outcome of $X$, had $R= \vec{1}$ been true (possibly counter to the fact). 
The missingness mechanism is denoted by $\pi_\beta(R|X_{(1)})$. We assume the label $Y$ does not have an effect on $R$ (i.e. $A \not \rightarrow R$), but the developed concepts can be easily extended to allow for such an effect. 
The retrospective data is visualized in the top and middle parts of the causal graph in Figure \ref{graph_general}.
The graph shows an arrow  $X_{(1)} \rightarrow Y$, but the results in this paper also hold under a reversed causal relationship. We allow for additional unobserved confounding between $X_{(1)}$ and $Y$ (depicted as $X_{(1)}\leftrightarrow Y$), but not between $X_{(1)}$ and $R$.

\subsection{AFA Process}

In AFA, we are interested in what would have happened in the acquisition process of the dataset, had, instead of the missingness mechanism $\pi_\beta$, an AFA agent, characterized by the AFA policy $\pi_\alpha$, decided which features to acquire. We denote the counterfactual acquisition actions and observed features as $A_{(\pi_\alpha)}$ and $X_{(\pi_\alpha)}$, respectively. The AFA policy does not decide on all feature acquisitions at once but goes through a step-by-step feature acquisition process (Figure \ref{graph_general} bottom left). The AFA agent would have started with a set of always observed (free) features $X^0$ and would have chosen the first acquisition action $A^1_{(\pi_\alpha)}$. A specific feature acquisition cost $C^1_{a, (\pi_\alpha)}$ would have been produced. The corresponding feature value $X^1_{(\pi_\alpha)}$ would have been revealed to the agent, and the next acquisition action $A^2_{(\pi_\alpha)}$ would have been chosen. The acquisition process would have ended at step $T$ once the AFA policy would have chosen to stop acquisitions. 

An acquisition action $A^t =i$ (with $A^t \in \{1,..., d_x, d_x+1\}$ and $t \in \{1, ..., T\}$) defines which feature $X_{(1),i}$ will observed at step $t$ (if $i \leq d_x$). The action $A^t = d_x+1$ represents a "stop \& predict" action which concludes the acquisition process and initializes the classification. We further let $R_{(\pi_\alpha)}^t$ denote the set of acquired features up to time t (i.e. the set version of $A_{(\pi_\alpha)}^1, ..., A_{(\pi_\alpha)}^t$).  
We define $\underline{X}^{t} = \{X^0, ..., X^t\}$ as the set of acquired features until step $t$, and $\overline{X}^{t} = \{X^t, ..., X^{T-1}\}$ as the set of acquired features from $t$ until the end of the acquisition process (and similarly for other variables). The total set of acquired features can thus be written as $X = 
\underline{X}^{T-1} = \overline{X}^{0}$. 
The AFA policy can now be defined as the distribution $\pi_\alpha(A^t| \underline{X}^{t-1}, \underline{A}^{t-1})$ which decides on the next acquisition based on all previously acquired features. We assume $\pi_\alpha(A^t| \underline{X}^{t-1}, \underline{A}^{t-1}) = \pi_\alpha(A^t| \underline{X}^{t-1}, R^{t-1})$ as it does not make sense to choose the next feature based on the order of previously acquired features.  

\subsection{Classification Process}

Once the AFA process would have concluded, a classification of the label $Y$ would have been performed based on the acquired set of features. If the true label $Y$ would have differed from its prediction $Y^*_{(\pi_\alpha)}$, a misclassification cost $C_{mc,(\pi_\alpha)}$ would have been produced (Figure \ref{graph_general} bottom right). We denote the (deterministic) classifier used for the prediction as $g(Y^*| \underline{X}^{T-1},\underline{A}^T)$. Throughout the paper we will denote (known) deterministic distribution by $g(.)$. 
Similarly as for the AFA policy, we assume 
$g(Y^*| \underline{X}^{T-1},\underline{A}^T) = g(Y^*| \underline{X}^{T-1}, R^T)$ as the optimal prediction of the label does not depend on the order of feature acquisitions.

\subsection{Problem Definition: Active Feature Acquisition Performance Evaluation (AFAPE)}

When provided with an AFA policy $\pi_{\alpha}(A^t|\underline{X}^{t-1},\underline{A}^{t-1})$ and a classifier $g(Y^* |\underline{X}^{T-1}, \underline{A}^T)$, the objective of AFAPE is to calculate the expected acquisition and misclassification costs that would occur if the AFA policy $\pi_{\alpha}$ and the classifier $g$ were deployed. The estimation challenge for this expected counterfactual cost can be framed as the task of estimating 

\begin{equation}
\label{eq:AFAPE_objective}
J_a = \mathbb{E}\left[\sum_{t=1}^T C_{a, (\pi_\alpha)}^t  \right], \text{ and } J_{mc} = \mathbb{E}\left[ C_{mc, (\pi_\alpha)}  \right],
\end{equation}

The objective of this paper is twofold:
i) To perform identification, meaning to establish the conditions under which it becomes feasible to estimate the target parameters $J_a$ and $J_{mc}$ from the retrospective dataset; and 
ii) To develop unbiased estimators for $J_a$ and $J_{mc}$.

Since the AFAPE problem exhibits similarities between $J_a$ and $J_{mc}$, we will primarily concentrate on $J_{mc}$ in the main sections of this paper. For brevity, we use the notations $J_{mc} \equiv J$ and $C_{mc} \equiv C$. Detailed estimation formulas for the acquisition costs are provided in the relevant appendix.

\subsection{Problem Definition: Optimization of Active Feature Acquisition Methods}
\label{sec_afa_optimization_problem}

While we only address the evaluation of AFA methods in this paper, we include the definition of the AFA optimization problem for completeness. In AFA, the objective is to identify the optimal AFA policy 
$\pi_{\alpha}(A^t|\underline{X}^{t-1},\underline{A}^{t-1};\phi_1)$ parameterized by 
$\phi_1$, as well as the optimal classifier 
$g(Y^* |\underline{X}^{T-1}, \underline{A}^T; \phi_2)$ parameterized by $\phi_2$. 
The aim is to utilize these jointly in a way that minimizes the expected cumulative cost comprising both counterfactual acquisition and misclassification costs:
\begin{align}
\label{eq:AFA_objective}
   \phi_1^{*}, \phi_2^{*} = 
   \argmin_{\phi_1,\phi_2}  
   J_\textit{total}(\phi_1, \phi_2)= 
   \argmin_{\phi_1,\phi_2}  
   \mathbb{E}\left[
   \sum_{t=1}^T C_{a,(\pi_\alpha)}^t +
   C_{mc,(\pi_\alpha)} \Big\vert \phi_1, \phi_2  \right].
\end{align}

\subsection{Assumptions}

We provide here a list of assumptions made throughout this paper. 
These include: 
\begin{itemize}
    \item \textit{No measurement noise:} Features are perfectly measured. 
    \item \textit{No direct effect (NDE)} \cite{von_kleist_evaluation_2023,robins_estimation_2008}: The measurement of any feature does not affect any features or the label. 
    \item \textit{Consistency:} Relates the counterfactual and observational distribution through Eq. \ref{eq_consistency}. 
    \item \textit{No non-compliance:} Assumes the AFA policy $\pi_\alpha$ solely determines which features are being acquired. This excludes, for example, scenarios where patients refuse certain tests or miss appointments. 
    \item  \textit{No interference: } Assumes actions on one individual do not affect others. This excludes, for example, scenarios where the hospital staff is overwhelmed by a high volume of simultaneous test requests. 
    \item \textit{Positivity / experiment treatment assignment: } 
    Assumes that certain sets of features had positive probability of being acquired during the acquisition of the retrospective dataset. The necessary positivity assumption for identification differs based on the applied viewpoint to solve AFAPE and will thus be specified/ derived during the discussion of the respective viewpoints. 
\end{itemize}

\noindent 
In addition to these assumptions, we also make varying assumptions about the missingness process (i.e. about  $\pi_\beta(R|X_{(1)})$). These differ between sections of the paper and can be categorized as follows: 

\textit{Missing-completely-at-random (MCAR)}:
Under MCAR, the reason for missingness is completely independent of any feature values: $\pi_\beta(R|X_{(1)}) = \pi_\beta(R)$. 

\textit{Missing-at-random (MAR)}:
Under MAR, the reason for missingness only depends on the observed features: 
$p(R=r | X_{(1)}) = p(R=r | \{ X_{(1),i} : r_i = 1 \})$. We will in this paper, however, only consider the following simpler MAR assumption: 
\begin{equation}
\label{eq_MAR}
    p(R=r | X_{(1)}) = p(R=r | X_{o} )
\end{equation}
\noindent 
where $X_{o}$ is a set of always observed features. Note that $X_{o}$ is only always observed in the retrospective data and does not need to be always observed under $\pi_\alpha$. 

\textit{Missing-not-at-random (MNAR)}:
All scenarios that are not MCAR or MAR and feature a dependence of $R$ on potentially unobserved variables, are denotes as MNAR. 

\section{Missing Data (+ Online Reinforcement Learning) View}
\label{sec_missing_data}

The AFAPE problem can, in time-series settings, be formulated as a combination of a missing data and an online RL problem \cite{von_kleist_evaluation_2023}. 
In this section, we show that this is also the case for the static feature setting. 

\subsection{Identification}

We start with the following theorem which decomposes the AFAPE problem into a missing data and an online RL part. 

\begin{theorem}
\label{theorem_problem_missing} 
(AFAPE problem decomposition into missing data and online RL).
The AFAPE problem of estimating $J$ (Equation \ref{eq:AFAPE_objective})
can be decomposed under the no interference assumption into 
\begin{align}
\label{eq:AFAPE_objective_miss}
    J  
    = 
    \sum_{X_{(1)}, Y} 
    \underbrace{
    \mathbb{E}
    \left[
    C_{(\pi_{\alpha})}|X_{(1)},Y
    \right]}_{\text{online RL}}
    \underbrace{p(X_{(1)}, Y)}_{\text{missing data}}.
\end{align}
Furthermore, $J$ is identified if $p(X_{(1)}, Y)$ is identified. 
\end{theorem}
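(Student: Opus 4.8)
The plan is to establish the decomposition by a direct application of the law of total expectation, and then obtain identification by isolating which of the two resulting factors actually carries the unknowns of the problem. Starting from $J = \mathbb{E}[C_{(\pi_\alpha)}]$ and conditioning on the pair $(X_{(1)}, Y)$, the tower property yields
\[
J = \mathbb{E}\big[\,\mathbb{E}[C_{(\pi_\alpha)} \mid X_{(1)}, Y]\,\big] = \sum_{X_{(1)}, Y} \mathbb{E}[C_{(\pi_\alpha)} \mid X_{(1)}, Y]\, p(X_{(1)}, Y),
\]
which is exactly the claimed form in Eq. \ref{eq:AFAPE_objective_miss}. The no interference assumption enters precisely here: it guarantees that $C_{(\pi_\alpha)}$ for a given individual is a function of that individual's own $(X_{(1)}, Y)$ together with the policy's randomness alone, so that $J$ may legitimately be written as a single-individual expectation against $p(X_{(1)}, Y)$ rather than against a joint population law.

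The substantive step is to show that the inner factor $\mathbb{E}[C_{(\pi_\alpha)} \mid X_{(1)}, Y]$ contains no unknown quantities, which justifies labelling it as an \emph{online RL} object. I would unroll the counterfactual acquisition process: starting from the free features $X^0 \subseteq X_{(1)}$, the policy $\pi_\alpha(A^t \mid \underline{X}^{t-1}, R^{t-1})$ selects each action, and by consistency (Eq. \ref{eq_consistency}) together with the NDE assumption the revealed feature $X^t_{(\pi_\alpha)}$ is simply the corresponding coordinate of $X_{(1)}$ — the counterfactual measurements coincide with the underlying values and are unaffected by which features happen to be measured. Hence, conditional on $X_{(1)}$, the whole trajectory, the terminal prediction $Y^*_{(\pi_\alpha)}$ produced by the known classifier $g$, and therefore the misclassification cost $C_{(\pi_\alpha)}$ (which compares $Y^*_{(\pi_\alpha)}$ against the conditioned-on $Y$) are determined entirely by the known $\pi_\alpha$, $g$, and cost function, with stochasticity coming only from $\pi_\alpha$ and $g$. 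Crucially, the retrospective missingness mechanism $\pi_\beta$ never enters this term, which is what makes it simulable without retrospective data: one simply rolls the agent out against the fully specified $X_{(1)}$.

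Identification then follows immediately. Every factor $\mathbb{E}[C_{(\pi_\alpha)} \mid X_{(1)}, Y]$ in the sum is a known functional of the problem inputs, so the only object in Eq. \ref{eq:AFAPE_objective_miss} not fixed by the problem specification is the joint distribution $p(X_{(1)}, Y)$ — the \emph{missing data} part. Consequently, if $p(X_{(1)}, Y)$ is identified from the observed data law, then $J$ is a known function of identified quantities and is itself identified. I expect the main obstacle to lie in the second paragraph rather than in the algebra: carefully arguing that the counterfactual feature reveals equal the coordinates of $X_{(1)}$ — so that the conditional cost is free of $\pi_\beta$ and of any unobserved confounding between $X_{(1)}$ and $Y$ (which is absorbed harmlessly into $p(X_{(1)}, Y)$ once we condition on both) — is where the NDE and consistency assumptions do the real work.
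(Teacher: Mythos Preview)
Your proposal is correct and follows essentially the same approach as the paper: apply the tower property to obtain the decomposition, then argue that the inner conditional expectation $\mathbb{E}[C_{(\pi_\alpha)}\mid X_{(1)},Y]$ is composed entirely of known functions (the paper writes this out as an explicit factorization into $g(\cdot)$'s and $\pi_\alpha$, whereas you describe the same thing in words by unrolling the acquisition process). Your discussion of the roles of no interference, NDE, and consistency is a bit more expansive than the paper's terse treatment, but the underlying argument is the same.
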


\begin{proof}
The inner expected value is identified, since all conditionals of the following factorization are known functions: 
\begin{align}
\label{eq_identification_online_RL}
 \mathbb{E}
    [
    C_{(\pi_{\alpha})}|X_{(1)},Y
    ]
     =  \sum_{\underline{X}_{(\pi_{\alpha})}^T,\underline{A}_{(\pi_{\alpha})}^T,Y_{(\pi_{\alpha})}^*,C_{(\pi_{\alpha})}}
    C_{(\pi_{\alpha})}
    q(C_{(\pi_{\alpha})}, Y^*_{(\pi_{\alpha})}, X_{(\pi_{\alpha})}, A_{(\pi_{\alpha})}|
    X_{(1)},Y) 
\end{align}
where 
\begin{align*}
    q(C_{(\pi_{\alpha})},  Y^*_{(\pi_{\alpha})}, & X_{(\pi_{\alpha})}, A_{(\pi_{\alpha})}|
    X_{(1)},Y) = 
    \\ &  = 
    \prod_{t=0}^{T-1}
    \underbrace{
    g(
    X_{(\pi_{\alpha})}^t
    |
    A_{(\pi_{\alpha})}^{t},
    X_{(1)}
    ) 
    }_{\text{feature revelations}}
    \prod_{t=1}^T
    \underbrace{
    \pi_{\alpha}(
    A_{(\pi_{\alpha})}^t|
    \underline{X}_{(\pi_{\alpha})}^{t-1},
    \underline{A}_{(\pi_{\alpha})}^{t-1}) 
    }_{\text{acquisition decisions}}
    \underbrace{
    g(Y_{(\pi_{\alpha})}^*|
    \underline{X}_{(\pi_{\alpha})}^{T-1}, \underline{A}_{(\pi_{\alpha})}^T 
    )}
    _{\text{label prediction}}
    \underbrace{
    g(C_{(\pi_{\alpha})}|
    Y, 
    Y_{(\pi_{\alpha})}^* 
    )}
    _{\text{cost computation}}.
\end{align*}
\end{proof}

\noindent 
The inner expected value $\mathbb{E}
[
C_{(\pi_{\alpha})}|X_{(1)},Y
]$ represents the online RL part as it features the evaluation of a policy in a known environment. The outer expected value represents a missing data problem as it necessitates the identification of the counterfactual feature distribution $p(X_{(1)},Y)$. 

The identification of $p(X_{(1)}, Y)$) depends on the assumed model for the missingness mechanism. While identification is not possible for all MNAR scenarios, there exists a large class of MNAR submodels that are identified \cite{bhattacharya_identification_2020, nabi_full_2020}. We provide a review of identification in missing data problems in Appendix \ref{app_missing_data} and provide an example. 

Identification of $p(X_{(1)}, Y)$ also requires at least the following positivity assumption:

\vspace{10pt}
\noindent
\textit{Positivity assumption (missing data): } 
\noindent
\begin{align}
\label{eq_positivity_missing_data}
\pi_{\beta}(R =\vec{1}| 
X_{(1)} = x)
>0 
\quad \quad \quad \quad 
\forall  x  
\end{align}

\noindent 
This positivity assumption states that there must be support for complete cases for all subpopulations in the data. This assumption is very strong and it may be easily violated in real-world settings as physicians rarely perform all possible feature acquisitions for every group of patients.

\subsection{Estimation}

The online RL problem of finding an estimate for $\mathbb{E}\left[C_{(\pi_{\alpha})}|X_{(1)},Y\right]$,  denoted as $\hat{\mathbb{E}}\left[C_{(\pi_{\alpha})}|X_{(1)},Y\right]$, can be solved using simple Monte Carlo integration to solve the expected value in Eq. \ref{eq_identification_online_RL}. 
The missing data problem allows the use of the following two estimators: 

\vspace{10pt}
\noindent
\textit{1) Inverse probability weighting (IPW):} 

\noindent
The IPW estimator \cite{seaman_review_2013} has the following form: 
\begin{align}
    J_{\textit{IPW-Miss}} = \hat{\mathbb{E}}_n \left[ \rho_{\textit{Miss}}\text{ } \hat{\mathbb{E}}\left[C_{(\pi_{\alpha})}|X_{(1)},Y\right]\right], \text{  where  } \rho_{\textit{Miss}} =  
    \frac{\mathbb{I}(R= \vec{1}) }{
    \hat{\pi}_{\beta}(
    R=\vec{1}| 
    X_{(1)})}, 
\end{align}

\noindent 
where $\mathbb{I}(.)$ denotes the indicator function.
A model for the propensity score $\pi_{\beta}$, denoted as $\hat{\pi}_{\beta}$, needs to be learned from the data. This estimator cannot make use of large parts of the available data, since only complete cases are (where $\mathbb{I}(R= \vec{1})$) are reweighted.

\vspace{10pt}
\noindent
\textit{2) Plug-in of the G-formula:} 

\noindent
The estimator based on the plug-in of the G-formula \cite{hernan_causal_2020} has the following form: 
\begin{align} \label{eq:MI}
    J_{\textit{G-Miss}} = \sum_{X_{(1)},Y} \hat{\mathbb{E}}\left[C_{(\pi_{\alpha})}|X_{(1)},Y\right]  \hat{p}(X_{(1)},Y).  
\end{align}

\noindent 
To employ this estimator, one must estimate the counterfactual data distribution $p(X_{(1)}, Y)$. Typically, this distribution is not modeled entirely; instead, the empirical distribution of the existing data is supplemented with samples (imputations) generated from a model for the missing data. This strategy is referred to as multiple imputation (MI) \cite{sterne_multiple_2009}. For a more in-depth exploration of MI and potential challenges within the context of AFA, we direct readers to our companion paper \cite{von_kleist_evaluation_2023}.

\section{Semi-offline Reinforcement Learning View}
\label{sec_semi_offline_RL}

In this section, we consider the special MAR assumption $\pi_\beta(R|X_{(1)}) = \pi_\beta(R|X_{o})$ where $X_{o}$ corresponds to a subset of features that are always observed in the retrospective dataset.  

To motivate the semi-offline RL viewpoint, we now examine a simple scenario to exemplify that the missing data + online RL view imposes unnecessarily strong positivity requirements and leads to inefficient estimation. 
Suppose an AFA policy $\pi_\alpha$ always acquires only features $X_1$ and $X_2$ out of a possible set of 4 features. 
The missing data view still requires the identification of $p(X_{(1)},Y)$ for all features, even though the target cost is independent of the features that are not acquired. 
This scenario suggests that it should be possible to relax the strong positivity requirement of complete cases to a weaker requirement of positive support for data points where $X_1$ and $X_2$ are observed. Likewise, all datapoints where $X_1$ and $X_2$ were observed should be used for reweighting in the IPW estimator, not just complete cases. The following semi-offline RL view achieves both of these improvements. 

The concept of semi-offline reinforcement learning is closely related to off-policy reinforcement learning. In off-policy RL, even though it is possible to directly run the target policy $\pi_{\alpha}$ in a known environment, one instead uses a different simulation policy $\pi_{sim}$ to sample the actions. To prevent introducing bias, adjustments are made to account for the differences in the distributions.
In our semi-offline RL approach, we follow a similar strategy but with an additional constraint on the simulation policy. Specifically, the simulation policy is restricted to selecting only those features $i$ that are available in the current retrospective datapoint (i.e. the features $i$ s.t. $R_i=1$). As only available features (for which $X_{{(1)},i} = X_i$ holds) are sampled, one does not need to explicitly need to know $X_{(1)}$ during the sampling process. As in off-policy RL, one must, however, adjust for this blocking of the acquisition of non-available features.  We refer to this approach as "semi-offline" because the simulation policy is allowed to freely sample from the available features (the online part), but it is not permitted to acquire features that are unavailable (the offline part).



%

\subsection{The Semi-offline Sampling Policy}

Firstly, we construct the semi-offline sampling distribution $p'$ by defining a blocked policy that ensures that no unavailable features are sampled.  
\begin{definition}
\label{def_blocked_policy}
(Blocked Policy)
A policy $\pi'(A'^t|\underline{X}'^{t-1},\underline{A}'^{t-1}, R)$ is called a 'blocked policy' of the policy $\pi(A'^t|\underline{X}'^{t-1},\underline{A}'^{t-1})$  
if it satisfies the following conditions: 

\noindent 
1) Blocking of acquisitions of non-available features:
\begin{flalign*}
\text{if }  
& a'^t = i, i \leq d_x,\text{and }  r_i = 0,  
\quad \text{then } 
\pi'(
a'^t|
\underline{x}'^{t-1},
\underline{a}'^{t-1},
R = r) = 0
\nonumber
\quad \quad
&&
\forall  t,a'^t,r, \underline{x}'^{t-1}, \underline{a}'^{t-1}  
\end{flalign*}


\noindent 
2) No blocking of acquisitions of available features:
\begin{flalign*}
\text{if }  \hspace{11 pt} \quad \quad  \quad \quad \quad \quad \quad \quad
& a'^t  = i, r_i =1 \text{ (if $i\leq d_x$)},\text{and }  
\pi(
a'^t|
\underline{x}'^{t-1},
\underline{a}'^{t-1}) > 0, 
\nonumber
&&
\\
\text{then } \quad \quad \quad \quad \quad \quad \quad \quad
& \pi'(
a'^t|
\underline{x}'^{t-1},
\underline{a}'^{t-1}, 
R = r) > 0
\nonumber
&&
\\ 
&
\forall  t,a'^t,r, \underline{x}'^{t-1}, \underline{a}'^{t-1}  &&
\end{flalign*}
%
%
\end{definition}

\noindent
Condition 1 ensures that the acquisition of unavailable features is blocked, while condition 2 ensures that other desired a
actions are still performed. 

The definition of the blocked policy allows us to define the semi-offline RL sampling policy, denoted by $p'$ which replaces the desired AFA policy $\pi_\alpha$ that can't be sampled (due to unavailable counterfactual features) with a blocked policy: 
\begin{align}
\label{eq_def_p_prime}
p'(C', & A', X'|X_{(1)}, Y, R)  
= 
\prod_{t=0}^{T-1}
g(
X'^t
|
A'^{t},
X_{(1)}^t
)
\prod_{t=1}^T
\underbrace{
\pi'_{sim}(
A'^t|
\underline{X}'^{t-1},
\underline{A}'^{t-1}, 
R) 
}_{\text{actions under blocking}}
g(C'|
Y, 
\underline{X}'^{T-1}, \underline{A}'^T 
)
\end{align}

\noindent 
where we let $C'$, $X'$ and $A'$ denote the resulting "simulated" costs, observed features and actions. $\pi'_{sim}$ denotes a blocked simulation policy $\pi_{sim}$. If $\pi_{sim}$ is different from $\pi_\alpha$, this adds a conventional off-policy aspect to the sampling distribution. We require $\pi_{sim}$ to fulfill the standard off-policy positivity assumption: 

\vspace{10pt}
\noindent
\textit{Positivity assumption (off-policy RL): } 
\begin{flalign*}
\text{if }  \hspace{11 pt} \quad \quad \quad \quad \quad \quad
& p( 
\underline{X}^{t-1}_{(\pi_\alpha)} = \underline{x}^{t-1}, 
\underline{A}^{t-1}_{(\pi_\alpha)} = \underline{a}^{t-1}  
) \pi_\alpha(a^t|     
\underline{x}^{t-1} ,
\underline{a}^{t-1} )
> 0,
\nonumber
&&
\\
\text{then } \quad \quad \quad \quad \quad \quad
& p(
\underline{x}^{t-1}, 
\underline{a}^{t-1}
)
\pi_{sim}(a^t| 
\underline{x}^{t-1}, 
\underline{a}^{t-1})
>0 
\nonumber
&&
\\ 
&
\forall  t, a^t,
\underline{x}^{t-1}, \underline{a}^{t-1}  &&
\end{flalign*}
\noindent where $q(\underline{X}^{t-1}_{(\pi_\alpha)}, \underline{A}^{t-1}_{(\pi_\alpha)} )$ denotes the marginal counterfactual distribution of states and actions under $\pi_\alpha$. The positivity assumption states that 
$\pi_{sim}$ needs positive support for actions where $\pi_\alpha$ has positive support. 

The resulting semi-offline RL sampling distribution $p'$ ensures that $p'(C',X',A'|X_{(1)},Y, R) = p'(C',X',A'|X,Y, R)$ (i.e. conditional independence of $X_{(1)}$)  and thus the possibility to sample $p'$ for every datapoint $X,Y, R$. The semi-offline RL sampling policy can then be used to generate a new dataset $\mathcal{D}'$ based on which $J$ can be estimated. 
The causal graph showing the new semi-offline sampling distribution $p'$ is shown in Figure \ref{graph_semi_offline_RL}. 

\begin{figure}[ht]
\centering
\includegraphics[width=0.8\textwidth]
{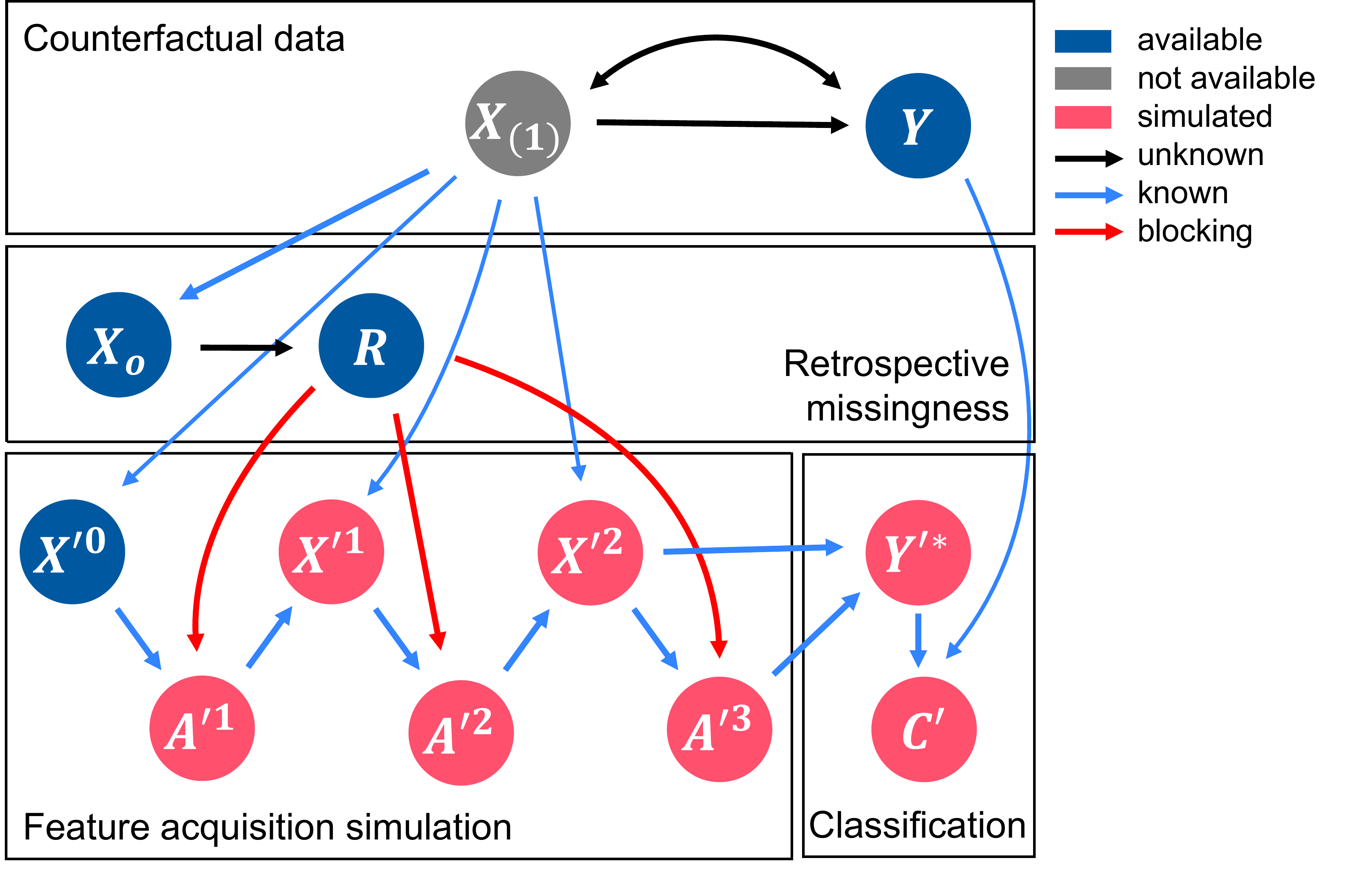}
\caption{ 
Causal graph depicting the semi-offline RL sampling distribution. The simulated actions $A'^t$ depend on the retrospective missingness indicator $R$ through a blocking operation that prevents the acquisition of non-available features. 
The following edges showing long-term dependencies were excluded from the graph for visual clarity:
$\underline{X}'^{t-1}, \underline{A}'^{t-1} \rightarrow  A'^{t}$ and $\underline{X}'^{T-1}, \underline{A}'^{T-1} \rightarrow  Y'^*$. }
\label{graph_semi_offline_RL}
\end{figure}

\subsection{Problem Reformulation}

Firstly, we show that one can still address the original AFAPE problem from the semi-offline RL viewpoint by providing the following theorem. 

\begin{theorem}
\label{theorem_problem_semi_offline_RL} 
(AFAPE problem reformulation under the semi-offline RL view).
The AFAPE problem of estimating $J$ (Eq. \ref{eq:AFAPE_objective} or Eq. \ref{eq:AFAPE_objective_miss})
is under the no direct effect (NDE), the no interference and the static feature assumption equivalent to estimating 
\begin{align}
\label{eq:AFAPE_objective_semi_offline_RL}
    J  
    = 
    \mathbb{E}_{p'}[C'_{(\pi_\alpha)}].
\end{align}
\end{theorem}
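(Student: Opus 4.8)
The plan is to reduce the statement to the missing-data decomposition already proved in Theorem~\ref{theorem_problem_missing}. That theorem gives $J=\sum_{X_{(1)},Y}\mathbb{E}[C_{(\pi_\alpha)}\mid X_{(1)},Y]\,p(X_{(1)},Y)$, in which the inner factor is the online-RL expectation and the outer factor $p(X_{(1)},Y)$ is left untouched. Since the outer missing-data factor is identical to what would appear after marginalizing $\mathbb{E}_{p'}$, it suffices to show that integrating the simulated cost $C'_{(\pi_\alpha)}$ under the semi-offline sampling distribution $p'$ (Eq.~\ref{eq_def_p_prime}) recovers that same inner online-RL expectation for each fixed $(X_{(1)},Y)$; marginalizing against $p(X_{(1)},Y)$ then closes the argument.

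The first concrete step is a term-by-term comparison of the $p'$-factorization in Eq.~\ref{eq_def_p_prime} with the known counterfactual factorization $q(\cdot\mid X_{(1)},Y)$ in Eq.~\ref{eq_identification_online_RL}. The feature-revelation kernels $g(X'^t\mid A'^t,X_{(1)}^t)$, the label-prediction kernel, and the cost kernel coincide in the two factorizations; the sole discrepancy is that $p'$ substitutes the blocked simulation policy $\pi'_{sim}(\cdot\mid\cdots,R)$ for $\pi_\alpha$. I would then use two structural assumptions to align what remains. Consistency (Eq.~\ref{eq_consistency}) guarantees that any action the blocked policy is permitted to take concerns an available feature, so $X'^t=X_{(1)}^t=X^t$ along every admissible trajectory; this is exactly what yields the stated conditional independence $p'(C',X',A'\mid X_{(1)},Y,R)=p'(\cdots\mid X,Y,R)$ and lets the process be driven by observed data alone. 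The static-feature assumption enters through $\pi_\alpha(A^t\mid\underline{X}^{t-1},\underline{A}^{t-1})=\pi_\alpha(A^t\mid\underline{X}^{t-1},R^{t-1})$ and the analogous reduction of $g$: because only the acquired \emph{set} $R^{t-1}$, not its order, matters, every feature set that $\pi_\alpha$ can reach is reachable by the blocked policy whenever those features are available, so no cost-relevant trajectory is structurally lost.

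The heart of the proof, and the step I expect to be hardest, is accounting for the blocking itself, since on patterns $R$ where $\pi_\alpha$ would request an unavailable feature the blocked policy is forced to deviate and the raw simulated cost no longer matches $C_{(\pi_\alpha)}$ in law. I would handle this with an explicit change of measure from the blocked, possibly off-policy distribution $p'$ to the $\pi_\alpha$-trajectory distribution, writing the trajectory likelihood ratio $\prod_t \pi_\alpha(A'^t\mid\cdots)/\pi'_{sim}(A'^t\mid\cdots,R)$: condition~2 of Definition~\ref{def_blocked_policy} together with the off-policy positivity assumption guarantees this ratio is well defined on the support of $\pi_\alpha$, while condition~1 forces it to zero on exactly the deviating trajectories, so those drop out automatically. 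Absorbing this ratio, together with the availability reweighting that compensates for the probability that a $\pi_\alpha$-consistent trajectory is blocked, into the definition of $C'_{(\pi_\alpha)}$ is what makes $\mathbb{E}_{p'}[C'_{(\pi_\alpha)}\mid X_{(1)},Y]$ telescope back to $\mathbb{E}[C_{(\pi_\alpha)}\mid X_{(1)},Y]$. I would stress that this correction uses only the self-imposed blocking and the true acquisition probabilities, so the identity itself needs no missing-data modelling beyond NDE, no interference, and static features; the MAR assumption (Eq.~\ref{eq_MAR}) of this section is required only later, to \emph{estimate} the correction. The delicate point to verify is that the change of measure stays integrable under the weaker semi-offline positivity---support only for the feature sets $\pi_\alpha$ actually visits, rather than the full complete-case positivity of Eq.~\ref{eq_positivity_missing_data}---which is precisely the relaxation the semi-offline view is designed to buy.
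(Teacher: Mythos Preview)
Your overall reduction to Theorem~\ref{theorem_problem_missing} is right, and so is the target equality $\mathbb{E}_{p'}[C'_{(\pi_\alpha)}\mid X_{(1)},Y]=\mathbb{E}[C_{(\pi_\alpha)}\mid X_{(1)},Y]$. The gap is in what you take $C'_{(\pi_\alpha)}$ to mean. You treat it as an observable under the blocked simulation that must be tilted back to $\pi_\alpha$ by a likelihood ratio, and you even propose to absorb the trajectory weights and an ``availability reweighting'' into its definition. In the paper, however, $C'_{(\pi_\alpha)}$ is not a reweighted version of $C'$; it is the \emph{potential outcome} of $C'$ had $\pi_\alpha$ been used in place of $\pi'_{sim}$. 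The statement $J=\mathbb{E}_{p'}[C'_{(\pi_\alpha)}]$ is therefore a claim that two counterfactual quantities coincide, not an importance-sampling identity.

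With that reading, the proof is almost immediate and needs no change of measure. Applying the G-formula to the semi-offline graph (Figure~\ref{graph_semi_offline_RL}) under the intervention that sets the action kernels to $\pi_\alpha$ replaces each $\pi'_{sim}(A'^t\mid\underline{X}'^{t-1},\underline{A}'^{t-1},R)$ by $\pi_\alpha(A'^t\mid\underline{X}'^{t-1},\underline{A}'^{t-1})$; since $R$ entered the trajectory law \emph{only} through those action kernels, the $R$-dependence disappears entirely, and what remains is term-for-term the factorization of Eq.~\ref{eq_identification_online_RL}. Your ``hardest step''---handling patterns $R$ on which blocking forces a deviation---never arises, because the intervention $\pi_\alpha$ overwrites the blocking before any trajectory is drawn. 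The likelihood-ratio and availability-reweighting machinery you describe is precisely what the paper uses later to \emph{estimate} $J$ from samples of $p'$ (the IPW estimator in Section~\ref{sec_semi_offline_RL_estimation}); it does not belong in the proof of Theorem~\ref{theorem_problem_semi_offline_RL}, which is a purely distributional equivalence and requires none of the positivity or integrability checks you flag.
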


\noindent 
$C'_{(\pi_\alpha)}$ denotes the potential outcome of $C'$, had, instead of the blocked simulation policy $\pi'_{sim}$, the AFA policy $\pi_\alpha$ been employed. 

\vspace{10pt}
\begin{proof}
We begin from Eq. \ref{eq:AFAPE_objective_miss} to show the equivalence: 
\begin{align*}
    J & 
    =
     \sum_{X_{(1)}, Y} \mathbb{E}
    \left[
    C_{(\pi_{\alpha})}|X_{(1)},Y
    \right]
    p(X_{(1)}, Y)
     \overset{*_1}{=} 
    \sum_{X_{(1)}, Y} \mathbb{E}_{p'}
    \left[
    C'_{(\pi_\alpha)}|X_{(1)},Y
    \right]
    p(X_{(1)}, Y)
    = 
       \mathbb{E}_{p'}[C'_{(\pi_\alpha)}]
\end{align*}
\noindent 
where $*1)$ uses the following equivalence: 
\begin{align*}
 \mathbb{E}_{p'} [C'_{(\pi_\alpha)}| X_{(1)},Y]
&\overset{*_{1.1}}{=} 
\sum_{C', X', Y' } 
C'
\prod_{t=0}^{T-1}
g(X'^t|A'^t, X_{(1)})
\prod_{t=1}^T
\pi_{\alpha}(
A'^t|
\underline{X}'^{t-1},
\underline{A}'^{t-1})  
g(C'|
Y, 
\underline{X}'^{T-1}, \underline{A}'^T 
)
\\ & \overset{*_{1.2}}{=}  \mathbb{E}
    \left[
    C_{(\pi_{\alpha})}|X_{(1)},Y
    \right]
\end{align*}
where $*1.1)$ holds by the G-formula, and $*1.2)$ is the same factorization as in Eq.  \ref{eq_identification_online_RL}. 
\end{proof}

\subsection{Identification}

We make the following positivity assumption to allow identification: 

\vspace{10pt}
\noindent 
\textit{Positivity assumption (semi-offline RL): } 
\begin{flalign}
\label{eq_positivity_semi_offline_RL}
\text{if }  \hspace{11 pt} \quad \quad \quad \quad \quad \quad
& 
\prod_{t=1}^{T}  
\pi_{\alpha}(a'^t|             
\underline{x}'^{t-1},
\underline{a}'^{t-1})  
\prod_{t=0}^{T-1}
p'(x'^t|   
\underline{x}'^{t-1},        
\underline{a}'^{t},
x_o
) 
p(x_o)  
>0
\nonumber
&&
\\
\text{then } \quad \quad \quad \quad \quad \quad
& 
\pi_\beta( 
r \geq r'|
x_o)
>0
\nonumber
&&
\\ 
&
\forall 
\underline{x}'^{T-1},
\underline{a}'^T,
x_o,
r
\end{flalign}

\noindent 
where $r'= r'^T$ denotes the set notation of $\underline{a}'^T$, i.e. it indicates which features where acquired until step $T$. Furthermore, we let $r \geq r'$ be a shorthand notation for the element-wise comparison $r_i \geq r'_i$ $\forall i$. 
Note that the positivity assumption is independent of the simulation process as one can optionally rewrite: 
$p'(X'^t|   
\underline{X}'^{t-1},        
\underline{A}'^{t},
X_o
)  = p(X_{(1), a'^t}|   
X_{(1), r'^{t-1}}, 
x_o
)$
where we let $X_{(1), a'^t}$ and $X_{(1), r'^{t-1}}$ denote the counterfactual feature values $X_{(1)}$ at the indices of the current acquisition $a'^t$ and at the indices of all previous acquisitions $r'^{t-1}$, respectively. The requirement for this positivity assumption is shown in the proofs of the subsequently following identificiation theorems. 

The positivity assumption is much weaker than the positivity assumption from the missing data view. It only requires for all desired acquisitions $\underline{a}'^T$, that there is a datapoint, with at least as many features, that has positive support. 

Given the reformulated positivity assumption, one can perform identification as stated by the following theorem.   

\begin{theorem}
\label{theorem_identification_semi_offline_RL} 
(Identification of $J$ for the semi-offline RL view).
The reformulated AFAPE problem of estimating $J$ under the semi-offline RL view  (Eq. \ref{eq:AFAPE_objective_semi_offline_RL}) is under the no direct effect (NDE) assumption, the MAR assumption from Eq. \ref{eq_MAR}, the consistency assumption, the no interference assumption, the static feature assumption and the positivity assumption from Eq. \ref{eq_positivity_semi_offline_RL} identified by 
\begin{align}
\label{eq_identificiation_semi_offline_RL_1}
    J  
     = 
    \mathbb{E}_{p'}[C'_{(\pi_\alpha)}]
     = 
    \sum_{C',Y, X',A',R,X_{o}} 
    C' 
    g(C'|           
\underline{X}'^{T-1},            
\underline{A}'^{T},         
Y)
p'(Y|                   
\underline{X}'^{T-1}, 
\underline{A}'^{T},
X_{o})
q'(
\underline{X}'^{T-1},   
\underline{A}'^{T},        R,
X_{o}           
) 
\end{align}
\noindent
with the distribution
\begin{align} 
\label{eq_identificiation_semi_offline_RL_2}
q'(\underline{X}'^{T-1},            
\underline{A}'^{T},         R,
X_{o})=
\pi_\beta(R| R \geq R', X_o)
\prod_{t=1}^{T}  
\pi_{\alpha}(A'^t|             
\underline{X}'^{t-1},             
\underline{A}'^{t-1}) 
\prod_{t=0}^{T-1}
p'(X'^t|   
\underline{X}'^{t-1},        
A'^{t},
X_{o})
p(X_{o}) 
\end{align}
\end{theorem}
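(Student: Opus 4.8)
The plan is to start from the reformulation $J = \mathbb{E}_{p'}[C'_{(\pi_\alpha)}]$ granted by Theorem~\ref{theorem_problem_semi_offline_RL} and to turn this counterfactual expectation — in which the \emph{unblocked} policy $\pi_\alpha$ is run in the feature-revealing environment — into the stated functional of the observed data. First I would condition on the always-observed set $X_o$, writing $J = \sum_{X_o} \mathbb{E}_{p'}[C'_{(\pi_\alpha)} \mid X_o]\, p(X_o)$, and expand the inner counterfactual expectation by the G-formula over a full trajectory $(\underline{X}'^{T-1}, \underline{A}'^T, Y, C')$ generated under $\pi_\alpha$. This expansion produces the acquisition factors $\prod_{t=1}^T \pi_\alpha(A'^t \mid \underline{X}'^{t-1}, \underline{A}'^{t-1})$, the deterministic cost factor $g(C'\mid \ldots, Y)$, a label conditional, and a product of feature-revelation factors. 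Because of the static-feature and NDE assumptions, each revelation $g(X'^t \mid A'^t, X_{(1)})$ is deterministic and simply copies the counterfactual value $X_{(1),a'^t}$; consequently the whole trajectory depends on $X_{(1)}$ only through the acquired coordinates indexed by $r' = r'^T$, so only the joint law of $(X_{(1),r'}, Y)$ given $X_o$ is required, not that of the full $X_{(1)}$.

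The heart of the argument is the identification of those feature/label conditionals from incomplete retrospective data. Here I would invoke the MAR assumption (Eq.~\ref{eq_MAR}) together with $Y \not\rightarrow R$, which jointly give $R \indep (X_{(1)}, Y) \mid X_o$. Since the event $\{R \geq r'\}$ is a function of $R$ alone, this independence yields $p(X_{(1)}, Y \mid R \geq r', X_o) = p(X_{(1)}, Y \mid X_o)$; restricting to the subpopulation with $R \geq r'$ therefore introduces no selection bias in the feature distribution given $X_o$. On exactly that subpopulation the consistency relation (Eq.~\ref{eq_consistency}) makes every acquired coordinate observed, $X_{(1),i} = X_i$ for $r'_i = 1$, so each revelation conditional equals the observable-data quantity $p'(X'^t \mid \underline{X}'^{t-1}, A'^t, X_o) = p(X_{(1),a'^t} \mid X_{(1),r'^{t-1}}, X_o)$ and the label factor equals $p'(Y \mid \underline{X}'^{T-1}, \underline{A}'^T, X_o)$. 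The role of the positivity assumption (Eq.~\ref{eq_positivity_semi_offline_RL}) is precisely to guarantee that for every trajectory $\pi_\alpha$ can generate with positive probability there is positive mass on $\{R \geq r', X_o\}$, so each of these conditionals is well defined.

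Finally I would reassemble the sum and make the dependence on the missingness pattern explicit. Re-introducing $R$ is free because $\sum_{r \,:\, r \geq r'} \pi_\beta(R = r \mid R \geq r', X_o) = 1$; inserting this factor lets me group the acquisition product, the revelation conditionals, $p(X_o)$, and $\pi_\beta(R \mid R \geq R', X_o)$ into exactly the distribution $q'$ of Eq.~\ref{eq_identificiation_semi_offline_RL_2}, leaving $g(C'\mid \ldots, Y)$ and $p'(Y\mid\ldots,X_o)$ outside, which reproduces Eq.~\ref{eq_identificiation_semi_offline_RL_1}. I expect the main obstacle to be the middle step: carefully justifying that conditioning on $\{R \geq r'\}$ does not distort the counterfactual feature law — i.e. combining MAR, the $Y \not\rightarrow R$ assumption, and consistency so that the unobserved coordinates of $X_{(1)}$ (which $\pi_\alpha$ never requests along a trajectory ending in $r'$) can be marginalised away without requiring complete cases — and in pinning down that the positivity condition in Eq.~\ref{eq_positivity_semi_offline_RL}, rather than the much stronger complete-case positivity of the missing-data view, is the exact support requirement that makes all invoked conditionals non-degenerate.
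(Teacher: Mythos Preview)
Your proposal is correct and arrives at the claimed identification formula, but the route differs from the paper's. You do a \emph{one-shot} argument: expand the full trajectory under $\pi_\alpha$ via the G-formula, observe that it depends on $X_{(1)}$ only through the final acquired index set $r'=r'^T$, then use the global MAR independence $R \indep (X_{(1)},Y)\mid X_o$ to condition on $\{R\geq r'\}$ once, invoke consistency to replace counterfactual with observed coordinates, and finally re-insert the nuisance factor $\pi_\beta(R\mid R\geq R',X_o)$ by summing to one. The paper instead proceeds \emph{sequentially}: at each time step $t$ it introduces a fictitious intervention $\pi_{id}$ on $R$, chooses $\pi_{id}(r\mid \underline{a}'^{t+1},X_o)=\pi_\beta(r\mid r\geq r'^{t+1},X_o)$, and applies exchangeability and consistency to peel off one action at a time. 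The sequential argument is longer but delivers the semi-offline Bellman equation (Theorem~\ref{theorem_Bellman_equation}) as an immediate byproduct and makes it transparent that the positivity requirement $\pi_\beta(r\geq r'^{t+1}\mid x_o)>0$ tightens monotonically in $t$, becoming exactly Eq.~\ref{eq_positivity_semi_offline_RL} at $t=T-1$. Your global argument is more elementary and perfectly sufficient for Theorem~\ref{theorem_identification_semi_offline_RL} on its own; just be aware that, as written, it does not directly hand you the recursive structure needed for the DM and DRL estimators later in the paper.
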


\noindent 
We can also extend the Bellman Equation from RL to the semi-offline RL setting:

\begin{theorem}
\label{theorem_Bellman_equation} 
(Bellman equation for semi-offline RL). 
The semi-offline RL view admits under the no direct effect (NDE) assumption, the MAR assumption from Eq. \ref{eq_MAR}, the consistency assumption, the no interference assumption, the static feature assumption, and the positivity assumption from Eq. 
\ref{eq_positivity_semi_offline_RL} the following semi-offline RL version of the Bellman equation:   
\begin{align}
\label{eq_bellman_1}
Q_{\textit{Semi}}(\underline{X}'^{t-1}, &
\underline{A}'^{t},            
X_o ) 
=
\sum_{\mathclap{X'^t}}                          V_{\textit{Semi}}
(              \underline{X}'^{t}, \underline{A}'^{t}, 
X_o) 
p(X'^t|                   
\underline{X}'^{t-1},          
A'^{t}, 
X_o)
\\
 V_{\textit{Semi}}
(\underline{X}'^{t}, 
\underline{A}'^{t},&
X_o)  
=
\sum_{A'^{t+1}}  
Q_{\textit{Semi}}(\underline{X}'^{t}, \underline{A}'^{t+1},      X_o  )
\pi_{\alpha}(A'^{t+1}|
\underline{X}'^{t},
\underline{A}'^{t})
\label{eq_bellman_2}
\end{align}
with semi-offline RL versions of the state-action value function $Q_{\textit{Semi}}$ and state value function $V_{\textit{Semi}}$:
\begin{align*}
 Q_{\textit{Semi}}^t 
 & \equiv 
 Q_{\textit{Semi}}(\underline{X}'^{t-1},
\underline{A}'^{t},
X_o) 
\equiv  
\mathbb{E}_{p'}[C'_{(\overline{\pi}^{t+1}_\alpha)}|
\underline{X}'^{t-1},
\underline{A}'^{t},
X_o]  
\\
V_{\textit{Semi}}^t
& \equiv 
V_{\textit{Semi}}(\underline{X}'^t,
\underline{A}'^{t},
X_o) 
\equiv 
\mathbb{E}_{p'}[C'_{(\overline{\pi}^{t+1}_\alpha)}|
\underline{X}'^t,
\underline{A}'^{t},
X_o]
\end{align*}
where $C'_{(\overline{\pi}^{t+1}_\alpha)}$ denotes the potential outcome of $C'$ under interventions from time step $t+1$ onwards. 
\end{theorem}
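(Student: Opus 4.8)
The plan is to establish the two recursions in Eq. \ref{eq_bellman_1} and Eq. \ref{eq_bellman_2} separately, treating each as a single-step decomposition of the conditional expectations that define $Q_{\textit{Semi}}^t$ and $V_{\textit{Semi}}^t$; no induction over $t$ is needed to verify the recursions themselves. Throughout I would compute the expectations $\mathbb{E}_{p'}[\,\cdot\,]$ under the intervention $\overline{\pi}_\alpha$ against the interventional joint law whose factorization is already given (up to the selection factor) by $q'$ in Eq. \ref{eq_identificiation_semi_offline_RL_2}, as licensed by Theorem \ref{theorem_identification_semi_offline_RL}. The structural observation driving everything is that $q'$ is a product of exactly one feature-revelation factor $p'(X'^t \mid \underline{X}'^{t-1}, A'^t, X_o)$ and one action factor $\pi_\alpha(A'^t \mid \underline{X}'^{t-1}, \underline{A}'^{t-1})$ per step, so the two Bellman identities amount to peeling off these two factor types one at a time via the tower property.

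For Eq. \ref{eq_bellman_1} (the $Q \to V$ step), I would start from $Q_{\textit{Semi}}^t = \mathbb{E}_{p'}[C'_{(\overline{\pi}^{t+1}_\alpha)} \mid \underline{X}'^{t-1}, \underline{A}'^t, X_o]$ and condition additionally on the revealed feature $X'^t$. Since the intervention $\overline{\pi}^{t+1}_\alpha$ acts only from step $t+1$ onward, the revelation of $X'^t$ is governed by the natural transition factor, so the conditioning kernel is $p'(X'^t \mid \underline{X}'^{t-1}, A'^t, X_o)$ and the residual inner expectation $\mathbb{E}_{p'}[C'_{(\overline{\pi}^{t+1}_\alpha)} \mid \underline{X}'^t, \underline{A}'^t, X_o]$ is by definition $V_{\textit{Semi}}^t$. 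It then remains to rewrite the kernel as the claimed $p(X'^t \mid \underline{X}'^{t-1}, A'^t, X_o)$: by consistency (Eq. \ref{eq_consistency}) only available features are ever sampled, so $X'^t = X_{(1),A'^t}$, and the MAR assumption (Eq. \ref{eq_MAR}) ensures that conditioning on the availability pattern given $X_o$ leaves the law of $X_{(1)}$ unchanged, so the realized transition coincides with the true counterfactual conditional $p(X_{(1),A'^t} \mid X_{(1),R'^{t-1}}, X_o)$, exactly as noted beneath Eq. \ref{eq_positivity_semi_offline_RL}.

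For Eq. \ref{eq_bellman_2} (the $V \to Q$ step), I would start from $V_{\textit{Semi}}^t = \mathbb{E}_{p'}[C'_{(\overline{\pi}^{t+1}_\alpha)} \mid \underline{X}'^t, \underline{A}'^t, X_o]$ and condition on the next action $A'^{t+1}$. Here the intervention is active, so $A'^{t+1}$ is drawn from the target policy and the conditioning kernel is $\pi_\alpha(A'^{t+1} \mid \underline{X}'^t, \underline{A}'^t)$. Fixing $A'^{t+1}$ to a value supplied by $\pi_\alpha$ consumes precisely one step of the intervention, so $C'_{(\overline{\pi}^{t+1}_\alpha)}$ conditioned on $\underline{A}'^{t+1}$ has the same law as $C'_{(\overline{\pi}^{t+2}_\alpha)}$, whence the residual expectation $\mathbb{E}_{p'}[C'_{(\overline{\pi}^{t+2}_\alpha)} \mid \underline{X}'^t, \underline{A}'^{t+1}, X_o]$ is exactly $Q_{\textit{Semi}}^{t+1}$, yielding Eq. \ref{eq_bellman_2}.

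I expect the main obstacle to be the kernel-identification step inside Eq. \ref{eq_bellman_1}: making rigorous that the feature transition realized under the blocked and selected sampling law $p'$ agrees with the true counterfactual conditional. This is where the MAR assumption does the real work, since it is what strips the dependence of the $X_{(1)}$-law on the availability pattern $R$, and where the positivity assumption of Eq. \ref{eq_positivity_semi_offline_RL} is required so that every conditioning event has positive probability and the kernels are well defined. I would also note explicitly that the selection factor $\pi_\beta(R \mid R \geq R', X_o)$ appearing in $q'$ enters neither recursion, because once a realized trajectory is fixed the cost $C'$ no longer depends on $R$, so this factor integrates out and does not disturb either the $Q \to V$ or the $V \to Q$ step.
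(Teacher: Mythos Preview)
Your plan locates the difficulty in the wrong place. You treat the $V\to Q$ step (Eq.~\ref{eq_bellman_2}) as an immediate tower-property peel-off and reserve the hard work for the feature-transition kernel in Eq.~\ref{eq_bellman_1}. In the paper's proof it is exactly the opposite: Eq.~\ref{eq_bellman_1} really is just one application of total expectation over $X'^t$, whereas Eq.~\ref{eq_bellman_2} is the nontrivial identification step. The gap in your argument is the sentence ``Fixing $A'^{t+1}$ to a value supplied by $\pi_\alpha$ consumes precisely one step of the intervention, so $C'_{(\overline{\pi}^{t+1}_\alpha)}$ conditioned on $\underline{A}'^{t+1}$ has the same law as $C'_{(\overline{\pi}^{t+2}_\alpha)}$, whence the residual expectation \ldots\ is exactly $Q_{\textit{Semi}}^{t+1}$.'' This silently replaces the \emph{interventional} quantity $\mathbb{E}_{p'}\bigl[C'_{(\overline{\pi}^{t+2}_\alpha,\,a'^{t+1})}\mid \underline{X}'^t,\underline{A}'^t,X_o\bigr]$ by the \emph{observational} one $\mathbb{E}_{p'}\bigl[C'_{(\overline{\pi}^{t+2}_\alpha)}\mid \underline{X}'^t,\underline{A}'^t,A'^{t+1}=a'^{t+1},X_o\bigr]$, which is how $Q_{\textit{Semi}}^{t+1}$ is defined. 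Under $p'$ the observational $A'^{t+1}$ is drawn from the blocked policy $\pi'_{sim}(\cdot\mid \underline{X}'^t,\underline{A}'^t,R)$, so it carries information about $R$; moving from intervention to conditioning on $A'^{t+1}$ is therefore a genuine causal-identification step, not a tautology.

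You also cannot simply invoke Theorem~\ref{theorem_identification_semi_offline_RL} to work directly under $q'$: that theorem identifies the unconditional $J$, not the conditional expectations $Q_{\textit{Semi}}^t$ and $V_{\textit{Semi}}^t$, which are $p'$-conditional expectations of potential outcomes with the history generated under $\pi'_{sim}$. The paper's route is to introduce an auxiliary intervention $\pi_{id}$ on $R$, choose $\pi_{id}(r\mid\cdot)=\pi_\beta(r\mid r\geq r'^{t+1},X_o)$ (this is where positivity Eq.~\ref{eq_positivity_semi_offline_RL} enters, to make the density valid), and then chain MAR-based exchangeability $C'_{(\overline{\pi}^{t+2}_\alpha,a'^{t+1},r)}\perp\!\!\!\perp R\mid \underline{X}'^t,\underline{A}'^t,X_o$, consistency, exchangeability $C'_{(\overline{\pi}^{t+2}_\alpha,a'^{t+1})}\perp\!\!\!\perp A'^{t+1}\mid \underline{X}'^t,\underline{A}'^t,X_o,R$, consistency again, and finally $C'_{(\overline{\pi}^{t+2}_\alpha)}\perp\!\!\!\perp R\mid \underline{X}'^t,\underline{A}'^{t+1},X_o$ to drop $R$ from the conditioning set. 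That chain is precisely what your $V\to Q$ paragraph is missing; your closing remark that the $\pi_\beta$ selection factor ``integrates out'' is about $q'$, not about $p'$, and does not substitute for it.
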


\noindent 
Appendix \ref{app_theorem_identification} contains the proofs for Theorems \ref{theorem_identification_semi_offline_RL} 
and \ref{theorem_Bellman_equation} and the derivation of the factorization of the "observational" (i.e. simulated) distribution given in the following remark:

\begin{remark}
\label{remark_semi_offline_RL_observational}
The observational data under the simulations given by $p'$ factorizes as:
\begin{align} 
\label{eq_fact_obs_p_prime}
p'(          
\underline{X}'^{T-1},            
\underline{A}'^{T},          
R, 
X_o) 
 = 
\prod_{t=1}^{T}  
\pi_{sim}'(
A'^{t} | 
\underline{X}'^{t-1},
\underline{A}'^{t-1}, 
R)
\prod_{t=0}^{T-1}
p(X'^t|                 
\underline{X}'^{t-1},      
A'^{t},
X_o) 
\pi_{\beta}(
R | 
X_o)
p(X_o)
\end{align}
\end{remark}

\subsection{Estimation}
\label{sec_semi_offline_RL_estimation}

The semi-offline RL view leads to the following new estimators: 

\vspace{10pt}
\noindent 
\textit{1) Inverse probability weighting (IPW):} 

\noindent 
The semi-offline RL IPW estimator has the following form: 
\begin{align} 
\label{eq:semi-off-ipw_pi_id}
    J_{\textit{IPW-Semi}}
    & = 
    \hat{\mathbb{E}}_{n'}
    \left[ \rho_{
    \textit{Semi}}^T
    \text{ } 
    C'\right], 
\end{align} 
where $\hat{\mathbb{E}}_{n'}[.]$ denotes the empirical average over the dataset $\mathcal{D}'$ and 
\begin{align} 
\label{eq_ipw_semi_offline_RL_weights_2}
    \rho^t_{\textit{Semi}} & = 
    \prod_{\tau=1}^t
    \frac{
    \pi_\alpha(A'^{\tau}| \underline{X}'^{\tau-1}, \underline{A}'^{\tau-1}) 
    }
    {
    \pi'_{sim}(
    A'^{\tau}| \underline{X}'^{\tau-1}, \underline{A}'^{\tau-1},
    R
    )
    }
    \frac{
    \mathbb{I}(R \geq R'^t)
    }
    {
    \hat{\pi}_\beta(
    R \geq R'^t| 
    X_o
    ).
    }
\end{align}

\noindent 
The IPW estimator $J_{\textit{IPW-Semi}}$ demonstrates the large benefits of the semi-offline RL view over the missing data view. Its second fraction shows that not only datapoints where $R = \vec{1}$ are used (i.e. have positive weight), as in the missing data view, but all datapoints where $R \geq R'$ can be used.

\vspace{10pt}
\noindent
\textit{2) Direct method (DM):} 

\noindent 
The semi-offline RL DM estimator has the following form: 
\begin{align} \label{eq:semi-off-dm}
    J_{\textit{DM-Semi}} = 
    \hat{\mathbb{E}}_{n'}[V_{\textit{Semi}}^0]
\end{align}
This estimator requires learning $Q_{\textit{Semi}}$ using the semi-offline version of the Bellman equation (Eqs. \ref{eq_bellman_1} and \ref{eq_bellman_2}). $V_{\textit{Semi}}$ can then be inferred as $V_{\textit{Semi}}^t = \mathbb{E}_{\pi_{\alpha}}[Q_{\textit{Semi}}^{t+1}]$.

\vspace{10pt}
\noindent 
\textit{3) Double reinforcement learning (DRL):} 

\noindent
The semi-offline DRL estimator has the following form: 
\begin{align}
    J_{\textit{DRL-Semi}} = 
    \hat{\mathbb{E}}_{n'}
    \left[\rho_\textit{Semi}^T C' + 
 \sum_{t=1}^{T} 
 \left(- \rho_{\textit{Semi}}^{t}  
 Q_\textit{Semi}^t
 +
 \rho_{\textit{Semi}}^{t-1}  
 V_\textit{Semi}^{t-1}
 \right)
 \right].
\end{align}

\noindent 
Similar to the DLR estimator from offline RL \cite{kallus_double_2020}, this approach combines the other two estimators  (Eqs. \ref{eq:semi-off-ipw_pi_id} and \ref{eq:semi-off-dm}). 

Consistency properties of the new estimators are given by the following theorems. 

\begin{theorem}
\label{theorem_consistency_IPW} (Consistency of $J_{\textit{IPW-Semi}}$). 
The estimator $J_{\textit{IPW-Semi}}$ is consistent if the propensity score model $\hat{\pi}_\beta$ is correctly specified.
\end{theorem}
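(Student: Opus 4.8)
The plan is to reduce the consistency claim to a single population-level identity, namely $\mathbb{E}_{p'}[\rho_{\textit{Semi}}^T \, C'] = J$, and then transfer it to the empirical estimator via the law of large numbers. First I would note that under correct specification the fitted propensity score converges in probability to the truth, $\hat{\pi}_\beta \to \pi_\beta$, and that the semi-offline positivity assumption (Eq. \ref{eq_positivity_semi_offline_RL}) keeps the denominators $\pi_\beta(R \geq R'^t \mid X_o)$ bounded away from zero on the support of $p'$. Hence the weight $\rho_{\textit{Semi}}^T$ of Eq. \ref{eq_ipw_semi_offline_RL_weights_2} is a continuous, bounded functional of $\hat{\pi}_\beta$, so $J_{\textit{IPW-Semi}} = \hat{\mathbb{E}}_{n'}[\rho_{\textit{Semi}}^T C']$ converges in probability to $\mathbb{E}_{p'}[\rho_{\textit{Semi}}^T C']$ evaluated at the true $\pi_\beta$ (a standard Slutsky / uniform-LLN argument). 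It then remains only to verify the population identity.

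To establish $\mathbb{E}_{p'}[\rho_{\textit{Semi}}^T C'] = J$, I would expand the expectation using the factorization of the simulated distribution $p'$ given in Remark \ref{remark_semi_offline_RL_observational} (Eq. \ref{eq_fact_obs_p_prime}), together with the label and cost factors $g(C' \mid \underline{X}'^{T-1}, \underline{A}'^T, Y)$ and $p'(Y \mid \cdot)$. Multiplying this density by $\rho_{\textit{Semi}}^T$ produces two cancellations. The off-policy part, $\prod_{\tau=1}^T \pi_\alpha / \pi'_{sim}$, cancels the simulation-policy factors $\prod_{t=1}^T \pi'_{sim}(A'^t \mid \cdot, R)$ in $p'$ and replaces them with the target factors $\prod_{t=1}^T \pi_\alpha(A'^t \mid \cdot)$; this replacement is legitimate precisely because condition 2 of the blocked policy (Definition \ref{def_blocked_policy}) and the off-policy positivity assumption guarantee $\pi'_{sim} > 0$ wherever $\pi_\alpha > 0$ on the $p'$-support.

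The missingness part requires the more delicate bookkeeping. Since the blocked policy never acquires an unavailable feature, every trajectory with positive $p'$-mass satisfies $R'^T \leq R$ componentwise, so $\mathbb{I}(R \geq R'^T) = 1$ almost surely and the real work is done by the normalization. I would combine the retained $R$-marginal $\pi_\beta(R \mid X_o)$ from $p'$ with the correction $\mathbb{I}(R \geq R'^T)/\pi_\beta(R \geq R'^T \mid X_o)$ and recognize the product as the conditional missingness law $\pi_\beta(R \mid R \geq R'^T, X_o)$, which is exactly the factor appearing in the target density $q'$ of Eq. \ref{eq_identificiation_semi_offline_RL_2}. After these two substitutions the reweighted integrand equals $C' \, g(C' \mid \cdot) \, p'(Y \mid \cdot) \, q'(\underline{X}'^{T-1}, \underline{A}'^T, R, X_o)$, so summing over all variables reproduces the identification formula of Theorem \ref{theorem_identification_semi_offline_RL} (Eq. \ref{eq_identificiation_semi_offline_RL_1}) and yields $J$, completing the argument.

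The step I expect to be the main obstacle is this missingness bookkeeping: one must carefully track that the indicator and the normalization together convert the marginal $\pi_\beta(R \mid X_o)$ into the conditional $\pi_\beta(R \mid R \geq R'^T, X_o)$, and invoke the semi-offline positivity assumption (Eq. \ref{eq_positivity_semi_offline_RL}) to ensure that every $\pi_\alpha$-reachable acquisition pattern $R'$ has $\pi_\beta(R \geq R' \mid X_o) > 0$ — without this, the reweighting could not recover trajectories that are feasible under $\pi_\alpha$ but absent from the simulated data, and the identity would fail. A secondary technical point is making the plug-in asymptotics of the first paragraph rigorous, i.e. ensuring the estimated weights inherit convergence from $\hat{\pi}_\beta$ uniformly enough to pass to the limit.
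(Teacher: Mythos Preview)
Your proposal is correct and follows essentially the same route as the paper: the paper's proof is a one-liner invoking the importance-sampling identity $\mathbb{E}_{q'}[C']=\mathbb{E}_{p'}\bigl[\tfrac{q'}{p'}C'\bigr]$ together with the explicit factorizations of $q'$ (Eq.~\ref{eq_identificiation_semi_offline_RL_2}) and $p'$ (Eq.~\ref{eq_fact_obs_p_prime}), and your argument is precisely the spelled-out computation of that ratio, showing how the $\pi'_{sim}$ factors cancel against the off-policy part and how the $\pi_\beta(R\mid X_o)$ factor combines with the indicator and normalizer to yield $\pi_\beta(R\mid R\geq R'^T,X_o)$. Your added asymptotic layer (Slutsky / uniform LLN for the plug-in $\hat\pi_\beta$) is extra rigor the paper simply omits.
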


\begin{proof}
The consistency of the IPW estimator follows from the standard inverse probability weighting approach $\mathbb{E}_{q'}[C']=\mathbb{E}_{p'}[\frac{q'}{p'}C']$ and the use of the factorizations for $q'$ and $p'$ from Eqs. \ref{eq_identificiation_semi_offline_RL_2} (Theorem \ref{theorem_identification_semi_offline_RL}) and \ref{eq_fact_obs_p_prime} (Remark \ref{remark_semi_offline_RL_observational}).
\end{proof}

\begin{theorem}
\label{theorem_consistency_DM} (Consistency of $J_{\textit{DM-Semi}}$). 
The estimator $J_{\textit{DM-Semi}}$ is consistent if the Q-function $Q_\textit{Semi}$ is correctly specified.
\end{theorem}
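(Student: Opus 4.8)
The plan is to reduce the consistency claim to a single population-level identity, $J = \mathbb{E}_{p'}[V_{\textit{Semi}}^0]$, and then close the argument with a law-of-large-numbers step. Since $J_{\textit{DM-Semi}} = \hat{\mathbb{E}}_{n'}[V_{\textit{Semi}}^0]$ is nothing more than an empirical average of the fitted initial value function, once I show that the \emph{true} $V_{\textit{Semi}}^0$ integrates under $p'$ to exactly $J$, consistency follows whenever a correctly specified $Q_{\textit{Semi}}$ forces the fitted $V_{\textit{Semi}}^0$ to its true value in the large-sample limit.

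First I would establish the population identity. By the definition of the value function in Theorem \ref{theorem_Bellman_equation}, $V_{\textit{Semi}}^0 = \mathbb{E}_{p'}[C'_{(\overline{\pi}^1_\alpha)} \mid \underline{X}'^0, \underline{A}'^0, X_o]$. At $t=0$ no acquisition action has yet been taken, so intervening with $\pi_\alpha$ from step $1$ onward coincides with running $\pi_\alpha$ over the whole process, i.e. $C'_{(\overline{\pi}^1_\alpha)} = C'_{(\pi_\alpha)}$, and the conditioning set collapses to the initial state $(X^0, X_o)$. The tower property then yields
\begin{align*}
\mathbb{E}_{p'}[V_{\textit{Semi}}^0] = \mathbb{E}_{p'}\big[\mathbb{E}_{p'}[C'_{(\pi_\alpha)} \mid X^0, X_o]\big] = \mathbb{E}_{p'}[C'_{(\pi_\alpha)}] = J,
\end{align*}
where the final equality is the reformulation of Theorem \ref{theorem_problem_semi_offline_RL}. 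As a cross-check that the object returned by the Bellman solver is the correct one, I would also unroll the two recursions (Eqs. \ref{eq_bellman_1} and \ref{eq_bellman_2}) telescopically from $t=0$ to $t=T$, substituting the factorization of $q'$ from Theorem \ref{theorem_identification_semi_offline_RL}, to confirm directly that $\mathbb{E}_{p'}[V_{\textit{Semi}}^0]$ reproduces the identification formula in Eq. \ref{eq_identificiation_semi_offline_RL_1}.

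Finally I would connect correct specification to the empirical estimator. Because $V_{\textit{Semi}}$ is built from $Q_{\textit{Semi}}$ via $V_{\textit{Semi}}^t = \mathbb{E}_{\pi_\alpha}[Q_{\textit{Semi}}^{t+1}]$ (Eq. \ref{eq_bellman_2}), a correctly specified $Q_{\textit{Semi}}$ delivers, in the infinite-data limit, the true $V_{\textit{Semi}}^0$ as a function of $(X^0, X_o)$. Since the free and always-observed features are left untouched by the blocking operation, their marginal under $p'$ equals the retrospective marginal $p(X^0, X_o)$, so averaging the fitted value over $\mathcal{D}'$ is a valid Monte Carlo estimate of $\mathbb{E}_{p'}[V_{\textit{Semi}}^0]$. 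The law of large numbers then gives $\hat{\mathbb{E}}_{n'}[V_{\textit{Semi}}^0] \to \mathbb{E}_{p'}[V_{\textit{Semi}}^0] = J$.

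The hard part will be the bookkeeping in the Bellman unrolling: one must confirm that the semi-offline recursion of Theorem \ref{theorem_Bellman_equation} genuinely holds under blocking, so that solving it recovers the true $Q_{\textit{Semi}}$ rather than a biased fixed point, and that the conditioning on $X_o$ is propagated consistently through every step (this is where the MAR assumption of Eq. \ref{eq_MAR} and the positivity assumption of Eq. \ref{eq_positivity_semi_offline_RL} are needed). Granting the Bellman equation, the remaining telescoping and the concluding convergence step are routine.
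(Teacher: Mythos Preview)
Your proposal is correct and follows the same approach as the paper: the paper's proof is a one-line appeal to the law of total expectation applied to the first step of the semi-offline Bellman equation (Theorem \ref{theorem_Bellman_equation}), which is precisely your tower-property step $\mathbb{E}_{p'}[V_{\textit{Semi}}^0] = \mathbb{E}_{p'}[C'_{(\pi_\alpha)}] = J$. Your additional Bellman unrolling cross-check and the explicit LLN closure are sound but go beyond what the paper records.
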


\begin{proof}
    We can simply apply the law of total expectation for the first step of the semi-offline Bellman equation (Theorem \ref{theorem_Bellman_equation}). 
\end{proof}

\begin{theorem}
\label{theorem_double_robustness}
(Double robustness of $J_{\textit{DRL-Semi}}$).
The estimator $J_{\textit{DRL-Semi}}$ is 
doubly robust, in the sense that it is consistent if either the Q-function $Q_\textit{Semi}$ or the propensity score model $\hat{\pi}_\beta$ is correctly specified. 
\end{theorem}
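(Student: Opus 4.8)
The plan is to prove the stronger population statement that the DRL functional is exactly unbiased for $J$ under either correct specification, and then invoke a law of large numbers: I would replace the empirical average $\hat{\mathbb{E}}_{n'}$ by $\mathbb{E}_{p'}$ and the fitted nuisances by their limits, a possibly-misspecified $\tilde Q_{\textit{Semi}}$ (with $\tilde V_{\textit{Semi}}^{t}=\mathbb{E}_{\pi_\alpha}[\tilde Q_{\textit{Semi}}^{t+1}]$ enforced by the estimator's construction) and a possibly-misspecified propensity $\tilde\pi_\beta$ entering the weights $\tilde\rho_{\textit{Semi}}^t$. Writing $\tilde\rho_{\textit{Semi}}^0=1$ and collecting the terms of $J_{\textit{DRL-Semi}}$ that share a common weight, I would rewrite the functional as
\begin{equation*}
\Psi=\mathbb{E}_{p'}\Big[\tilde V_{\textit{Semi}}^0+\sum_{t=1}^{T-1}\tilde\rho_{\textit{Semi}}^t\big(\tilde V_{\textit{Semi}}^t-\tilde Q_{\textit{Semi}}^t\big)+\tilde\rho_{\textit{Semi}}^T\big(C'-\tilde Q_{\textit{Semi}}^T\big)\Big],
\end{equation*}
which exposes the Bellman-residual structure that drives both robustness directions. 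The measurability fact used throughout is that $\tilde\rho_{\textit{Semi}}^t$ from Eq. \ref{eq_ipw_semi_offline_RL_weights_2} depends only on the history $\mathcal H^t=(\underline X'^{t-1},\underline A'^t,R,X_o)$.

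When the $Q$-function is correct, so $\tilde Q_{\textit{Semi}}=Q_{\textit{Semi}}$ and $\tilde V_{\textit{Semi}}=V_{\textit{Semi}}$, I would show each residual in the reorganized form has conditional mean zero given $\mathcal H^t$. The first line of the semi-offline Bellman equation (Eq. \ref{eq_bellman_1}) gives $Q_{\textit{Semi}}^t=\mathbb{E}_{p'}[V_{\textit{Semi}}^t\mid\mathcal H^t]$, and at the horizon $Q_{\textit{Semi}}^T=\mathbb{E}_{p'}[C'\mid\mathcal H^T]$; crucially, both conditional expectations are unchanged by including $R$ in the conditioning, because the factorization of $p'$ in Remark \ref{remark_semi_offline_RL_observational} shows that the feature-revelation kernels $p(X'^t\mid\underline X'^{t-1},A'^t,X_o)$ and, through the MAR assumption of Eq. \ref{eq_MAR} used in Theorem \ref{theorem_identification_semi_offline_RL}, the label law $p'(Y\mid\underline X'^{T-1},\underline A'^T,X_o)$ do not depend on $R$ beyond $X_o$. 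Since each $\tilde\rho_{\textit{Semi}}^t$ is $\mathcal H^t$-measurable, iterated expectations annihilate every correction term and $\Psi=\mathbb{E}_{p'}[V_{\textit{Semi}}^0]=J$ by the argument behind Theorem \ref{theorem_consistency_DM}.

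When instead the propensity is correct, $\tilde\pi_\beta=\pi_\beta$ and $\tilde\rho_{\textit{Semi}}^t=\rho_{\textit{Semi}}^t$, but $\tilde Q_{\textit{Semi}}$ is arbitrary. Here I would return to the original pairing $-\rho_{\textit{Semi}}^t\tilde Q_{\textit{Semi}}^t+\rho_{\textit{Semi}}^{t-1}\tilde V_{\textit{Semi}}^{t-1}$ and establish the step-wise change of measure $\mathbb{E}_{p'}[\rho_{\textit{Semi}}^t f]=\mathbb{E}_{q'}[f]$ for any $\mathcal H^t$-measurable $f$, obtained by cancelling $\pi'_{sim}$ and the propensity factor in the ratio $q'/p'$ read off from Eqs. \ref{eq_identificiation_semi_offline_RL_2} and \ref{eq_fact_obs_p_prime}. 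Applying this to $\tilde Q_{\textit{Semi}}^t$ and $\tilde V_{\textit{Semi}}^{t-1}$, and noting that under $q'$ the action $A'^t$ is drawn from $\pi_\alpha$, the estimator-enforced relation $\tilde V_{\textit{Semi}}^{t-1}=\mathbb{E}_{\pi_\alpha}[\tilde Q_{\textit{Semi}}^t]$ (the form of Eq. \ref{eq_bellman_2}) yields $\mathbb{E}_{q'}[\tilde V_{\textit{Semi}}^{t-1}]=\mathbb{E}_{q'}[\tilde Q_{\textit{Semi}}^t]$, so each paired correction term cancels regardless of whether $\tilde Q_{\textit{Semi}}$ is correct. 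What remains is the IPW term $\mathbb{E}_{p'}[\rho_{\textit{Semi}}^T C']=J$ from Theorem \ref{theorem_consistency_IPW}, giving $\Psi=J$.

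The main obstacle is the bookkeeping of conditioning sets so that the two Bellman identities and the change of measure each hold with $R$ present, since the weights depend on $R$ through $\mathbb{I}(R\geq R'^t)$ and $\hat\pi_\beta(R\geq R'^t\mid X_o)$. This is exactly where the MAR assumption and the $p'$-factorization of Remark \ref{remark_semi_offline_RL_observational} are indispensable, as they guarantee that conditioning on $R$ changes neither the revelation transitions nor the outcome model beyond $X_o$. A secondary point worth stating carefully is that the cancellation in the correct-propensity case relies only on the estimator's internal consistency $\tilde V_{\textit{Semi}}^{t-1}=\mathbb{E}_{\pi_\alpha}[\tilde Q_{\textit{Semi}}^t]$ and not on $\tilde Q_{\textit{Semi}}$ equalling the true value function; making this explicit is what prevents the weight errors and the $Q$-errors from interacting. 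Finally, passing from the population identity $\Psi=J$ to consistency of $J_{\textit{DRL-Semi}}$ needs the usual regularity so that the empirical average converges to $\Psi$, with the correctly specified nuisance estimated consistently.
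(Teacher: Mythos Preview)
Your proposal is correct and follows essentially the same two-case decomposition as the paper: when $Q_{\textit{Semi}}$ is correct you use the Bellman residuals $\tilde\rho^t(V^t_{\textit{Semi}}-Q^t_{\textit{Semi}})$ and $\tilde\rho^T(C'-Q^T_{\textit{Semi}})$ together with $\mathcal H^t$-measurability of the weights to collapse everything to $\mathbb{E}_{p'}[V^0_{\textit{Semi}}]=J$, and when $\hat\pi_\beta$ is correct you pair $-\rho^t\tilde Q^t+\rho^{t-1}\tilde V^{t-1}$ and use the estimator-enforced relation $\tilde V^{t-1}=\mathbb{E}_{\pi_\alpha}[\tilde Q^t]$ to cancel, leaving the IPW term. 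The only cosmetic difference is that the paper carries out the propensity-correct cancellation by an explicit decomposition of $\rho^t$ into $\rho^{t-1}$ times the step-$t$ ratio and then integrates the $\pi'_{sim}$ and $\mathbb{I}(R\geq R'^t)/\hat\pi_\beta(R\geq R'^t\mid R\geq R'^{t-1},X_o)$ factors directly under $p'$, whereas you phrase the same computation as a change of measure to $q'$; both rely on exactly the ingredients you name (the factorization in Remark \ref{remark_semi_offline_RL_observational}, the MAR assumption, and the independence of $Q^t_{\textit{Semi}}$ from $R$).
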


\noindent 
We defer the proof to Appendix \ref{app_theorem_double_robustness}.
The DRL estimator is regular and asymptotically linear (RAL) as it was derived from the influence function stated in the following theorem: 

\begin{theorem}
\label{theorem_efficient_if} 
(An influence function under the semi-offline RL view). An influence function of $J$ 
is:  
\begin{align}
     \Psi
     = -J + \rho_\textit{Semi}^T C' + 
 \sum_{t=1}^{T} 
 \left(
 - \rho_\textit{Semi}^t  Q_\textit{Semi}^t
+
 \rho_\textit{Semi}^{t-1}  
 V_\textit{Semi}^{t-1}
 \right).
\end{align}

\noindent 
\end{theorem}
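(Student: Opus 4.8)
The plan is to verify that $\Psi$ satisfies the two defining properties of an influence function (a gradient) of $J$ at the semi-offline sampling distribution $p'$: that it is mean-zero, $\mathbb{E}_{p'}[\Psi]=0$, and that it represents the pathwise derivative of $J$, i.e. $\frac{d}{d\epsilon}J(p'_\epsilon)\big|_{\epsilon=0}=\mathbb{E}_{p'}[\Psi S]$ for every regular parametric submodel $\{p'_\epsilon\}$ through $p'$ with score $S$. The useful starting observation is that $\Psi$ is exactly the summand of the DRL estimator $J_{\textit{DRL-Semi}}$ recentered by $-J$, so the estimator-level results already established --- consistency of the IPW and DM pieces (Theorems \ref{theorem_consistency_IPW}, \ref{theorem_consistency_DM}) and double robustness (Theorem \ref{theorem_double_robustness}) --- do much of the work for the mean-zero property, while the gradient property is what genuinely requires the semiparametric calculation.

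For the mean-zero property, I would take expectations of $\Psi$ under $p'$ using the factorization of the simulated distribution in Remark \ref{remark_semi_offline_RL_observational} together with the identifying factorization of $q'$ in Theorem \ref{theorem_identification_semi_offline_RL}. The key mechanism is telescoping: applying the tower property step by step and invoking the semi-offline Bellman equation (Theorem \ref{theorem_Bellman_equation}) to rewrite $\mathbb{E}_{p'}[\rho_\textit{Semi}^t Q_\textit{Semi}^t\mid\cdots]$ in terms of $\rho_\textit{Semi}^{t-1}V_\textit{Semi}^{t-1}$, so that consecutive terms in the sum cancel and the remainder collapses to $\mathbb{E}_{p'}[\rho_\textit{Semi}^T C']=J$. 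This is the same cancellation that underlies the double-robustness proof; here it is applied with both nuisances at their truth, yielding $\mathbb{E}_{p'}[\Psi]=0$.

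For the gradient property, I would write $J(p')$ through the identification formula (Eqs. \ref{eq_identificiation_semi_offline_RL_1}--\ref{eq_identificiation_semi_offline_RL_2}) as a functional of its unknown components --- the propensity $\pi_\beta(R\geq R'\mid X_o)$, the feature-transition conditionals $p(X'^t\mid\underline{X}'^{t-1},A'^t,X_o)$, the label conditional $p'(Y\mid\underline{X}'^{T-1},\underline{A}'^T,X_o)$, and the marginal $p(X_o)$ --- and differentiate along $p'_\epsilon$. The chain rule produces one derivative contribution per nuisance, each an expectation of the corresponding score against a known functional. I would then match these contributions term by term with the covariance $\mathbb{E}_{p'}[\Psi S]$ after decomposing $\Psi$: the importance-weighted residual $\rho_\textit{Semi}^T C'$ together with the propensity factor inside $\rho_\textit{Semi}^t$ reproduces the propensity score, the telescoping pairs $-\rho_\textit{Semi}^t Q_\textit{Semi}^t+\rho_\textit{Semi}^{t-1}V_\textit{Semi}^{t-1}$ reproduce (via the Bellman recursion) the feature-transition and label-model scores at each horizon, and the leading term $V_\textit{Semi}^0$ combined with the $-J$ centering reproduces the $p(X_o)$-marginal score while guaranteeing each matched pair is mean-zero.

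The main obstacle I expect is the recursive bookkeeping in the gradient step. Because the weights $\rho_\textit{Semi}^t$ are cumulative products of off-policy action ratios and the blocking-induced propensity factor $\pi_\beta(R\geq R'^t\mid X_o)$, and because $Q_\textit{Semi}^t$ and $V_\textit{Semi}^t$ are themselves nested conditional expectations, verifying that the score contribution from perturbing each nuisance lines up exactly with the derivative of the corresponding cumulative weight requires a careful, index-by-index application of the semi-offline Bellman equation (Theorem \ref{theorem_Bellman_equation}) and the tower property. The blocking factor is precisely what makes this more delicate than the standard off-policy RL influence-function derivation, since its derivative interacts with the indicator $\mathbb{I}(R\geq R'^t)$ appearing in the weights; confirming that these pieces cancel cleanly against the transition-score terms is the crux of the argument.
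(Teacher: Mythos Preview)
Your approach via the path-derivative identity $\nabla_\theta J = \mathbb{E}_{p'}[\Psi S]$ is the same one the paper takes, and the plan to decompose the score $S$ into the components coming from each unknown factor of the submodel and match them against pieces of $\Psi$ is correct. However, your term-by-term matching is off in one place, and what you flag as the crux is actually a non-issue. The contribution to $\nabla_\theta J$ from the propensity component --- in the paper this is the score $S_{R\mid R\geq R', X_o}$ arising from the factor $\pi_\beta(R\mid R\geq R', X_o)$ in the identifying distribution $q'$ --- vanishes identically: under $q'$, the cost $C'$ is conditionally independent of $R$ once $R\geq R'$ is enforced (this is exactly what the blocking construction and the identification in Theorem~\ref{theorem_identification_semi_offline_RL} buy), so $\mathbb{E}_{q'}[C'\, S_{R\mid R\geq R', X_o}]=0$ by the mean-zero property of scores. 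There is thus no propensity-score term to reproduce, and the delicate interaction you anticipate between the indicator $\mathbb{I}(R\geq R'^t)$ and the transition scores does not materialize. The genuine work lies in the remaining three score components, and there the correct matching is: $\rho_\textit{Semi}^T(C' - Q_\textit{Semi}^T)$ carries the label score $S_{Y\mid \underline{X}'^{T-1},\underline{A}'^T,X_o}$; each $\rho_\textit{Semi}^t(V_\textit{Semi}^t - Q_\textit{Semi}^t)$ carries the transition score $S_{X'^t\mid \underline{X}'^{t-1},\underline{A}'^t,X_o}$; and $V_\textit{Semi}^0 - J$ carries $S_{X_o}$. Your separate verification of the mean-zero property is fine but redundant --- the centering by $-J$ falls out of the derivation automatically.
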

%

\noindent 
We defer the proof to Appendix \ref{app_theorem_efficient_if}. 
In Appendix \ref{app_other_variants}, we extend the identification and estimation results of this section to the target parameter $J_{total}$ which includes the estimation of counterfactual acquisition and misclassification costs. 

\section{Semi-offline RL for Missing-not-at-random (MNAR) Scenarios}
\label{sec_MNAR}

In the following, we discuss how to combine the semi-offline RL and missing data views in MNAR scenarios. 
In particular, we examine the scenario, where instead of $R \perp\!\!\!\perp X_{(1)}|X_{o}$, we assume that a more general  $ R \perp\!\!\!\perp X_{(1)}|X_{adj,{(1)}}$ holds. 
The new subset of features needed for adjustment $X_{adj,(1)}$ now includes features that are not always observed. 

As described in Section \ref{sec_missing_data}, the missing data view can still be applied if  $\pi_{\beta}(R|X_{adj,(1)})$ is identified.
Semi-offline RL, however, encounters limitations when faced with such MNAR scenarios.
The pure semi-offline RL view cannot be applied, as, for example, the needed propensity score $\pi_\beta(R \geq R'|X_{adj,(1)})$ cannot be evaluated on datapoints where $R_{adj} \neq \vec{1}$, i.e. where the features needed for confounding adjustment are missing. 
Interestingly, such challenges are absent in the missing data perspective, since the propensity score model is strictly assessed for fully observed cases only (where $X_{(1)} = X$).


To address this issue, we propose a new hybrid semi-offline RL / missing data viewpoint, that lies in between the pure semi-offline RL view and the missing data view. In particular, we propose to first solve the missing data problem, but only for $p(X_{adj,{(1)}})$. Then, one can treat the known $X_{adj,{(1)}}$ just like $X_{o}$ and apply the semi-offline RL view.

\subsection{Problem Reformulation}
We reformulate the AFAPE problem in terms of the new hybrid semi-offline RL + missing data view:
\begin{theorem}
\label{theorem_problem_hybrid} 
(AFAPE problem reformulation under the hybrid semi-offline RL + missing data view).
The AFAPE problem of estimating $J$ (Equation \ref{eq:AFAPE_objective}) 
is under the no interference, no direct effect (NDE), and static feature assumption equivalent to estimating 
\begin{align}
\label{eq:AFAPE_objective_hybrid}
    J  
    = 
    \sum_{ X_{adj,(1)}} 
    \underbrace{
    \mathbb{E}_{p'}
    \left[
    C'_{(\pi_{\alpha})}|X_{adj,{(1)}}
    \right]}_{\text{semi-offline RL}}
    \underbrace{p(X_{adj,{(1)}})}_{\text{missing data}}.
\end{align}
\end{theorem}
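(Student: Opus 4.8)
The plan is to mirror the proof of Theorem~\ref{theorem_problem_semi_offline_RL}, but to marginalize down only to the adjustment set $X_{adj,(1)}$ rather than to the full pair $(X_{(1)},Y)$. I would start from the missing-data decomposition of Theorem~\ref{theorem_problem_missing},
\begin{align*}
J = \sum_{X_{(1)},Y} \mathbb{E}\left[C_{(\pi_\alpha)}|X_{(1)},Y\right] p(X_{(1)},Y),
\end{align*}
and reorganize the outer summation by first summing over $X_{adj,(1)}$ and then over the complementary coordinates of $X_{(1)}$ together with the label $Y$. Factoring $p(X_{(1)},Y) = p(X_{(1)},Y|X_{adj,(1)})\,p(X_{adj,(1)})$ and pulling $p(X_{adj,(1)})$ outside the inner sum, the inner sum collapses by the law of total expectation to the coarser conditional expectation $\mathbb{E}[C_{(\pi_\alpha)}|X_{adj,(1)}]$. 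This already produces $J = \sum_{X_{adj,(1)}} \mathbb{E}[C_{(\pi_\alpha)}|X_{adj,(1)}]\,p(X_{adj,(1)})$, i.e.\ the claimed formula but with the real-world inner expectation in place of the semi-offline one.

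The second step is to replace $\mathbb{E}[C_{(\pi_\alpha)}|X_{adj,(1)}]$ by $\mathbb{E}_{p'}[C'_{(\pi_\alpha)}|X_{adj,(1)}]$. Here I would reuse the pointwise equivalence already established inside the proof of Theorem~\ref{theorem_problem_semi_offline_RL}, namely $\mathbb{E}_{p'}[C'_{(\pi_\alpha)}|X_{(1)},Y] = \mathbb{E}[C_{(\pi_\alpha)}|X_{(1)},Y]$ for every $(X_{(1)},Y)$, which holds under NDE, no interference, and the static feature assumption via the G-formula argument (steps $*_{1.1}$ and $*_{1.2}$ there). Averaging both sides over the conditional law of the complementary features and the label given $X_{adj,(1)}$ then transfers the equality up to the coarser conditioning level.

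The one point requiring care — and the main obstacle — is that the averaging weights on the right-hand side must coincide under the simulation distribution $p'$ and under $p$. This holds because $p'$ is defined in Eq.~\ref{eq_def_p_prime} as a conditional sampling process layered on top of the underlying data, so the joint factorizes as $p'(C',A',X'|X_{(1)},Y,R)\,p(X_{(1)},Y,R)$; consequently the marginal law of $(X_{(1)},Y)$, and hence the conditional law of the complement given $X_{adj,(1)}$, is left unchanged by the simulation. Therefore $\sum \mathbb{E}_{p'}[C'_{(\pi_\alpha)}|X_{(1)},Y]\,p(X_{(1)},Y|X_{adj,(1)})$ equals $\mathbb{E}_{p'}[C'_{(\pi_\alpha)}|X_{adj,(1)}]$ by the law of total expectation taken under $p'$ itself. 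Substituting this into the result of the first step yields $J = \sum_{X_{adj,(1)}} \mathbb{E}_{p'}[C'_{(\pi_\alpha)}|X_{adj,(1)}]\,p(X_{adj,(1)})$, completing the argument. I would emphasize that no MAR/MNAR assumption and no positivity condition enter this reformulation; those are needed only afterwards, when one separately identifies the inner semi-offline expectation and the marginal $p(X_{adj,(1)})$.
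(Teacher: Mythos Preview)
Your proposal is correct and essentially mirrors the paper's own proof: both start from Theorem~\ref{theorem_problem_missing}, invoke the pointwise equality $\mathbb{E}_{p'}[C'_{(\pi_\alpha)}|X_{(1)},Y]=\mathbb{E}[C_{(\pi_\alpha)}|X_{(1)},Y]$ from the proof of Theorem~\ref{theorem_problem_semi_offline_RL}, and then use the law of total expectation to marginalize down to $X_{adj,(1)}$. The only cosmetic difference is the order---the paper swaps to $p'$ first and marginalizes second, whereas you marginalize first and swap second---and you give a slightly more explicit justification for why the conditional law of $(X_{-adj,(1)},Y)$ given $X_{adj,(1)}$ agrees under $p$ and $p'$, a step the paper leaves implicit.
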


\begin{proof}
Starting from Eq. \ref{eq:AFAPE_objective_miss}, we find: 
\begin{align*}
J & 
 \overset{*_1}{=}
 \sum_{X_{(1)}, Y} 
 \mathbb{E}
\left[
C_{(\pi_{\alpha})}|X_{(1)},Y
\right]
p(X_{(1)}, Y)
 \\ & 
 \overset{*_2}{=}
\sum_{X_{(1)}, Y} 
 \mathbb{E}_{p'}
\left[
C'_{(\pi_{\alpha})}|X_{(1)},Y
\right]
p(X_{(1)}, Y)
\\ & 
 \overset{*_3}{=}
 \sum_{
 X_{adj,(1)},
 X_{-adj,(1)}, Y} 
 \mathbb{E}_{p'}
\left[
C'_{(\pi_{\alpha})}
|
X_{adj,(1)},
X_{-adj,(1)},
Y
\right]
p(
X_{-adj,(1)},
Y|
X_{adj})
p(X_{adj,(1)}) 
= 
\\ & 
=
 \sum_{ X_{adj,(1)}} 
\mathbb{E}_{p'}
\left[
C'_{(\pi_{\alpha})}|
X_{adj,{(1)}}
\right]
p(X_{adj,{(1)}})
\end{align*}
where 
\begin{itemize}
    \item $1*)$ is Eq. \ref{eq:AFAPE_objective_miss} from Theorem \ref{theorem_problem_missing} 
    \item $2*)$ is the decomposition derived in the proof of Eq. \ref{eq:AFAPE_objective_semi_offline_RL}
    \item $3*)$ is the separation of $X_{(1)}$ into $X_{adj,{(1)}}$ and its complement $X_{-adj,{(1)}}$
\end{itemize}
\end{proof}

\noindent 
Eq. \ref{eq:AFAPE_objective_hybrid} demonstrates the separation of the AFAPE problem in two parts of which one can be solved using semi-offline RL, while the other needs to be solved using missing data methods.

\subsection{Identification}

We make the following two positivity assumptions to allow identification under the hybrid semi-offline RL + missing data view. 

\vspace{10pt}
\noindent 
\textit{Positivity assumption (missing data): } 
\begin{align}
\label{eq_positivity_missing_data_for_hybrid}
\pi_{\beta}(R =\vec{1}| 
X_{adj,(1)} = x_{adj})
>0 
\quad \quad \quad \quad 
\forall  x_{adj}  
\end{align}

\noindent 
\textit{Positivity assumption (semi-offline RL): }
\begin{flalign}
\label{eq_positivity_semi_offline_RL_for_hybrid}
\text{if }  \hspace{11 pt} \quad \quad \quad \quad \quad \quad
& 
\prod_{t=1}^{T}  
\pi_{\alpha}(a'^t|             
\underline{x}'^{t-1},
\underline{a}'^{t-1})  
\prod_{t=0}^{T-1}
p'(x'^t|   
\underline{x}'^{t-1},        
\underline{a}'^{t},
x_{adj}
) 
p(x_{adj})  
>0
\nonumber
&&
\\
\text{then } \quad \quad \quad \quad \quad \quad
& 
\pi_\beta( 
r \geq r'|
x_{adj})
>0
\nonumber
&&
\\ 
&
\forall 
\underline{x}'^{T-1},
\underline{a}'^T,
x_{adj},
r
\end{flalign}

\noindent 
which is a simple combination of the two individual positivity assumptions. 
The positivity requirement can be interpreted as the need for the standard missing data positivity requirement for the identification of $X_{adj,{(1)}}$ and the semi-offline RL positivity requirement for the remaining variables. 



\subsection{Estimation}

One can in general apply any combination of estimators from the semi-offline RL and missing data viewpoints to estimate the respective parts of the reformulated AFAPE problem under the hybrid semi-offline RL + missing data view. 
Here, we show as an example a novel hybrid IPW estimator, that solves both parts using inverse probability weighting: 

\vspace{10pt}
\noindent 
\textit{Inverse probability weighting (IPW):} 

\noindent 
The target cost that is estimated by the hybrid semi-offline + missing data IPW estimator for static feature settings under MNAR scenarios is
\begin{align*}
J_{\textit{IPW-Semi-Miss}} = 
\mathbb{E}_{p'} 
\left[ 
\rho_{\textit{Semi}}^T
\rho_{\textit{Miss}} 
C'
\right]
\end{align*}

\noindent 
where  
\begin{align*}     \rho^t_{\textit{Semi}} & = 
    \prod_{\tau=1}^t
    \frac{
    \pi_\alpha(A'^{\tau}| \underline{X}'^{\tau-1}, \underline{A}'^{\tau-1}) 
    }
    {
    \pi'_{sim}(
    A'^{\tau}| \underline{X}'^{\tau-1}, \underline{A}'^{\tau-1},
    R
    )
    }
    \frac{
    \mathbb{I}(R \geq R'^t)
    }
    {
    \hat{\pi}_\beta(
    R \geq R'^t| 
    X_{adj,(1)}, R_{adj} = \vec{1}
    ).
    }
\end{align*}
and 
\begin{align*}
\rho_{\textit{Miss}}
 =  
\frac{\mathbb{I}(R_{adj} = \vec{1}) }{\hat{\pi}_{\beta}(R_{adj}=\vec{1}| X_{adj, (1)})}
\end{align*}

\section{Experiments}
\label{sec_experiments}

We evaluate different estimators on synthetic and real-world datasets (Heloc \cite{fico_fico_2018} 
and Income \cite{newman_uci_1998}) under examples of synthetic MAR and MNAR missingness to allow the validation of the developed estimators through knowledge about the ground truth.

\subsection{Experiment Design}

We assess different estimators for the evaluation of random AFA policies and a standard deep Q-network (DQN) RL agent \cite{mnih_human-level_2015}. For classifiers, we employ impute-then-regress classifiers \cite{le_morvan_whats_2021} with unconditional mean imputation and a random forest classifier.
The Q-function $Q_\textit{Semi}$ is trained using a multi-layer perceptron. The propensity score model $\hat{\pi}_\beta$ is trained using a logistic regression model which corresponds to the model that induced the retrospective missingness. 
Our analysis involves comparing the performance of the following estimators:

\begin{itemize}
    \setlength{\itemsep}{0pt}
    \item \textit{Imp-Mean:} Uses simple mean imputation for the missing features and is thus biased. 
    \item  \textit{Blocking:} Averages the costs from the semi-offline sampling distribution $p'$ without adjustment and thus gives biased estimates. 
    \item \textit{CC:} Averages the cost for complete cases and is therefore only unbiased for MCAR experiments. 
    \item \textit{IPW-Miss}/\textit{IPW-Miss-gt:} Missing data IPW estimator with normalized weights. \textit{IPW-Miss-gt} is further using the ground truth propensity score model $\pi_\beta$ instead of learning it.
    \item \textit{IPW-Semi}/\textit{IPW-Semi-gt:} Semi-offline RL IPW estimator using normalized weights and a learned or ground truth $\pi_\beta$.
    \item \textit{IPW-Semi-Miss}/\textit{IPW-Semi-Miss-gt:} Hybrid IPW estimator (for MNAR scenarios) with normalized weights and a learned or ground truth $\pi_\beta$.
    \item \textit{DM-Semi:} Semi-offline RL version of the direct method.
    \item \textit{DRL-Semi}/\textit{DRL-Semi-gt:} Semi-offline DRL estimator with normalized weights and a learned or ground truth $\pi_\beta$.
    \item  \textit{J:} This "estimator" is considered as the ground truth. The AFA policy is evaluated on the fully observed dataset. It thus involves estimating $J$ based on Eq. \ref{eq:MI} with a Monte Carlo estimate $\hat{\mathbb{E}}\left[ C_{(\pi_{\alpha})}|X_{(1)},Y \right]$ based on samples from the ground truth data without missingness. 
\end{itemize}
Appendix \ref{Appendix_experiments} contains the full experiment details.

\begin{figure}
\centering	\includegraphics[width= \textwidth]{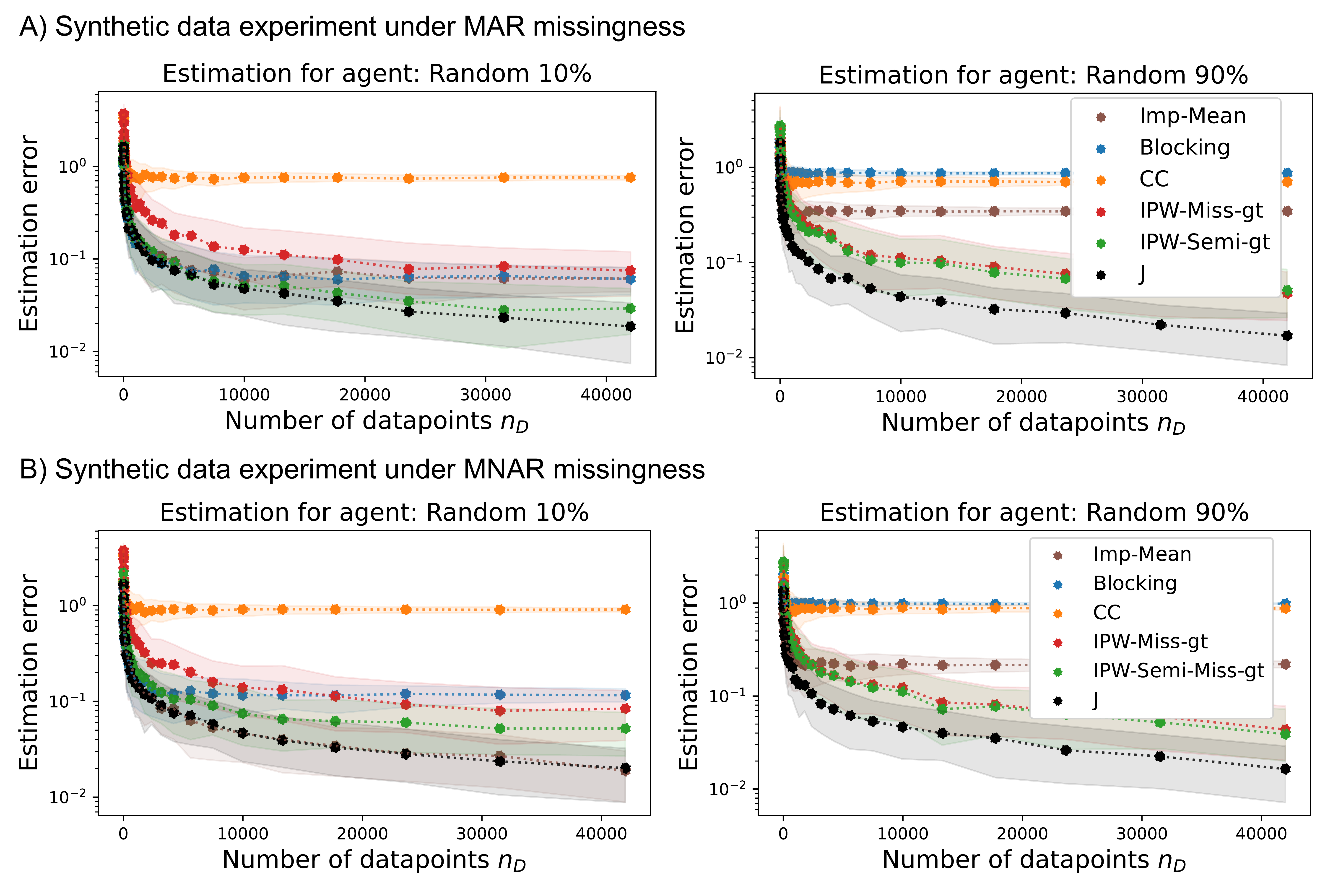}
\caption{Convergence plots for sampling-based estimators from two synthetic data experiments.
Plots show two agents that acquire each costly feature with a probability of $10\%$ and $90\%$, respectively. 
A) MAR experiment. B) MNAR experiment.
}
\label{figure_synthetic_convergence}
\end{figure}

\subsection{Results}

Figure \ref{figure_synthetic_convergence} illustrates convergence plots of sampling-based estimators in synthetic data experiments under both MAR and MNAR missingness. The estimation pertains to two random AFA policies that acquire costly features with a probability of 10\% and 90\%, respectively.

The experiments reveal that the mean imputation, blocking, and complete case analysis estimators, as anticipated, exhibit bias and fail to converge to the true value of $J$. The missing data IPW estimator (\textit{IPW-Miss-gt}) does converge to the ground truth, albeit at a slow pace, as it exclusively reweights complete cases.

The performance of the semi-offline RL IPW estimator (\textit{IPW-Semi-gt}/ \textit{IPW-Semi-Miss-gt}) is contingent on the specific AFA policy under evaluation. Notably, for the 'Random $10\%$' agent, the convergence plot highlights the estimator's pronounced advantages in terms of data efficiency, converging much faster than \textit{IPW-Miss-gt}. This effect is more prominent in the MAR experiment, as confounding adjustment necessitates in MNAR settings the knowledge of additional feature values which reduces the amount of reweighted datapoints. The benefit of using \textit{IPW-Semi-gt}/ \textit{IPW-Semi-Miss-gt} over \textit{IPW-Miss-gt} diminishes when dealing with "data-hungry" agents that acquire many features, as evidenced by nearly identical convergence curves for \textit{IPW-Miss-gt} and \textit{IPW-Semi-gt}/ \textit{IPW-Semi-Miss-gt} in the results for the 'Random $90\%$' policy.

\begin{figure}[h]
\centering	
\vspace{-5pt}
\includegraphics[width= \textwidth]{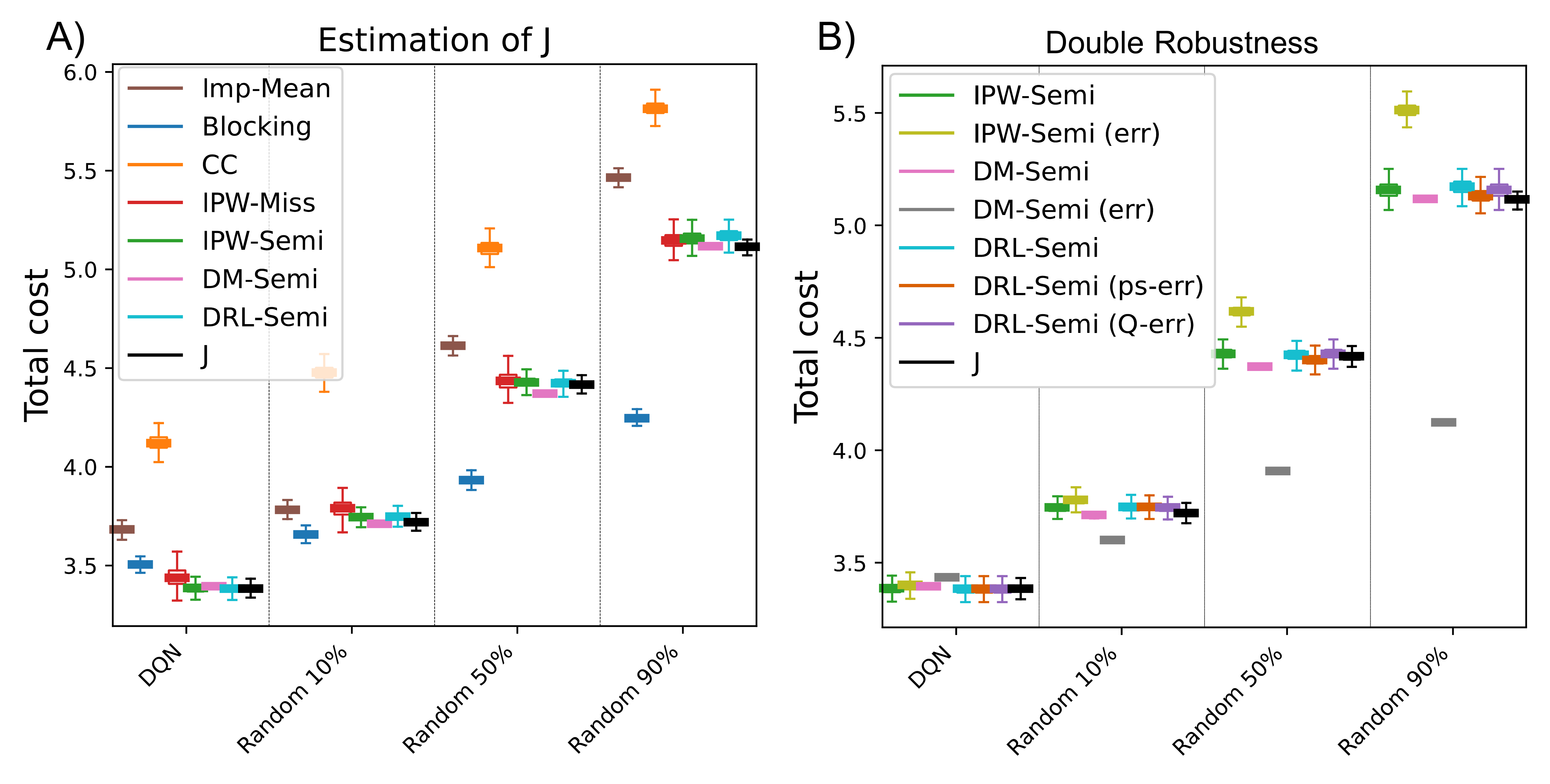}
\vspace{-10pt}
\caption{
A) Different estimators for the MAR experiment. The semi-offline RL estimators accurately estimate the ground truth $J$. 
B) Estimation results from the MAR experiment showcase the double robustness of the DRL estimator. 
}
\label{figure_synthetic_dr}
\end{figure}

Figure \ref{figure_synthetic_dr}A) illustrates the overall performance of various estimators in the synthetic data MAR experiment. Confidence intervals are derived using non-parametric bootstrapping, but due to the computational complexity, nuisance function retraining is excluded. As a result, the obtained confidence intervals appear disproportionately narrow, especially noticeable for the DM estimator. The experiments reveal that all semi-offline RL estimators provide a commendable approximation of the true target parameter $J$. However, biased estimators such as mean imputation, blocking, and complete case analysis consistently fail to accurately estimate $J$.

Figure \ref{figure_synthetic_dr}B) emphasizes the double robustness property of the DRL estimator in the same experiment. This figure highlights that even when one of the nuisance functions is misspecified, the DRL estimator still yields approximately correct estimates for the ground truth $J$.

General estimation results for the real-world MAR experiments are shown in Figure \ref{figure_realworld_MAR}A) for the Heloc and in Figure \ref{figure_realworld_MAR}B) for the Income dataset. These results showcase a substantial alignment between the estimates from the semi-offline RL estimators and the ground truth. A slight bias, can, however, be seen for the semi-offline DM estimator, potentially due to a misspecification of $Q_{Semi}$.

\begin{figure}
	\centering
	\includegraphics[width= \textwidth]{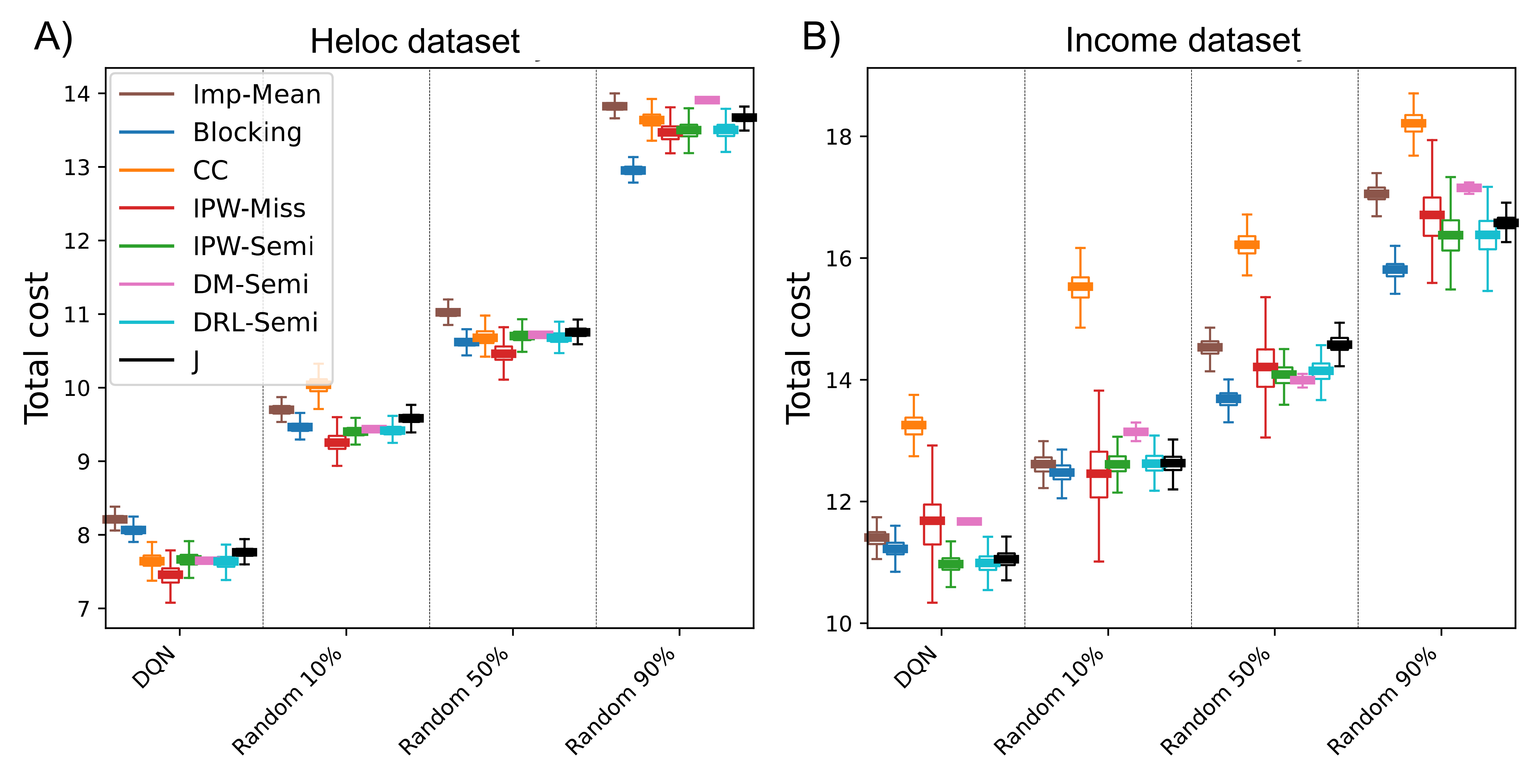}
	\caption{Results for the A) Heloc and B) Income datasets in experiments with induced MAR missingness. The semi-offline RL estimators (\textit{IPW-Semi-gt}, \textit{DM-Semi} and \textit{DRL-Semi}) provide in general accurate estimates for the target parameter $J$. Some bias might, however, be introduced for \textit{DM-Semi} estimator due to a potentially wrong parametric assumption of the utilized model for the Q-function. } 
 \label{figure_realworld_MAR}
\end{figure}
 
\section{Discussion and Future Work}
\label{sec_discussion}

In this study, we extend the developed semi-offline RL concepts, developed for the time-series AFAPE problem in our companion paper \cite{von_kleist_evaluation_2023}, to an AFA setting where a static feature assumption holds. Here, we discuss a few key questions that should be answered before tackling the AFAPE problem. 

\vspace{10pt}
\textit{1) Is the static feature assumption reasonable?} 
In this work, we assume the feature values do not change over time. This allows several benefits in terms of identification, estimation and the general design of AFA agents. Firstly, as shown in this work, it is possible to identify the target parameter even if the order of acquisitions in the retrospective dataset is not known. Secondly, one can design AFA agents that wait for feature acquisition results in order to make decisions on which subsequent feature acquisitions should be taken. 

One may, however, also imagine scenarios where the static feature assumption only holds partially. If the diagnosis of a patient happens, for example, during multiple appointments in the period of weeks or months, one may assume the static feature assumption only holds during the appointments, but not in between appointments. One may then, combine methods discussed here, with the methods for the time-series setting discussed in the companion paper \cite{von_kleist_evaluation_2023}. 

\textit{Conclusion: } The static feature assumption, if reasonable, allows several benefits in terms of identification requirements and estimation efficiency for the AFAPE problem and will allow the design of better AFA systems. 


\vspace{10pt}
\textit{2) What (conditional) independences hold in the data?}
Before choosing a viewpoint, the set of conditional independence assumptions that can be made should be carefully considered. In this work, we make a no direct effect (NDE) assumption that states that feature measurements do not affect the underlying feature. We discuss in our companion paper \cite{von_kleist_evaluation_2023} that one can apply offline RL in general time-series settings if this is not the case. Under NDE, one can, however, not solve the AFAPE problem when the order of feature acquisition is unknown. 

In addition to the NDE assumption, the choice of independence assumptions that make up the missingness process has to be carefully considered. While many MNAR assumptions are identified from the missing data view, they do require stronger positivity assumptions and lead to less efficient estimators compared to the semi-offline RL view. The hybrid missing data + semi-offline RL view allows this trade-off between missingness mechanism assumptions and positivity/ data efficiency to be carefully adjusted. 

\textit{Conclusion:} Under weak MNAR assumptions one has to apply the missing data view. Under the strong MAR assumption, one can apply semi-offline RL. In between, one can consider adopting a hybrid missing data + semi-offline RL view.

\vspace{10pt}
\textit{3) How much exploration was performed by the retrospective missingness policy $\pi_\beta$?}
The missing data view requires a strong requirement of positivity for complete cases. The semi-offline RL view, on the other hand only requires there to be at least as many feature acquisitions in the retrospective data, as are desired under the AFA policy $\pi_\alpha$.
This benefit is therefore stronger for AFA policies that acquire only small portions of the data, while it is less noticable for data hungry AFA policies that acquire almost of all of the data. 

\textit{Conclusion: } The positivity assumption of the semi-offline RL view is significantly weaker than the positivity assumption for the missing data (+ online RL) view. 

\vspace{10pt}
\textit{4) Can the nuisance models be correctly specified and trained?} 
The effectiveness of all estimators hinges upon the precise specification and accurate training of nuisance functions. Consequently, if justified parametric assumptions about the nuisance model can be made that will improve the training, this will also result in better estimation. From the viewpoint of nuisance function modeling, there is therefore no clear best estimator.
For instance, if the multiple imputation model can be readily formulated based on a thorough understanding of the interplay between variables, incorporating justified smoothness properties, the resulting multiple imputation (MI) estimator may surpass all alternatives from the semi-offline RL viewpoint. Similarly, correct parametric assumptions about the propensity score model or the Q-function can confer advantages for semi-offline RL estimators.

\textit{Conclusion: } The performance of any estimator depends on how well the nuisance functions can be modeled and there is no clear best estimator from the viewpoint of nuisance function modeling.

\vspace{10pt}
\textit{5) Is the available dataset size sufficient?} 

Our research shows that adopting the semi-offline RL viewpoint yields notably greater data efficiency when contrasted to the missing data (+ online RL) view. 
To illustrate this effect, consider the amount of trajectories (denoted as $n_\textit{traj}$) that can be simulated from a single data point $X, Y, R$ in the retrospective dataset. 
In the case of the missing data IPW estimator, data points that aren't complete cases receive a weight of 0 ($n_{\textit{traj-Miss}}=0$ if $A \neq \vec{1}$). Therefore, there are no meaningful trajectory simulations possible from a non-complete case. 
However, a complete case allows the simulation of up to $n_{\textit{traj-Miss}} = \sum_{i=0}^{d_x}i \! 
\binom{d_x}{i}$
%
trajectories with possibly non-zero IPW weights. 

In contrast, within the semi-offline RL view, every data point can be leveraged for trajectory simulation. Specifically, there are $n_\textit{traj-Semi} = \sum_{i=0}^{\lVert R \rVert_{1}}i\! \binom{\lVert R \rVert_{1}}{i}$
different trajectories, with $\lVert R \rVert_{1}$ denoting the number of available features. To provide a concrete example, for a data point with $\lVert R \rVert_{1} = 10$ distinct features, this results in a total of approximately $n_\textit{traj-Semi} \approx 10$ million different trajectories.

\textit{Conclusion: } Estimators derived from the semi-offline RL view demonstrate higher data efficiency compared to estimators from the missing data (+ online RL) viewpoint.

\vspace{10pt}
\noindent 
Several general extensions are required for the semi-offline RL framework, and these extensions also need to be applied to the static AFA setting.
Firstly, although we have successfully derived an influence function for the MAR setting, we have not delved into the efficiency of this derived influence function. The task of identifying the efficient influence function for the AFAPE problem is therefore left for future work.

Additionally, the AFAPE problem that has been discussed represents just the initial step toward the development of reliable and optimal AFA agents. Hence, we plan to extend the developed semi-offline RL framework to be able to also address the AFA optimization problem, briefly outlined in Section \ref{sec_afa_optimization_problem}.

\section{Conclusion}
\label{sec_conclusion}

We study the problem of active feature acquisition performance evaluation (AFAPE) in static feature settings.  AFAPE entails estimating the acquisition and misclassification costs that an AFA agent would incur upon deployment, based on retrospective data. We focus on the static feature setting, where the assumption is that feature values do not change over time, enabling the AFA agent to decide on the acquisition of each feature individually, based on the set of previously acquired features. 

We illustrate that even when the sequence of feature acquisitions in the retrospective data is unknown, the AFAPE problem can be solved by two different viewpoints: the missing data (+ online RL) viewpoint, or the novel semi-offline RL approach. Notably, we showcase that the semi-offline RL perspective yields estimators with superior data efficiency and reduced positivity requirements compared to the missing data view. Furthermore, we uncover that both viewpoints can be combined when the underlying missingness mechanism in the retrospective dataset is MNAR.

In conclusion, we substantiate our findings with synthetic data experiments, underscoring the critical importance of employing unbiased estimators for AFAPE. This practice not only safeguards the reliability but also enhances the safety of AFA systems.

\section*{Acknowledgments and Disclosure of Funding}
The present contribution is supported by the Helmholtz Association under the joint research school “HIDSS-006 - Munich School for Data Science @ Helmholtz, TUM \& LMU". Henrik von Kleist received a Carl-Duisberg Fellowship by the Bayer Foundation.

\newpage

\appendix
\section{Literature Review for Active Feature Acquisition (AFA)}
\label{app_AFA_methods}

This appendix contains more details about some common approaches for AFA in static feature settings. These can be generally divided into greedy, information-theoretic approaches, and approaches based on reinforcement learning. 

\textit{Greedy AFA policies: } 
Many AFA approaches are based on a greedy feature acquisition strategy wrapping a subsequent classification task. An idea is to employ decision tree classifiers and to acquire features sequentially by traversing the branch of the decision tree \cite{ling_decision_2004, sheng_feature_2006}, while the splitting criteria minimizes the combined cost of feature acquisition and misclassification  \cite{ling_decision_2004}.
Another approach, the test-cost sensitive Naive Bayes (csNB) classifier \cite{xiaoyong_chai_test-cost_2004}, exploits the Naive Bayes assumption of independence among the predictive power of features. This allows for an efficient exploration of features, acquiring of which can reduce costs.  
Das \textit{et. al} \cite{das_clustering_2021} propose clustering-based cost-aware feature elicitation (CATE). In CATE, data points are clustered based on a set of zero-cost features and the optimal, fixed set of features is computed for each cluster. A new partially-observed data point is then attributed to a cluster, and the corresponding optimal feature set is acquired for it.

\textit{RL-based AFA policies: }
As the AFA problem is inherently a sequential decision process,  one can address it using RL. Model-based RL approaches that leverage the special AFA structure learn an imputation model as a state-transition function \cite{yoon_deep_2018, yin_reinforcement_2020, li_active_2021, li_dynamic_2021, ma_eddi_2019}.
The imputation model is then used at deployment to simulate possible outcomes of a feature acquisition and derive desired acquisition strategies. 
Alternatively, model-free RL approaches do not require learning a state-transition function. One variant, Q-learning, relies on modeling the expected cost of particular acquisition decisions \cite{chang_dynamic_2019, janisch_classification_2020, shim_joint_2018}. As an example, Shim \textit{et. al} \cite{shim_joint_2018} use double Q-learning for the AFA agent with a deep neural network that shares network layers with the subsequent classification neural network.

\section{Review of Semi-parametric Theory}
\label{app_semiparametric_theory}

\noindent 
We provide a brief overview of fundamental concepts in semi-parametric theory. For more detailed explanations, refer to  \cite{tsiatis_semiparametric_2006,bickel_efficient_1993,kennedy_semiparametric_2017}.
Semi-parametric theory seeks data-efficient estimators for a target parameter $J=J(p)$ without imposing overly restrictive assumptions on $p$. We assume access to independent and identically distributed sample $Z_1$,...,$Z_n$ from the random variable $Z$ which is sampled from $p$. 

In many cases, it's possible to obtain $\sqrt{n}$-consistent estimators for $J$ without making numerous assumptions. This makes it often easier to estimate $J$ than to model the entire distribution $p$. 
A key component of semi-parametric theory is the use of influence functions, which characterize asymptotically linear estimators. An estimator $J_{est}$ is considered asymptotically linear with an influence function $\Psi$ (with zero mean and finite variance) if it satisfies the equality 
\cite{tsiatis_semiparametric_2006}:
\begin{align}
\label{eq_asymptotically_linear}
    J_\textit{est}(n) - J = \frac{1}{n} \sum_{i=1}^n \Psi(Z_i) + o_p(\frac{1}{\sqrt{n}})
\end{align}
Due to the central limit theorem, $J_{est}$ is asymptotically normally distributed \cite{tsiatis_semiparametric_2006}: 
\begin{align*}
    \frac{1}{\sqrt{n}}(J_\textit{est}(n) - J) \rightsquigarrow \mathcal{N}\left(0,\mathbb{E}[\Psi^2]\right)
\end{align*}
with  $\rightsquigarrow$ denoting convergence in distribution.

One possible approach to derive an influence function is the path-derivative method. 
It relies on the equality \cite{tsiatis_semiparametric_2006}:
\begin{align*}
    \nabla_\theta J = \mathbb{E}[\Psi S].
\end{align*}
\noindent 
Here, $\theta$ indexes a parametric submodel $p_\theta$ ($\theta=0$ corresponds to the true $p$), $\nabla_\theta J$ is the gradient of the target parameter, and $S = \nabla_\theta \text{log } p_\theta(\mathcal{D})$ is the score function (i.e. the derivative of the log-likelihood over the dataset $\mathcal{D}$).

In some semi-parametric settings, including ours, the influence function is linearly dependent on the target parameter, taking the form $\Psi = f(Z) + J$ for some function $f$. In such cases, a "1-step" estimator can be easily derived using Eq. \ref{eq_asymptotically_linear}:
\begin{align*}
  J_{est} \equiv - J + \frac{1}{n} \sum_{i=1}^n \Psi(Z_i) = -J + \frac{1}{n} \sum_{i=1}^n f(Z_i) + J = \frac{1}{n} \sum_{i=1}^n f(Z_i).
\end{align*}

\clearpage
\section{Glossary of Terms and Symbols}
\label{Appendix_glossary}

\begin{longtable}{@{} p{0.3\textwidth} p{0.67\textwidth} @{}} 
\textbf{Term}           
& 
\textbf{Description}         
\\
\toprule
\textit{AFAPE} &  Active feature acquisition performance evaluation: The problem of estimating the counterfactual cost that would arise if an AFA agent was deployed.
\\
\midrule
\textit{NDE assumption} &  No direct effect assumption: States that the action of measuring a feature does not impact the values of any features or the label.  \\
\midrule
\textit{Semi-offline RL} & Novel framework that allows an agent to interact with the environment (the online part), but forbids the exploration of certain actions (the offline part).
\\
\midrule
\textit{DTR} &  Dynamic treatment regimes 
\\
\midrule
\textit{G-formula} &  Identification formula from causal inference \cite{robins_new_1986}
\\
\midrule
\textit{Plug-in of the G-formula} &  Estimation formula from causal inference that replaces unknown densities in the G-formula with estimated versions \cite{robins_new_1986}.
\\
\midrule
\textit{IPW} &  Inverse probability weighting: Estimator that is also known as importance sampling or the Horvitz-Thompson estimator.
\\
\midrule
\textit{DM} &  Direct method: Estimator based on a Q-function.
\\
\midrule
\textit{DRL} &  Double reinforcement learning: Double robust estimator that uses IPW weights and a Q-function. 
\\
\midrule
\textit{m-graph} &  Missing data graph: Graph to visualize assumptions in missing data problems.
\\
\midrule
\textit{MI} &  Multiple imputation: Estimator for missing data problems that is a special case of the plug-in of the G-formula. 
\\
\midrule
\textit{influence function} & Function of mean zero and finite variance that is used to analyze the asymptotic properties of regular and asymptotically linear (RAL) estimators.
\\
\midrule
\textit{MCAR assumption} &  Missing-completely-at-random assumption: States that the reason for missingness of certain features does not depend on any feature values. \\
\midrule
\textit{MAR assumption} &  Missing-at-random assumption: States that the reason for missingness of certain features does only depend on observed feature values.\\
\midrule
\textit{MNAR assumption} &  Missing-not-at-random assumption: States that the reason for missingness of certain features may depend on feature values that are not observed.
\\
\midrule
\textit{nuisance function } &
Function that needs to be trained from data in order to use a corresponding estimator. Examples are the propensity score model and the Q-function. 
\\
\bottomrule
\end{longtable}

\begin{longtable}{@{} p{0.25\textwidth} p{0.72\textwidth} @{}} 
\textbf{Symbol}           
& \textbf{Description}         
\\
\toprule
\endhead
$t \in (0,...,T)$ &  Time-steps \\
\midrule
$X$ &  Observed feature values  \\
\midrule
$X_{(1)}$ &  Unobserved counterfactual feature values \\
\midrule
$R$ &   Missingness indicator  \\
\midrule
$Y$ &  Label \\
\midrule
$Y^*$ &  Predicted label \\
\midrule
$C_a^t$ &  Acquisition cost for action $A^t$  \\
\midrule
$C_{mc}$ &  Misclassification cost (if $Y$ and $Y^*$ differ)  \\
\midrule
$\pi_\beta$ &  Missingness mechanism (retrospective data) \\
\midrule
$\pi_\alpha$ &  AFA policy \\
\midrule
$C_{mc,(\pi_\alpha)}$ &  Counterfactual misclassification cost had  $\pi_\alpha$ instead of $\pi_\beta$ been applied  \\
\midrule
$g(.)$
& known deterministic distribution\\
\midrule
$g(Y^*|\underline{X}^{T-1},\underline{A}^T)$ &  Classifier predicting $Y^*$ \\
\midrule
$J$ / $J_{mc}$ &  Expected misclassification cost under the AFA policy and classifier \\
\midrule
$J_a$ &  Expected acquisition cost under the AFA policy and classifier \\
\midrule
$\phi_1^*$ , $\phi_2^*$ &  Sets of parameters that parameterize the AFA policy and the classifier, respectively. \\
\midrule
$q(.)$
& counterfactual distribution\\
 \midrule
$\pi'$ &  Blocked policy 
\\
\midrule
$\pi'_{\textit{sim}}$ &  (Blocked) simulation policy \\
\midrule
$p'(.)$
& Simulated distribution\\
\midrule
$C',Y'^*,X',A'$
& Simulated cost, prediction, features and actions\\
\midrule
$R'^t$
& Set of acquired features in simulation until step $t$/ set version of $\underline{A}'^t$\\
 \midrule
$\mathcal{D}$
& Retrospective dataset \\
\midrule
$\mathcal{D}'$
& Simulated dataset \\
\midrule
$Q^t_{\textit{Semi}}$
& State-action value function from semi-offline RL (at time t)
 \\
 \midrule
$V^t_{\textit{Semi}}$
& State value function from semi-offline RL (at time t) \\
 \midrule
$q'(.)$
& counterfactual simulated distribution\\
\midrule
$\Psi$
& Influence function
 \\
\bottomrule
\end{longtable}

\section{Review of Identification in Missing Data Problems }
\label{app_missing_data}

In this appendix, we provide a short review and give an example of how identification can be performed in missing data scenarios, with a focus on MNAR settings. 
The goal in missing data problems is to estimate some function of $p(X_{(1)})$, such as a parameter, using the i.i.d. samples from the observed distribution $p(X,R)$. 
Parameters of interest must be identified, i.e. unique functions of $p(X,R)$, in order to make the estimation problem well-posed.

We will restrict attention to a special type of missing data models where 
identifiability restrictions are represented by a directed acyclic graph (DAG) factorization of the distribution. The restrictions are then formalized as $p(X_{(1)},R) = \prod_{V \in X_{(1)} \cup R} p(V | \text{pa}_{\cal G}(V))$ for some graph ${\cal G}$ where $\text{pa}_{\cal G}(V)$ selects the parents of the node $V$ in $\mathcal{G}$. The graph $\mathcal{G}$ is termed the \emph{missing data graph} (m-graph) \cite{mohan_graphical_2013, shpitser_missing_2015}. Methods of causal inference may then be applied to achieve identification. 
Specific to the medical AFA setting and the process of step-by-step observations, the pattern graph framework~\cite{chen_pattern_2020} is shown to be effective in addressing the missing data problem~\cite{zamanian_assessable_2023}.

Figure \ref{graph_MNAR_scenario} shows the m-graph for a simple MNAR scenario as an example to be discussed next.

\subsection{Identification Example for a Simple MNAR Scenario}

We give an instructive example of how to perform identification for the propensity score $p(R |X_{(1)})$ in a simple MNAR scenario shown in Figure  \ref{graph_MNAR_scenario}. The missing data graph shows all conditional independence assumptions on the individual feature level. We assume here that $X_{(1),2}$ and $X_{(1),3}$ may be missing with missingness indicators $R_2$ and $R_3$, while $X_{(1),1}=X_1$ is fully observed. The underlying missingness scenario is MNAR, because the missingness indicator $R_3$ depends on feature $X_{(1),2}$ which may be missing itself. 

\begin{figure}[h]
\centering
\includegraphics[width=0.55\textwidth]{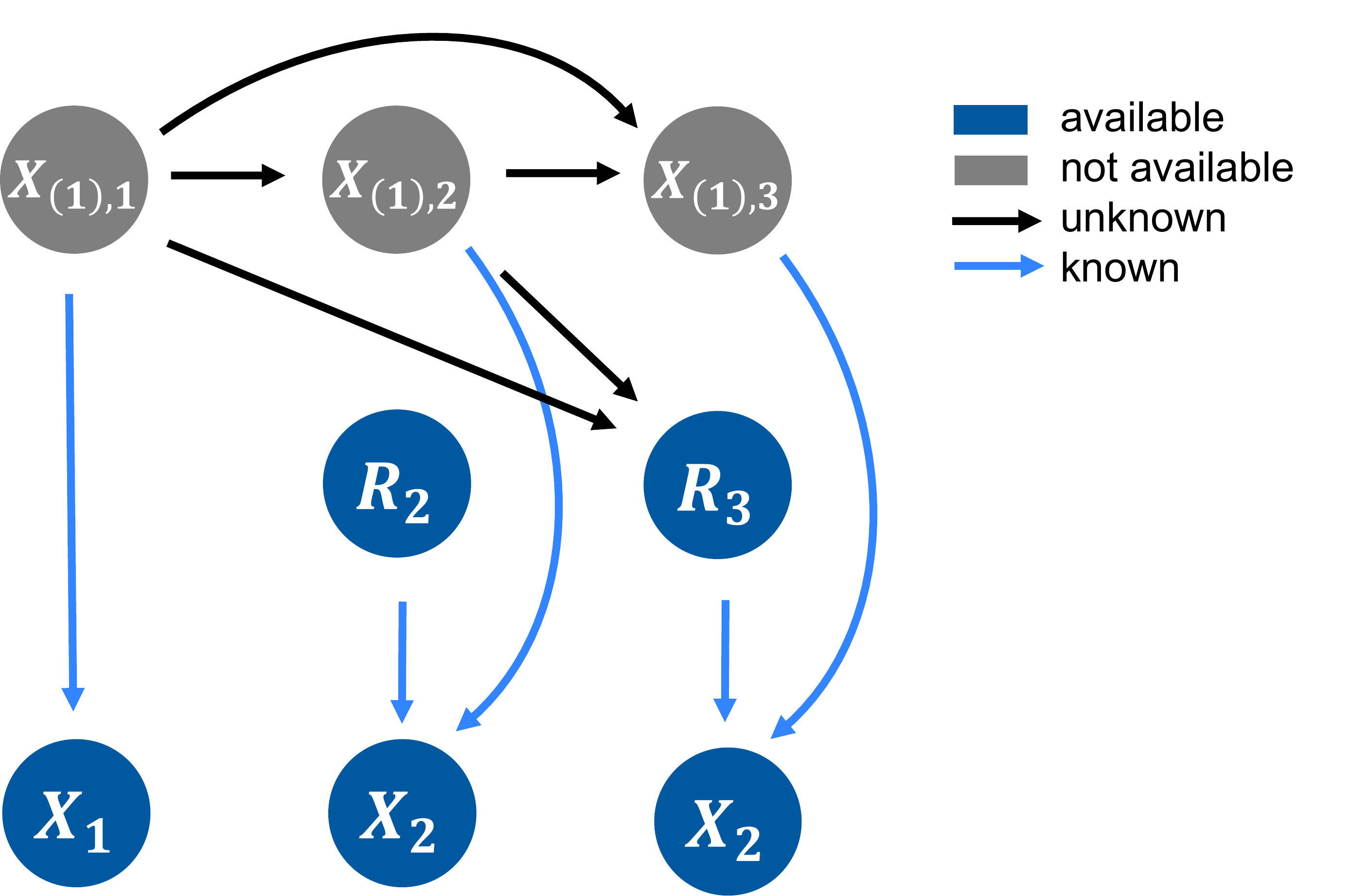}
    \caption{ 
    Missing data graph (m-graph) for a simple MNAR scenario. The scenario is MNAR, because $R_3$ depends on the value of $X_{(1),2}$ that is potentially missing. $X_1$ is presumed to be fully observed and does therefore not have a missingness indicator. 
    }
    \label{graph_MNAR_scenario}
\end{figure}

\vspace{10pt}
\noindent 
\textit{Identification of $p(R|X_{(1)})$:}
\newline
The propensity score is identified by  
\begin{align*}
    p(   R |X_{(1)}) 
    & =  
    p( R_2, R_3|X_1, X_{(1),2}) \nonumber \\
    & = p(R_2|X_1) 
    p(R_3|X_1, X_{(1),2}) \nonumber \\
    & \overset{*_1}{=}
    p(R_2|X_1)  p(R_3|X_1, X_{(1),2}, R_2 = 1)
    \nonumber \\
    & =
    p(R_2|X_1)  p(R_3|X_1, X_{2}, R_2 = 1)
\end{align*}
where we used in $1*)$ the conditional independence $R_3 \indep R_2|X_1,X_{(1),2}$. The final expression is a function of only observed data and is thus identified. It can, however, only be evaluated on datapoints where $R_2= \vec{1}$.

\section{Proof of Theorems \ref{theorem_identification_semi_offline_RL} and \ref{theorem_Bellman_equation}}
\label{app_theorem_identification}

This Appendix contains the proofs for Theorems \ref{theorem_identification_semi_offline_RL} and \ref{theorem_Bellman_equation}. It also shows the factorization of the "observational" (simulated) distribution $p'$ given in Remark $\ref{remark_semi_offline_RL_observational}$ and why the stated positivity assumption is needed for identification. 

\vspace{10pt}
\begin{proof}
We do so by factorizing the counterfactual distribution, denoted by $q'$, in a step-by-step fashion. We also split each time-step in two parts to show how the two parts of the semi-offline RL version of the Bellman equation arises. 
To help with the identification, we also replicate Figure \ref{graph_semi_offline_RL}
of the causal graph 
describing the simulation process in Figure \ref{graph_semi_offline_RL_with_interventions}A). 
Figure \ref{graph_semi_offline_RL_with_interventions}B) contains the counterfactual graph for identification step $t=1$.

\begin{figure}
\centering
\vspace{-5pt}
\includegraphics[width=1\textwidth]
    {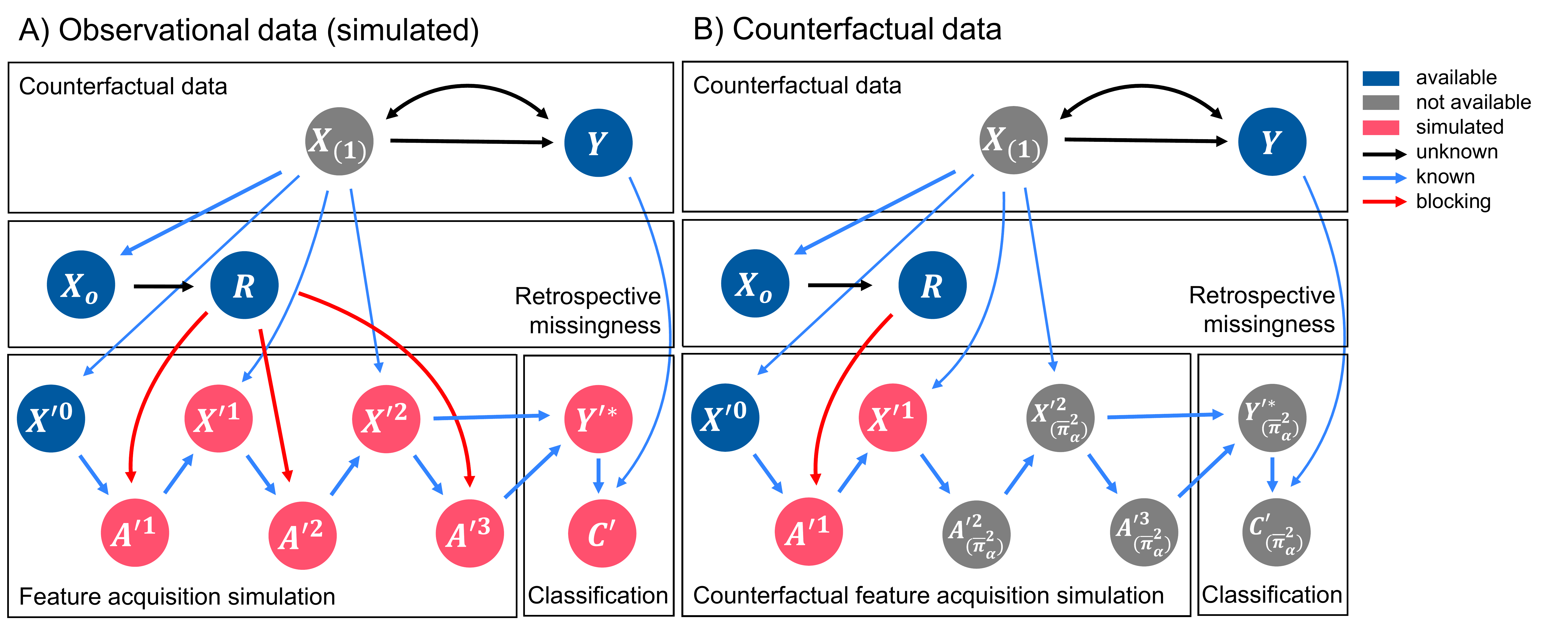}
    \caption{ 
    A) Causal graph depicting the semi-offline RL sampling distribution. B) Causal graph for the counterfactual distribution under an intervention from step $t=2$ onwards. 
   The following edges showing long-term dependencies were excluded from the graph for visual clarity: 
$\underline{X}'^{t-1}/
\underline{X}'^{t-1}_{(\overline{\pi}_{\alpha}^2)}
, \underline{A}'^{t-1} / 
\underline{A}'^{t-1}_{(\overline{\pi}_{\alpha}^2)}
\rightarrow 
A'^{t} / 
A'^{t}_{(\overline{\pi}_{\alpha}^2)}
$; and 
$\underline{X}'^{T-1}/
\underline{X}'^{T-1}_{(\overline{\pi}_{\alpha}^2)}
, \underline{A}'^{T} / 
\underline{A}'^{T}_{(\overline{\pi}_{\alpha}^2)}
\rightarrow 
Y'^* / 
Y'^*_{(\overline{\pi}_{\alpha}^t)}$.
    }
    \label{graph_semi_offline_RL_with_interventions}
\vspace{-1 pt}
\end{figure}

\vspace{10pt}
\noindent 
\textbf{Step 0}

\noindent 
\textit{Counterfactual factorization (step $t = 0$, part 1):}
\begin{align*}   
p'   
\left( 
C'_{(\pi_{\alpha})}
\right)  
\equiv
p'   
\left( 
C'_{(\overline{\pi}^1_{\alpha})}
\right)  
=  & 
\sum_{X_{o} ,X'^0} 
p'
\left(   
C'_{(
\overline{\pi}^1_{\alpha})}
\Big\vert                   
X'^0,
X_{o}
\right)   
p'(X'^0|X_{o})
p(X_{o} ) 
\end{align*}   

\noindent 
where we denote $C'_{(\overline{\pi}^1_{\alpha})}$ as the counterfactual $C'$ under an intervention of $\pi_\alpha$ from step $t=1$ onwards.
Furthermore, we expanded the expression by $X'^0$ and $X_{o}$ which are used for adjustment. 
Further note that 
$p'(
X'^0| X_o 
)= p(
X_{(1),  r'^0}|X_o 
)$  where we let $X_{(1), r'^0}$ denote the $X_{(1)}$ at the indexes that are always revealed at step $0$.

\vspace{10pt}
\noindent 
\textit{Counterfactual factorization (step $t = 0$, part 2):}
\begin{align*}   
p'
\left(   
C'_{(   
\overline{\pi}^1_{\alpha})
}
\Big\vert                   
X'^0,
X_{o}                             
\right) 
& 
= 
\boldsymbol{\sum_{a'^1} }
p'
\left(   
C'_{\boldsymbol{(   
\overline{\pi}^2_{\alpha},
a'^1)}
}        
\Big\vert                        X'^0,
X_{o}                      
\right) 
\boldsymbol{\pi_{\alpha}(
a'^1 | X'^0) }
\\ & 
\overset{*_1}{=}
\sum_{a'^1} 
p'
\left(   
C'_{(   
\overline{\pi}^2_{\alpha},
a'^1,
\boldsymbol{\pi_{{id}}})
}        
\Big\vert                   
X'^0,
X_{o}                             
\right) 
\pi_{\alpha}(
a'^1 | X'^0)  
\\ & 
= 
\sum_{a'^1, \boldsymbol{r}} 
p'
\left(   
C'_{(   
\overline{\pi}^2_{\alpha},
a'^1,
\boldsymbol{r})
}        
\Big\vert                   
X'^0,
X_{o}                             
\right) 
\boldsymbol{\pi_{id}(
r | X_o, a'^1) }  
\pi_{\alpha}(
a'^1 | X'^0)   
\\ & 
\overset{*_2}{=} 
\sum_{a'^1, r} 
p'
\left(   
C'_{(   
\overline{\pi}^2_{\alpha},
a'^1,
r)
}        
\Big\vert                   
X'^0,
X_{o}                             
\right) 
\boldsymbol{\pi_{\beta}(
r | r \geq r'^1, X_o)}   
\pi_{\alpha}(
a'^1 | X'^0)   
\\ &
\overset{*_3}{=}
\sum_{a'^1, r} 
p'
\left(   
C'_{(   
\overline{\pi}^2_{\alpha},
a'^1,
r)
}        
\Big\vert                   
X'^0,
X_{o}, 
\boldsymbol{r}
\right) 
\pi_{\beta}(
r | r \geq r'^1, X_o)   
\pi_{\alpha}(
a'^1 | X'^0)  
\\ &
\overset{*_4}{=}
\sum_{a'^1, r} 
p'
\left(   
C'_{\boldsymbol{(   
\overline{\pi}^2_{\alpha},
a'^1)}
}        
\Big\vert                   
X'^0,
X_{o}, 
r
\right) 
\pi_{\beta}(
r | r \geq r'^1, X_o)   
\pi_{\alpha}(
a'^1 | X'^0)  
\\ &
\overset{*_5}{=}
\sum_{a'^1, r} 
p'
\left(   
C'_{(   
\overline{\pi}^2_{\alpha},
a'^1)
}        
\Big\vert                   
X'^0,
\boldsymbol{a'^1},
X_{o}, 
r
\right) 
\pi_{\beta}(
r | r \geq r'^1, X_o)   
\pi_{\alpha}(
a'^1 | X'^0) 
\\ &
\overset{*_6}{=}
\sum_{a'^1, r} 
p'
\left(   
C'_{\boldsymbol{(   
\overline{\pi}^2_{\alpha})}
}        
\Big\vert                   
X'^0,
a'^1,
X_{o}, 
r
\right) 
\pi_{\beta}(
r | r \geq r'^1, X_o)   
\pi_{\alpha}(
a'^1 | X'^0) 
\\ &
\overset{*_7}{=}
\sum_{a'^1, r} 
p'
\left(   
C'_{(   
\overline{\pi}^2_{\alpha})
}        
\Big\vert                   
\boldsymbol{X'^0,
a'^1,
X_{o}}
\right) 
\pi_{\beta}(
r | r \geq r'^1, X_o)   
\pi_{\alpha}(
a'^1 | X'^0) 
\\ &
=
\sum_{\boldsymbol{a'^1}} 
p'
\left(   
C'_{(   
\overline{\pi}^2_{\alpha})
}        
\Big\vert                   
X'^0,
a'^1,
X_{o}
\right)  
\pi_{\alpha}(
a'^1 | X'^0) 
\end{align*}

\noindent 
where we consider only acquisition actions $a'^1 \leq d_x$, i.e. not the "stop \& predict" action. We will cover that action at the end. 
The following points justify the steps in more detail:

\begin{itemize}
    \item $*1)$: We notice that $C'_{(\overline{\pi}^2_\alpha,a'^1)} $ is independent of any interventions $\pi_{id}$ on $R$. 
    \item $*2)$: We choose 
    $\pi_{id}(r | X_o, a'^1)  = \pi_{\beta}(
    r | r \geq r'^1, X_o)$ where we let $r'^1$ denote the set notation of $a'^1$. This means $r'^1$ denotes the missingness indicator for the acquired features until step 1.  This choice for $\pi_{id}$ will prevent positivity violations. 
    \item $*3)$: 
    We use the exchangeability 
    $C'_{(\overline{\pi}^2_{\alpha},
    a'^1, r)} \perp\!\!\!\perp  
    R |
    X'^0, X_o $ which follows from the MAR assumption. 
    \item $*4)$: 
    We use the consistency assumption:    \newline
    $p'
    \left(  
    C'_{(\overline{\pi}^2_{\alpha}, 
    a'^1, r)}     
    \Big\vert                   
    X'^0, 
    X_o,
    r
    \right) 
    = 
    p'
    \left(  
    C'_{(\overline{\pi}^2_{\alpha}, 
    a'^1)}     
    \Big\vert                   
    X'^0, 
    X_o,
    r
    \right)$ 
    \item $*5)$: We use the exchangeability: 
    $C'_{(\overline{\pi}^2_{\alpha},
    a'^1)} \perp\!\!\!\perp  
    A'^1 |
    X'^0, X_o, R$ 
    \item $*6)$: We use the consistency assumption: 
    \newline
    $p'
    \left(  
    C'_{(\overline{\pi}^2_{\alpha}, 
    a'^1)}     
    \Big\vert                   
    X'^0, 
    a'^1,
    X_o,
    r 
    \right) 
    = 
    p'
    \left(  
    C'_{(\overline{\pi}^2_{\alpha})}     
    \Big\vert                   
    X'^0,
    a'^1,
    X_o,
    r
    \right)$ 
    \item $*7)$: We use the conditional independence 
    $C'_{(\overline{\pi}^2_{\alpha})} \perp\!\!\!\perp  
    R |
    X'^0, A'^1, X_o$
\end{itemize}

\noindent 
The identification step also requires that the conditioning on $X'^0, A'^1, X_o, R$ in $*5)$ is well specified. To state the necessary positivity assumption, we start by factorizing the "observational" (i.e. simulated) distribution for step $t=0$. 
In particular, we examine the following distribution which excludes an intervention on $A'^1$. 

\vspace{10pt}
\noindent 
\textit{Observational factorization (step $t = 0$):}
\begin{align*}  
p'   
\left( 
C'_{(\overline{\pi}^2_{\alpha})}
\right)  
= 
\sum_{X'^0,X_o, R, A'^1}
p'
\left(   
C'_{(   
\overline{\pi}^2_{\alpha})
}        
\Big\vert                   
X'^0,
A'^1 , 
X_o
\right) 
\underbrace{
\pi_{sim}'(
A'^1
|  
X'^0,
R
)}  
_{\text{simulation policy}} 
\underbrace{
\pi_{\beta}(
R
| 
X_o)
}_{\text{retro. missingness}}  
p(X'^0| X_o)
p(X_o)
\end{align*}

\noindent 
The following positivity assumption arises: 
\begin{flalign}
\text{if }  \hspace{11 pt} \quad \quad \quad \quad \quad 
&q'(  x'^0,a'^1, r, x_o)  =  
\nonumber
p(x'^0,x_o) 
\pi_{\alpha}(
 a'^1 | x'^0)     
\pi_{\beta}(
r | r \geq r', x_o)  
>0
\nonumber
&&
\\
\text{then } \quad \quad \quad \quad \quad 
&
p'( x'^0, a'^1, r, x_o) =  
\nonumber
p(x'^0,x_o) 
\pi_{sim}'(
a'^1 | x'^0, r)  
\pi_{\beta}(
r | x_o)  
>0
\nonumber
&&
\\ 
&
\forall 
x'^0,
a'^1,
x_o,
r
\end{flalign}

\noindent 
Comparing the terms for $A'^1$ shows: 
\begin{flalign*}
 &\text{if } 
\pi_{\alpha}(
a'^1 | x'^0) > 0, \quad \quad \quad \quad 
 \quad \quad  \text{then }
\pi_{sim}'(
a'^1 | x'^0, 
r) > 0,  \quad \quad 
&& \text{if and only if } r'^1 \leq r   
\end{flalign*}
due to the definition of a blocked policy (Definition \ref{def_blocked_policy}). 
This does, however, not lead to any positivity issues, because 
$\pi_{\beta}(
r | r \geq r'^1, x_o) = 0$ if $r'^1 \not \leq r$. 
Furthermore, comparing the terms for $R$, we see that there are no problems, since if  $\pi_{\beta}(
r| r \geq r'^1, x_o) > 0$, then $\pi_{\beta}(
r | x_0 
) > 0$. 
Thus, in order for the positivity assumption to hold, the only requirement is that $\pi_{\beta}(
r| r \geq r'^1, X_o)$ is a valid density, which is the case if $\pi_{\beta}(r \geq r'^1| X_o) > 0$. This condition is fulfilled by the positivity assumption from Eq. \ref{eq_positivity_semi_offline_RL}.

\vspace{10pt}
\noindent 
\textbf{Step t}

\noindent 
In the following, we extent the identification to the general step $t$. 

\vspace{10pt}
\noindent 
\textit{Counterfactual factorization (step $t$, part 1):}
\begin{align}     
\label{eq_bellman_1_derivation}
p'   
\biggl( 
C'_{(
\overline{\pi}^{t+1}_{\alpha})} 
\Big\vert  
\underline{X}'^{t-1}, \underline{A}'^t, X_o
\biggl)  
&= 
\sum_{X'^t} 
p'
\left(   
C'_{(
\overline{\pi}^{t+1}_{\alpha}
)} 
\Big\vert                   
\underline{X}'^t,
\underline{A}'^t,
X_o
\right) 
p'(
X'^t|\underline{X}'^{t-1},A'^t, X_o 
)   
\end{align}   

\noindent
where we expanded the expression with $X'^t$ which is needed for adjustment. 
Note further that 
$p'(
X'^t|\underline{X}'^{t-1},A'^t, X_o 
)= p(
X_{(1), a'^t}|X_{(1),r'^{t-1}}, X_o 
)$  where we let $X_{(1), r'^t}$ denote the variable $X_{(1)}$ at all indices $i$ where $r'^t_i = 1$.

\vspace{10pt}
\noindent 
\textit{Counterfactual factorization (step $t$, part 2):}
\begin{align}   
\label{eq_bellman_2_derivation}
p'
\biggl(   
C'_{(   
\overline{\pi}^{t+1}_{\alpha})
} 
\Big\vert                   
\underline{X}'^t, 
\underline{A}'^t,
X_o                           
\biggl) 
& = 
\boldsymbol{\sum_{a'^{t+1}}  }
p'
\left(   
C'_{\boldsymbol{(   
\overline{\pi}^{t+2}_{\alpha},
a'^{t+1})}
}        
\Big\vert                   
\underline{X}'^t,
\underline{A}'^t,
X_o                            
\right) 
\boldsymbol{\pi_{\alpha}(
a'^{t+1} | \underline{X}'^t,
\underline{A}'^t)  }
\nonumber
\\ & 
\overset{*_1}{=}
\sum_{a'^{t+1}} 
p'
\left(   
C'_{(   
\overline{\pi}^{t+2}_{\alpha},
a'^{t+1},
\boldsymbol{\pi_{id}})
}        
\Big\vert                   
\underline{X}'^t,
\underline{A}'^t,
X_o                                 
\right) 
\pi_{\alpha}(
a'^{t+1} | \underline{X}'^t,
\underline{A}'^t)  
\nonumber
\\ & 
= 
\sum_{a'^{t+1}, \boldsymbol{r}} 
p'
\left(   
C'_{(   
\overline{\pi}^{t+2}_{\alpha}, 
a'^{t+1},
\boldsymbol{r})
}        
\Big\vert                   
\underline{X}'^t,
\underline{A}'^t,
X_o                                   
\right) 
\boldsymbol{\pi_{id}(
r | 
\underline{A}'^t, 
a'^{t+1}, 
X_o)   }
\pi_{\alpha}(
a'^{t+1} | \underline{X}'^t,
\underline{A}'^t)  
\nonumber
\\ & 
\overset{*_2}{=} 
\sum_{a'^{t+1}, r} 
p'
\left(   
C'_{(   
\overline{\pi}^{t+2}_{\alpha},
a'^{t+1},
r)
}        
\Big\vert                   
\underline{X}'^t,
\underline{A}'^t,
X_o                                 
\right) 
\boldsymbol{\pi_{\beta}(
r | r \geq r'^{t+1}, X_o)}   
\pi_{\alpha}(
a'^{t+1} | \underline{X}'^t,
\underline{A}'^t) 
\nonumber
\\ &
\overset{*_3}{=}
\sum_{a'^{t+1}, r} 
p'
\left(   
C'_{\boldsymbol{(   
\overline{\pi}^{t+2}_{\alpha},
a'^{t+1})}
}        
\Big\vert                   
\underline{X}'^t,
\underline{A}'^t,
X_o  ,      
\boldsymbol{r}
\right) 
\pi_{\beta}(
r | r \geq r'^{t+1}, X_o)   
\pi_{\alpha}(
a'^{t+1} | \underline{X}'^t,
\underline{A}'^t) 
\nonumber
\\ &
\overset{*_4}{=}
\sum_{a'^{t+1}, r} 
p'
\left(   
C'_{\boldsymbol{(   
\overline{\pi}^{t+2}_{\alpha})}
}        
\Big\vert                   
\underline{X}'^t,
\underline{A}'^t,
\boldsymbol{a'^{t+1}},
X_o  ,    
r
\right) 
\pi_{\beta}(
r | r \geq r'^{t+1}, X_o)   
\pi_{\alpha}(
a'^{t+1} | \underline{X}'^t,
\underline{A}'^t) 
\nonumber
\\ &
\overset{*_5}{=}
\sum_{a'^{t+1}, r} 
p'
\left(   
C'_{(   
\overline{\pi}^{t+2}_{\alpha})
}        
\Big\vert                   
\boldsymbol{\underline{X}'^t,
\underline{A}'^t,
a'^{t+1},
X_o  }    
\right) 
\pi_{\beta}(
r | r \geq r'^{t+1}, X_o)   
\pi_{\alpha}(
a'^{t+1} | \underline{X}'^t,
\underline{A}'^t) 
\nonumber
\\ &
=
\sum_{\boldsymbol{a'^{t+1}}} 
p'
\left(   
C'_{(   
\overline{\pi}^{t+2}_{\alpha})
}        
\Big\vert                   
\underline{X}'^t,
\underline{A}'^t,
a'^{t+1},
X_o      
\right)  
\pi_{\alpha}(
a'^{t+1} | \underline{X}'^t,
\underline{A}'^t) 
\end{align}

\noindent 
where we again only consider only acquisition actions $a'^{t+t} \leq d_x$, i.e. not the "stop \& predict" action which we will cover at step $t = T$. 
The following points justify the steps in more detail:

\begin{itemize}
    \item $*1)$: We use that $C'_{(\overline{\pi}^{t+2}_\alpha,a'^{t+1})} $ is independent of any interventions $\pi_{id}$ on $R$. 
    \item $*2)$: We choose 
    $\pi_{id}(r | \underline{a}'^{t+1}, X_o)  = \pi_{\beta}(
    r | r \geq r'^{t+1}, X_o)$ where we let $r'^{t+1}$ denote the set notation of $\underline{a}'^{t+1}$, corresponding to the missingness indicator for all acquired features until step $t+1$. 
    \item $*3)$: 
    We use exchangeability: 
    $C'_{(\overline{\pi}^{t+2}_{\alpha},
    a'^{t+1}, r)} \perp\!\!\!\perp  
    R |
    \underline{X}'^t,\underline{A}'^t , X_o $ and consistency: \newline
    $p'
    \left(  
    C'_{(\overline{\pi}^{t+2}_{\alpha}, 
    a'^{t+1}, r)}     
    \Big\vert                   
    \underline{X}'^t, 
    \underline{A}'^t, 
    X_o,
    r
    \right) 
    = 
    p'
    \left(  
    C'_{(\overline{\pi}^{t+2}_{\alpha}, 
    a'^{t+1})}     
    \Big\vert                   
    \underline{X}'^t, 
    \underline{A}'^t,  
    X_o,
    r
    \right)$. 
    \item $*4)$: We use exchangeability: 
    $C'_{(\overline{\pi}^{t+2}_{\alpha},
    a'^{t+1})} \perp\!\!\!\perp  
    A'^{t+1} |
    \underline{X}'^t, 
    \underline{A}'^t, X_o, R$ and consistency: 
    \newline
    $p'
    \left(  
    C'_{(\overline{\pi}^{t+2}_{\alpha}, 
    a'^{t+1})}     
    \Big\vert                   
    \underline{X}'^t, 
    \underline{A}'^t,  
    a'^{t+1},
    X_o,
    r 
    \right) 
    = 
    p'
    \left(  
    C'_{(\overline{\pi}^{t+2}_{\alpha})}     
    \Big\vert                   
    \underline{X}'^t, 
    \underline{A}'^t, 
    a'^{t+1},
    X_o,
    r
    \right)$ .
    \item $*5)$: We use the conditional independence 
    $C'_{(\overline{\pi}^{t+2}_{\alpha})} \perp\!\!\!\perp  
    R |
    \underline{X}'^t, 
    \underline{A}'^t, A'^{t+1}, X_o$
\end{itemize}

\noindent 
To ensure in $4*)$ that $p'
\left(  
C'_{(\overline{\pi}^{t+2}_{\alpha}, 
a'^{t+1})}     
\Big\vert                   
\underline{X}'^t, 
\underline{A}'^t,
a'^t, 
X_o,
r 
\right) 
$, i.e. conditioning on $\underline{X}'^t, 
\underline{A}'^{t+1},  
X_o,
r $ is well specified, a positivity assumption must hold. 
We again start by factorizing the "observational" (i.e. simulated) distribution for step $t$:

\vspace{10pt}
\noindent 
\textit{Observational factorization (step $t $):}

\begin{align*}  
& p' 
\big( 
C'_{(\overline{\pi}^{t+2}_{\alpha})}          
\big) 
= 
\sum_{
\underline{X}'^t, 
\underline{A}'^t, 
A'^{t+1}, 
R}
p'   
\big(
C'_{(\overline{\pi}^{t+2}_{\alpha})} 
\Big\vert                   
\underline{X}'^t,
\underline{A}'^{t+1},
R,    
X_o
\big)
p'(\underline{X}'^t,
\underline{A}'^{t+1},
R , 
X_o) 
\end{align*}
\noindent 
where 
\begin{align*}  
p'( \underline{X}'^t,& 
\underline{A}'^{t+1},
R , 
X_o) =
\prod_{t=1}^{t+1}
\pi_{sim}'(
A'^{t}
|  
\underline{X}'^{t-1},
\underline{A}'^{t-1},
R
)  
\prod_{t=0}^t
p(X'^t|\underline{X}'^{t-1}, A'^t, X_o) 
\pi_\beta(R|X_o)
p(X_o) 
\end{align*}



\noindent 
By comparing the observational and counterfactual factorizations, we see that the following positivity assumption is required: 
\begin{flalign*}
\text{if }  \hspace{11 pt} \quad \quad \quad 
&
q'(  \underline{x}'^{t},\underline{a}'^{{t+1}},  r, x_o)  =  
\nonumber
\prod_{\tau=1}^{t+1}
\pi_{\alpha}(
a'^\tau
|  
\underline{x}'^{\tau-1},
\underline{x}'^{\tau-1}
)  
\prod_{\tau=0}^t
p(x'^\tau|\underline{x}'^{\tau-1}, a'^\tau, x_o) 
\pi_\beta(r|r \geq r'^{t+1}, x_o)
p(x_o) 
>0
\nonumber
&&
\\
\text{then } \quad \quad \quad 
&
p'(\underline{x}'^{t},\underline{a}'^{{t+1}},  r, x_o) =  
\nonumber
\prod_{\tau=1}^{t+1}
\pi_{sim}'(
a'^\tau
|  
\underline{x}'^{\tau-1},
\underline{a}'^{\tau-1},
r
)  
\prod_{\tau=0}^t
p(x'^\tau|\underline{x}'^{\tau-1}, a'^\tau, x_o) 
\pi_\beta(r|x_o)
p(x_o) 
>0
\nonumber
&&
\\ 
&
\forall 
\underline{x}'^t,
\underline{a}'^{t+1},
x_o,
r
\end{flalign*}

\noindent 
Comparing the terms for $A'^\tau$ shows: 
\begin{flalign*}
 &\text{if } 
\pi_{\alpha}(
a'^\tau | \underline{x}'^{\tau-1}, \underline{a}'^{\tau-1}) > 0, \quad \quad
 \text{then }
\pi_{sim}'(
a'^\tau | \underline{x}'^{\tau-1}, \underline{a}'^{\tau-1}, 
r) > 0,  \quad \quad 
&& \text{if and only if } r'^\tau \leq r   
\end{flalign*}
due to the definition of a blocked policy (Definition \ref{def_blocked_policy}). 
This does again not lead to any positivity issues, because 
$\pi_{\beta}(
r | r \geq r'^{t+1}, x_o) = 0$ if $r'^{t+1} \not \leq r$ and since this holds for $r'^{t+1}$, it will hold also for all $r'^\tau$ with $\tau < {t+1}$, because these contain less feature acquisitions.

Furthermore, comparing the terms for $R$, we see that there are no problems, since if  $\pi_{\beta}(
r| r \geq r'^{t+1}, x_o) > 0$, then $\pi_{\beta}(
r | x_o 
) > 0$. 
Thus, in order for the positivity assumption to hold, the only requirement is that $\pi_{\beta}(
r| r \geq r'^{t+1}, x_o)$ is a valid density, which is the case if $\pi_{\beta}(r \geq r'^{t+1}| x_o) > 0$. This requirement becomes strictly stronger as $t$ increases and more features are acquired. It is maximal at step $t = T-1$ where it leads to the positivity assumption stated in Eq. \ref{eq_positivity_semi_offline_RL}.

\vspace{10pt}
\noindent 
\textbf{Step T}

\noindent 
Now, we finish the identification with the last step $T$. We start from part 2 of the factorization for step $t$ and assume $a'^T = d_x +1 $, i.e. the "stop \& predict" action.

\vspace{10pt}
\noindent 
\textit{Counterfactual factorization (step $t$, part 2):}

\begin{align*}   
p'\left(C'_{(   
\overline{\pi}^{T+1}_{\alpha},
A'^{T} = {d_x+1})
}        
\Big\vert                   
\underline{X}'^{T-1},
\underline{A}'^{T-1},
X_o                            
\right) 
=
p'
\left(C'     
\Big\vert                   
\underline{X}'^{T-1},
\underline{A}'^{T-1},
A'^{T} = {d_x+1},
X_o                            
\right) 
\end{align*}

\noindent 
which holds again due to exchangeability and consistency. There is no need for other confounding adjustment. 

\vspace{10pt}
\noindent
\textbf{Full factorization}

\noindent
Bringing all time-steps $t = 0, ... , T$ together, and expanding for $Y$, yields the full factorization of the identifying distribution $q'$:
\begin{align*} 
q' & (C', Y, X', A', R,    
X_o )  =  g(C'| Y , \underline{X}'^{T-1}, \underline{A}'^T)           
p'(Y| \underline{X}'^{T-1}, \underline{A}'^{T-1} ,   X_o)
q'(
\underline{X}'^{T-1},          
\underline{A}'^{T},  
R,    
X_o 
) 
\end{align*}
where 
\begin{align*} 
q'(\underline{X}'^{T-1},        
\underline{A}'^{T},        
R,    
X_o )=
\pi_\beta(R| 
R \geq R', 
X_o)
\prod_{t=1}^{T}  
\pi_{\alpha}(A'^t|             
\underline{X}'^{t-1},
\underline{A}'^{t-1})  
\prod_{t=0}^{T-1}
p'(X'^t|   
\underline{X}'^{t-1},        
A'^{t}, 
X_o
) 
p(X_o)
\nonumber
\end{align*}

\noindent where 
$R' = R'^T$ denotes the final set of acquired features. 
In comparison, the full observational factorization (given in Remark \ref{remark_semi_offline_RL_observational}) is: 

\begin{align*} 
p' (C', Y, X', A',R, X_o)  =  g(C'| Y , \underline{X}'^{T-1}, \underline{A}'^T)           
p'(Y| \underline{X}'^{T-1}, \underline{A}'^{T-1},  X_o)
p'(
\underline{X}'^{T-1},          
\underline{A}'^{T},         
R, X_o) 
\end{align*}
where 
\begin{align*} 
p'(\underline{X}'^{T-1},        
\underline{A}'^{T},   
R, X_o)=
\prod_{t=1}^{T}  
\pi_{sim}'(A'^t|             
\underline{X}'^{t-1},
\underline{A}'^{t-1}, R)  
\prod_{t=0}^{T-1}
p'(X'^t|   
\underline{X}'^{t-1},        
A'^{t}, 
X_o
) 
\pi_\beta(R| 
X_o)
p(X_o)
\nonumber
\end{align*}

\noindent 
The following final positivity requirement arises (given by Eq \ref{eq_positivity_semi_offline_RL}): 

\begin{flalign*}
\nonumber
\text{if }  \hspace{11 pt} \quad \quad \quad
& q'(
\underline{x}'^{T-1},
\underline{a}'^T, 
x_o, 
r
) = 
\prod_{t=1}^{T}  
\pi_{\alpha}(a'^t|             
\underline{x}'^{t-1},
a'^{t-1})  
\prod_{t=0}^{T-1}
p'(x'^t|   
\underline{x}'^{t-1},        
\underline{a}'^{t},
x_o
) 
p(x_o)  
>0
\nonumber
&&
\\
\text{then } \quad \quad \quad
& 
\pi_\beta(r \geq r'| 
x_o) 
>0
\nonumber
&&
\\ 
&
\forall 
\underline{x}'^{T-1},
\underline{a}'^T / r'^T,
x_o,
r
\end{flalign*}

%
%
%
\noindent 
Theorem \ref{theorem_identification_semi_offline_RL} is now proven. 


%


\vspace{10pt}
\noindent 
\textbf{Bellman equation}

\noindent 
By extending Eqs. \ref{eq_bellman_1_derivation} and \ref{eq_bellman_2_derivation} to expected values, one arrives at the semi-offline RL Bellman equation:

\begin{align} 
\mathbb{E}   
\biggl[ 
C'_{(\overline{\pi}^{t+1}_{\alpha})} 
 \Big\vert  
\underline{X}'^{t-1}, 
\underline{A}'^{t}, 
X_o
\biggl] 
=
&   
\sum_{X'^t} 
\mathbb{E}
\left[   
C'_{(\overline{\pi}^{t+1}_{\alpha})} 
\Big\vert                   
\underline{X}'^t,
\underline{A}'^t,
X_o
\right]
p'(X'^t
|\underline{X}'^{t-1},A'^t,
X_o) 
\nonumber  
\\
\mathbb{E} \biggl[   
C'_{(
\overline{\pi}^{t+1}_{\alpha}
)} 
\Big\vert                   
\underline{X}'^t,
\underline{A}'^t,
X_o
\biggl] 
= 
&   
\nonumber
\sum_{
A'^{t+1}
} 
\mathbb{E}
\left[   
C'_{(\overline{\pi}^{t+2}_{\alpha} )} 
\Big\vert                   
\underline{X}'^t,
\underline{A}'^{t+1},
X_o
\right]
\pi_{\alpha}(A'^{t+1}
| \underline{X}'^t,
\underline{A}'^t) 
\end{align}

\noindent 
which concludes the proof of Theorem \ref{theorem_Bellman_equation}.
\end{proof}

\section{Proof of Theorem \ref{theorem_double_robustness}}
\label{app_theorem_double_robustness}

In this appendix, we proof Theorem \ref{theorem_double_robustness} stating the double robustness property of the semi-offline DRL estimator. 
\vspace{10pt}
\begin{proof}
We decompose 
$J_{\textit{DRL-Semi}}$ for the two different scenarios of one of the two nuisance functions being misspecified:

\vspace{10pt}
\noindent 
\textit{Scenario 1:} First, we look at the scenario where  $\hat{\pi}_\beta$ is correctly specified. The following decomposition holds: 
\begin{align*}                  
J_{\textit{DRL-Semi}} 
& = 
\underbrace{\mathbb{E}_{p'}[\rho_\textit{Semi}^T 
C'] }_{=J}  
+ 
\sum_{t=1}^{T} 
\underbrace{\mathbb{E}_{p'}
\left[
-\rho_{\textit{Semi}}^{t}  
 Q_{\textit{Semi}}^{t}
 +
 \rho_{\textit{Semi}}^{t-1}  
 V_{\textit{Semi}}^{t-1}
 \right]}_{=0}.  
\end{align*} 

\noindent 
When $\hat{\pi}_\beta$ is correctly specified, the first term, equal to the semi-offline RL IPW estimator, will consistently estimate $J$. 
The second term becomes 0 as shown in more detail in the following:
\begin{align*} 
\mathbb{E}_{p'}
\biggl[ 
-\rho_{\textit{Semi}}^{t}  
 Q_{\textit{Semi}}^{t}
 +& 
 \rho_{\textit{Semi}}^{t-1}  
 V_{\textit{Semi}}^{t-1}
 \biggl] 
 = 
 \\ & 
 \overset{*_1}{=} 
 \mathbb{E}_{p'}
\biggl[
\rho_{\textit{Semi}}^{\boldsymbol{t-1}}
 \biggl(
 - 
    \boldsymbol{\frac{
    \pi_\alpha(A'^{t}| \underline{X}'^{t-1}, \underline{A}'^{t-1}) 
    }
    {
    \pi'_{sim}(
    A'^{t}| \underline{X}'^{t-1}, \underline{A}'^{t-1},
    R
    )
    }
    \frac{
    \mathbb{I}(R \geq R'^t)
    }
    {
    \hat{\pi}_\beta(
    R \geq R'^t| R \geq R'^{t-1},
    X_o
    )
    }}
Q_{\textit{Semi}}^t
 \\ & \quad \quad  \quad \quad \quad  \quad  +
 \boldsymbol{\sum_{A'^t}
  \pi_{\alpha}(A'^t|
 \underline{X}'^{t-1},
 \underline{A}'^{t-1})
 Q_{\textit{\textbf{Semi}}}^t}
 \biggl)
 \biggl] 
  \\ & 
 \overset{*_2}{=} 
 \mathbb{E}_{p'}
 \biggl[
\rho_{\textit{Semi}}^{t-1}
  \biggl(
 - \boldsymbol{\sum_{A'^t} 
    \pi'_{sim}(
    A'^{t}| \underline{X}'^{t-1}, \underline{A}'^{t-1},
    R
    )}
    \frac{
    \pi_\alpha(A'^{t}| \underline{X}'^{t-1}, \underline{A}'^{t-1}) 
    }
    {
    \pi'_{sim}(
    A'^{t}| \underline{X}'^{t-1}, \underline{A}'^{t-1},
    R
    )
    }
     \cdot  \\ & \quad \quad \quad \quad \quad \quad \cdot
    \frac{
    \mathbb{I}(R \geq R'^t)
    }
    {
    \hat{\pi}_\beta(
    R \geq R'^t| R \geq R'^{t-1},
    X_o
    )
    }
Q_{\textit{Semi}}^t +
 \sum_{A'^t}
  \pi_{\alpha}(A'^t|
 \underline{X}'^{t-1},
 \underline{A}'^{t-1})
 Q_{\textit{Semi}}^t
 \biggl)
  \biggl] 
    \\ & 
 \overset{*_3}{=} 
 \mathbb{E}_{p'}
 \biggl[
\rho_{\textit{Semi}}^{t-1}
  \biggl(
 - \boldsymbol{\sum_{A'^t} 
 \pi_{\alpha}(A'^t|
 \underline{X}'^{t-1},
 \underline{A}'^{t-1})}
Q_{\textit{Semi}}^t
  + 
 \sum_{A'^t}
  \pi_{\alpha}(A'^t|
 \underline{X}'^{t-1},
 \underline{A}'^{t-1})
 Q_{\textit{Semi}}^t
 \biggl)
  \biggl] = 0
\end{align*} 
with the following more detailed explanations: 
\begin{itemize}
    \item $*1)$: We replace $V_{\textit{Semi}}$ using $V_{\textit{Semi}}^{t-1} = \mathbb{E}_{\pi_{\alpha}}[Q_{\textit{Semi}}^{t}]$. We also decompose  $\rho_{\textit{Semi}}$:
\begin{align*} 
    \rho^t_{\textit{Semi}} & = 
    \prod_{\tau=1}^t
    \frac{
    \pi_\alpha(A'^{\tau}| \underline{X}'^{\tau-1}, \underline{A}'^{\tau-1}) 
    }
    {
    \pi'_{sim}(
    A'^{\tau}| \underline{X}'^{\tau-1}, \underline{A}'^{\tau-1},
    R
    )
    }
    \frac{
    \mathbb{I}(R \geq R'^t)
    }
    {
    \hat{\pi}_\beta(
    R \geq R'^t| 
    X_o
    ).
    }
    \\ 
    & = 
    \prod_{\tau=1}^{t-1}
    \frac{
    \pi_\alpha(A'^{\tau}| \underline{X}'^{\tau-1}, \underline{A}'^{\tau-1}) 
    }
    {
    \pi'_{sim}(
    A'^{\tau}| \underline{X}'^{\tau-1}, \underline{A}'^{\tau-1},
    R
    )
    }
    \frac{
    \mathbb{I}(R \geq R'^{t-1})
    }
    {
    \hat{\pi}_\beta(
    R \geq R'^{t-1}| 
    X_o
    ).
    }
    \\ & 
    \cdot 
    \left( 
        \frac{
    \pi_\alpha(A'^{t}| \underline{X}'^{t-1}, \underline{A}'^{t-1}) 
    }
    {
    \pi'_{sim}(
    A'^{t}| \underline{X}'^{t-1}, \underline{A}'^{t-1},
    R
    )
    }
           \frac{
    \mathbb{I}(R \geq R'^t)
    }
    {
    \hat{\pi}_\beta(
    R \geq R'^t| R \geq R'^{t-1},
    X_o
    )
    }
    \right) 
\end{align*}
\item 
$*2)$: The expected value $\pi'_{sim}(
A'^{t}| \underline{X}'^{t-1}, \underline{A}'^{t-1},
R
)
$ can be brought inside.
\item 
$*3)$: We leverage the independence of $Q^t_{Semi}$ and $R$ given that $R \geq R'^{t-1}$. 
\end{itemize}

\vspace{10pt}
\noindent 
\textit{Scenario 2:} 
We now assume the correct specification of $Q_{\textit{Semi}}$. Therefore, the following decomposition holds:
\begin{align*}                  
J_{\textit{DRL-Semi}} & =              
\mathbb{E}_{                
p'}[ 
V_{\textit{Semi}}^{0}]           
+                   
\mathbb{E}_{p'}
 [ 
\rho_{\textit{Semi}}^{T}
\left(C'-  
 Q_{\textit{Semi}}^{T}
 \right)]                   
 +                               
 \mathbb{E}_{p'}[\sum_{t=1}^{T-1} 
\rho_{\textit{Semi}}^{t}  
 \left(-Q_{\textit{Semi}}^{t}
 +  
 V_{\textit{Semi}}^{t}
 \right)]   
\\ 
&
=                           
\underbrace{                
\mathbb{E}_{                
p'}
\left[ 
V_{\textit{Semi}}^{0}
\right]
}_{=J}           
+                   
\mathbb{E}_{p'}
\left[ 
\rho_{\textit{Semi}}^{T}
\underbrace{
\left(
\sum_{C'}
C'
p'(C'| 
\underline{X}'^{T-1},
 \underline{A}'^{T},
X_o)
-  
 Q_{\textit{Semi}}^{T}
 \right)}_{=0}
 \right]
\\ &            
 +                                
 \mathbb{E}_{p'}
 \biggl[
 \sum_{t=1}^{T-1} 
\rho_{
\textit{Semi}}^{t}  
 \underbrace{
 \biggl(
 -
 Q_{\textit{Semi}}^{t}
+
 \sum_{X'^t}
 V_{\textit{Semi }}^{t}
 p(X'^t|
    \underline{X}'^{t-1}, 
    \underline{A}'^t, X_o
 ) 
 \biggl)}_{=0}
 \biggl].   
 \end{align*} 

\noindent 
$\mathbb{E}_{                
p'} \left[  V_{\textit{Semi}}^{0}
\right]$ is equal to the DM estimator and therefore consistent. The last term is equal to the first component of semi-offline RL Bellman's equation.  
\end{proof}


\section{Proof of Theorem \ref{theorem_efficient_if}}
\label{app_theorem_efficient_if}
This appendix contains the proof for Theorem \ref{theorem_efficient_if}. 

\vspace{10pt}
\begin{proof}
We apply the path-derivative approach described in our review of semi-parametric theory (Appendix \ref{app_semiparametric_theory}),  similar to our derivation of the influence function for the time-series setting \cite{von_kleist_evaluation_2023}. 

The regular parametric submodel for the semi-offline sampling distribution is 
\begin{align*}                      
\biggl\{     
g(C'|
\underline{X}'^{T-1},            
\underline{A}'^{T}, 
Y)
& 
p'_\theta(Y|                   
\underline{X}'^{T-1},            
\underline{A}'^{T}, 
X_o)       
\prod_{t=1}^{T} 
\pi'_{sim}(A'^t|  
\underline{X}'^{t-1}, 
\underline{A}'^{t-1},                           
R)                              
\prod_{t=0}^{T-1} 
p'_\theta( X'^t|                                        
\underline{X}'^{t-1},        
\underline{A}'^{t}, 
X_o) \\ &\cdot
\pi_{\beta,\theta}(R|  
X_o)  
p_\theta(X_o)
\biggl\}
\end{align*}

\noindent 
and it is equal to the true pdf at $\theta = 0$. 
One could further substitute the following equalities which show that the parametric submodel only contains $\theta$-dependent terms that are functions of $p$ instead of $p'$:
\begin{align*}
    p'_\theta(Y|
    \underline{X}'^{T-1},
    \underline{A}'^T, 
    X_o) 
    & = 
    p_\theta(Y|
    X_{r'^T}, 
    X_o) 
    \\
    p'_\theta(X'^t|
    \underline{X}'^{t-1}, 
    \underline{A}'^t, X_o) 
    & = 
    p_\theta(X_{a'^t}|
    X_{r'^{t-1}},
    X_o). 
\end{align*}

\noindent 
To apply the path-derivative approach, we have to first specify the scores: 
\begin{align*} 
 S \equiv 
S_{
C',
Y,
X',
A', 
R,
X_o} &  = 
\nabla_\theta  
\text{log} 
p'_\theta(Y|
\underline{X}'^{T-1},
\underline{A}'^T,
X_o)
+
\sum_{t=0}^{T-1}
\nabla_\theta  
\text{log} 
p'_\theta(X'^t|
\underline{X}'^{t-1},
\underline{A}'^{t}, 
X_o) +
\\ & 
+
\nabla_\theta  
\text{log} 
\pi_{\beta,\theta}(R|
X_o)
+
\nabla_\theta  
\text{log} 
p_{\theta}(
X_o)
\\ & = 
S_{Y|    
\underline{X}'^{T-1},    
\underline{A}'^{T}, 
X_o}
+ 
\sum_{t=0}^{T-1}
S_{               
X'^t|    
\underline{X}'^{t-1},   
\underline{A}'^{t}, 
X_o}
+
S_{               
R|      
X_o}
+ 
S_{                     
X_o}
\end{align*}

\noindent 
The derivative  $\nabla_\theta J$ is:
\begin{align*} 
 \nabla_\theta J  
  &\overset{*_1}{=} 
\nabla_\theta 
\mathbb{E}_{q'}
\left[C'\right] 
= 
\nabla_\theta \left(
\sum_{C',Y, X',A',R,X_o} 
C' 
q'_\theta(C',Y,X',A',R,X_o)  \right)
\\ & = 
\sum_{C',Y,X',A',R,X_o} 
C'
\nabla_\theta 
q'_\theta(C',Y,X',A',R,X_o)   
\\ &  \overset{*_2}{=} 
\sum_{C',Y,X',A',R,X_o} 
C'
q'_\theta(C',Y,X',A',R,X_o)  
\nabla_\theta 
\text{log}
q'_\theta(C',Y,X',A',R,X_o)   
\\ & \overset{*_3}{=}  
\mathbb{E}_{q'}
\biggl[C' 
\biggl(       
S_{Y|          
\underline{X}'^{T},    
\underline{A}'^{T},
X_o}  
+
\sum_{t=0}^{T-1} 
S_{               
X'^t| 
\underline{X}'^{t-1},  
\underline{A}'^{t},
X_o}
+ 
S_{               
X_o}  +
S_{R|R\geq R',               
X_o}  \biggl) \biggl]
\end{align*}

\noindent 
where we simplified the notation in $*1)$: $\mathbb{E}_{q'}[C'] \equiv 
 \mathbb{E}_{q'_\theta}[C']\equiv  \mathbb{E}[C'_{(\pi_\alpha)}]$. Similarly, we will denote 
 $\mathbb{E}[C'_{(\pi_\alpha)}|\underline{X}'^t,\underline{A}'^t,X_o ]$ by $\mathbb{E}_{q'}[C'|\underline{X}'^t,\underline{A}'^t,X_o ]$.
In $*3)$, we used that $\pi_\alpha$, $\pi'_{sim}$ and all deterministic functions $g$ are known and independent of $\theta$. 

We look at each term individually:

\vspace{10pt}
\noindent
\textit{Term 1:}
\begin{align*}
\mathbb{E}_{q'}
[C'
S_{Y|    
\underline{X}'^{T-1},    
\underline{A}'^{T}, 
X_o}                    
]  
& = 
\mathbb{E}_{p'}\left[
\frac{q'(C',Y,X',A',R,X_o) }{p'(C',Y,X',A',R,X_o)}    C'
S_{Y|    
\underline{X}'^{T-1},    
\underline{A}'^{T}
,X_o}   
\right]
 \\ & =   
\mathbb{E}_{p'}\left[
\rho_\textit{Semi}^T                    
C'
S_{Y|    
\underline{X}'^{T-1},    
\underline{A}'^{T}
,X_o}   
\right]
\\ & \overset{*_1}{=}  
\mathbb{E}_{p'}
\left[
\rho_\textit{Semi}^T                    
(C'-\mathbb{E}_{q'}[C'|    
\underline{X}'^{T-1},    
\underline{A}'^{T}
,X_o]) 
S_{Y|    
\underline{X}'^{T-1},    
\underline{A}'^{T}
,X_o}    
\right]
\\ & \overset{*_2}{=}                        
\mathbb{E}_{p'}\left[
\rho_\textit{Semi}^T                    
\left(C'                             
-                         
\mathbb{E}_{q'}
\left[
C'|                     
\underline{X}'^{T-1},    
\underline{A}'^{T}
,X_o                   
\right]
\right) 
S\right]
\end{align*}

\noindent 
$*1)$ holds because  
$\mathbb{E}_{q'}[C'|                     
\underline{X}'^{T-1},       
\underline{A}'^{T}, 
X_o                  
]$ 
$(=\mathbb{E}_{p'}[C'|                     
\underline{X}'^{T-1},       
\underline{A}'^{T}, 
X_o                 
])$ and $\rho_\textit{Semi}^T$ 
are independent of $p'(C',Y|                   
\underline{X}'^{T-1},       
\underline{A}'^{T}, 
R, X_o)$ and because the scores are mean zero, i.e. \newline 
$\mathbb{E}_{p'}\left[S_{Y|    
\underline{X}'^{T-1},    
\underline{A}'^{T}, 
X_o}\right] = 0$.

In $*2)$, we leverage the independence between all other scores ($S_{           
X'^t|      
\underline{X}'^{t-1},   
\underline{A}'^{t}, 
X_o}$, 
$S_{            
R|      
X_o}$ and $S_{X_o}$) and
$p'(C'|                   
\underline{X}'^{T-1},       
\underline{A}'^{T}, 
X_o)$. This allows us to bring that part of the expected value inside to find: 
\begin{align*}
\mathbb{E}_{p'}
\left[
\left(C'                             
-                         
\mathbb{E}_{q'}
\left[
C'|                     
\underline{X}'^{T-1},       
\underline{A}'^{T}, 
X_o                    
\right]
\right)| 
\underline{X}'^{T-1},       
\underline{A}'^{T}, 
X_o    
\right] = 0
\end{align*}

\vspace{10pt}
 \noindent 
\textit{Term 2:}
\begin{align*}
\mathbb{E}_{q'}
\biggl[
C' 
S_{               
X'^t|      
\underline{X}'^{t-1},   
\underline{A}'^{t}, 
X_o}                  
\biggr]  
& \overset{*_1}{=}     
\mathbb{E}_{p'}
\left[
\rho_\textit{Semi}^t                    
\mathbb{E}_{q'}
\left[C'|    
\underline{X}'^{t},   
\underline{A}'^{t}, 
X_o
\right]
S_{               
X'^t|      
\underline{X}'^{t-1},   
\underline{A}'^{t}, 
X_o}
\right]
\\ 
& 
\overset{*_2}{=} 
\mathbb{E}_{p'}\left[
\rho_\textit{Semi}^t                    
\left(
\mathbb{E}_{q'}
\left[
C'|    
\underline{X}'^{t},   
\underline{A}'^{t}, 
X_o
\right]
-
\mathbb{E}_{q'}
\left[C'|    
\underline{X}'^{t-1},   
\underline{A}'^{t}, 
X_o
\right]
\right) 
S_{               
X'^t|      
\underline{X}'^{t-1},   
\underline{A}'^{t}, 
X_o}
\right]
\\ & 
\overset{*_3}{=}                        
\mathbb{E}_{p'}\left[
\rho_\textit{Semi}^t                   
\left(
\mathbb{E}_{q'}[C'|    
\underline{X}'^{t},   
\underline{A}'^{t}, 
X_o]
-
\mathbb{E}_{q'}[C'|    
\underline{X}'^{t-1},   
\underline{A}'^{t}, 
X_o]
\right) 
S\right]
\end{align*}

\noindent 
In $*1)$, we used the following independences of the IPW weights: 
\begin{align*}
\mathbb{E}_{p'}
\biggl[
\rho_\textit{Semi}^T       
\mathbb{E}_{q'}
\biggl[C'|    
\underline{X}'^{t}, &  
\underline{A}'^{t}, 
X_o
\biggl]
\biggl] 
\\ & = 
\mathbb{E}_{p'}
\left[    
\prod_{\tau=1}^T
\frac{
\pi_\alpha(A'^{\tau}| \underline{X}'^{\tau-1}, \underline{A}'^{\tau-1}) 
}
{
\pi'_{sim}(
A'^{\tau}| 
\underline{X}'^{\tau-1}, 
\underline{A}'^{\tau-1},
R
)
}
\frac{
\mathbb{I}(R \geq R')
}
{
\pi_\beta(
R \geq R'| 
X_o
)
}
\mathbb{E}_{q'}
\left[C'|    
\underline{X}'^{t},   
\underline{A}'^{t}, 
X_o
\right]
\right]
= 
\\ & \overset{*_{1.1}}{=}  
\mathbb{E}_{p'}
\left[    
\prod_{\tau=1}^t
\frac{
\pi_\alpha(A'^{\tau}| \underline{X}'^{\tau-1}, \underline{A}'^{\tau-1}) 
}
{
\pi'_{sim}(
A'^{\tau}| 
\underline{X}'^{\tau-1}, 
\underline{A}'^{\tau-1},
R
)
}
\frac{
\mathbb{I}(R \geq R'^t)
}
{
\pi_\beta(
R \geq R'^t| 
X_o
)
}
\mathbb{E}_{q'}
\left[C'|    
\underline{X}'^{t},   
\underline{A}'^{t}, 
X_o
\right]
\right]
\end{align*}

\noindent 
In $*1.1)$ we use the independence of the weights, leveraging the decomposition: 
\begin{align*}
\frac{
\mathbb{I}(R \geq R')
}
{
\pi_\beta(
R \geq R'| 
X_o
)
}
= 
\frac{
\mathbb{I}(R \geq R'^t)
}
{
\pi_\beta(
R \geq R'^t| 
X_o
)
}
\underbrace{\frac{
\mathbb{I}(R \geq R'^T)
}
{
\pi_\beta(
R \geq R'^T| R \geq R'^t,
X_o
)
}}_{\text{indep. of }
q'(C'|\underline{X}'^{t}, 
\underline{A}'^{t}, 
X_o)}.
\end{align*}

\noindent 
In $*2)$, we used that $\mathbb{E}_{q'}[C'|          \underline{X}'^{t-1},   
\underline{A}'^{t}, 
X_o                 
]$ and $\rho_\textit{Semi}^t$ 
are independent of $p(X'^t|                   
\underline{X}'^{t-1},           
\underline{A}'^{t}, 
X_o)$
and mean zero property of the scores, i.e. 
$\mathbb{E}_{p'}\left[S_{X'^t|    
\underline{X}'^{t-1},    
\underline{A}'^{t}, 
X_o}
\right] = 0$.
In $*3)$, we add the scores separately to the equation. 
\begin{itemize}
    \item Firstly, $\mathbb{E}_{p'}[.]$ can be divided into:
    \begin{align*}
    \mathbb{E}_{p'}[.] = 
\mathbb{E}_{
\underline{X}'^{t},
\underline{A}'^{t}, 
X_o}
\left[
\mathbb{E}_{
C',
Y,
\overline{X}'^{t+1},
\overline{A}'^{t+1}|
\underline{X}'^{t},
\underline{A}'^{t}, 
X_o}
\left[.|
\underline{X}'^{t},
\underline{A}'^{t}, 
X_o
\right]\right]
\end{align*}
such that bringing the inner expectation inside and using the fact that scores are mean zero, shows that $S_{Y|    
\underline{X}'^{T-1},    
\underline{A}'^{T}, 
X_o}$ and 
$S_{  
X'^\tau|    
\underline{X}'^{\tau-1},   
\underline{A}'^{\tau}, 
X_o}$ (with $\tau > t$)
can be included. 
\item We use another division of the expected value: 
\begin{align*}
\mathbb{E}_{
\underline{X}'^{t},
\underline{A}'^{t}, 
X_o
}[.]
= 
\mathbb{E}_{
\underline{X}'^{t-1},
\underline{A}'^{t}, 
X_o}
\left[
\mathbb{E}_{
X'^{t}|
\underline{X}'^{t-1},
\underline{A}'^{t}, 
X_o}
\left[.|
\underline{X}'^{t-1},
\underline{A}'^{t}, 
X_o
\right]\right]
\end{align*}
which lets the difference in the brackets $(.)$ be equal to zero when the inner expectation is brought inside. This bringing inside of the inner expectation is allowed for $S_{ X'^\tau|    
    \underline{X}'^{\tau-1},   
    \underline{A}'^{\tau}, X_o}$
(for $\tau < t$),  
$S_{               
R|    
X_o}$ and 
$S_{                   
X_o}$. 
\end{itemize}

\vspace{10pt}
\noindent 
\textit{Term 3:}
\begin{align*}
\mathbb{E}_{q'}
[C' &
S_{               
X_o}                  
]  
=                        
\mathbb{E}_{p'}\left[                    
\left(
\mathbb{E}_{q'}[C'|    
X_o]
-
\mathbb{E}_{q'}[C']
\right) 
S\right]
\end{align*}

\noindent 
where $\mathbb{E}_{q'}[C'] = J$

\vspace{10pt}
\noindent 
\textit{Term 4:}
\begin{align*}
\mathbb{E}_{q'}&
\biggl[
C'  
S_{               
R|R\geq R',      
X_o}   
\biggr] 
= 
0
\end{align*}

\noindent 
since $C'$ is conditionally independent of $R$, given $R\geq R'$ (given it fulfills the positivity assumption) and using the fact that the scores are mean zero.

\vspace{10pt}
\noindent 
\textit{Influence function: }

\noindent 
By substituting our derived expressions for all individual terms, we find 
\begin{align*}
\nabla_\theta J   
  =  & 
\mathbb{E}_{p'}
\biggl[ 
\Bigl(
\rho_\textit{Semi}^T                    
\left(C'                             
-                         
\mathbb{E}_{q'}[C'|                     
\underline{X}'^{T-1},       
\underline{A}'^{T}, 
X_o                     
]
\right) 
\\   & \quad +
\sum_{t=0}^{T-1}                   
\rho_\textit{Semi}^t                    
\left(
\mathbb{E}_{q'}[C'|    
\underline{X}'^{t},       
\underline{A}'^{t}, 
X_o]
-
\mathbb{E}_{q'}[C'|    
\underline{X}'^{t-1},       
\underline{A}'^{t}, 
X_o]
\right) 
\\ & \quad +                                  \mathbb{E}_{q'}[C'|    
X_o]
-
J
\Bigr) 
S 
\biggr]
\\  = & 
\mathbb{E}_{p'}
\biggl[ 
\Bigl(
\rho_\textit{Semi}^T                    
\left(C'                             
-                         
Q_\textit{Semi}^T                     
\right) +
\sum_{t=1}^{T-1}                  
\rho_\textit{Semi}^t                    
\left(
V_\textit{Semi}^t
-
Q_\textit{Semi}^t
\right) +                                   V_\textit{Semi}^0
-
J
\Bigr) 
S 
\biggr]
\\ 
= 
&
\mathbb{E}_{p'}
\biggl[ 
\Bigl(
\rho_\textit{Semi}^T                    
C'                                          +
\sum_{t=1}^T                   
\left(-
\rho_\textit{Semi}^t
Q_\textit{Semi}^t
+
\rho_\textit{Semi}^{t-1}                    
V_\textit{Semi}^{t-1}
\right) 
-
J
\Bigr) 
S 
\biggr].
\end{align*}

\noindent 
The influence function $\Psi$ can now be read off. 
\end{proof}

\section{Estimation of Other Target Parameters from the Semi-offline RL View}
\label{app_other_variants}

This appendix contains the extension of the main theorems and estimators to also include acquisition costs $C_a^t$. 
The newly defined target parameter becomes 
\begin{align*}
    J_\textit{total} = \mathbb{E}
\left[\sum_{t=1}^{T-1} 
C_{a^t,(\pi_\alpha)} + 
C_{mc,(\pi_\alpha)} \right] \equiv \mathbb{E}\left[\sum_{t=1}^T C^t_{(\pi_\alpha)}\right]
\end{align*}
where we let $C^t$ denote $C_a^t$ if $t < T$ and $C_{mc}$ if $t = T$.   

This extended setting only differs slightly from the setting described in the main part and the derived fundamental concepts can still be applied. We thus only state corollaries of the main identification and estimation theorems for this setting and exclude additional proofs.

\subsection{Identification}

Firstly, we extend Theorem \ref{theorem_identification_semi_offline_RL} to the total cost setting.

\begin{corollary}
\label{corollary_perstep_identification_semi_offline_RL} 
(Identification of $J_\textit{total}$ for the semi-offline RL view).
The AFAPE problem of estimating $J_\textit{total}$ under the semi-offline RL view  is under the no direct effect (NDE) assumption, the MAR assumption from Eq. \ref{eq_MAR}, the consistency assumption, the no interference assumption, the static feature assumption assumption and the positivity assumption from Eq. \ref{eq_positivity_semi_offline_RL} is identified by 
\begin{align}
\label{eq_identificiation_semi_offline_RL_1_perstep}
    J  
     = 
    \mathbb{E}_{p'}
    \left[ 
    \sum_{t=1}^T C'_{(\pi_\alpha)}
    \right]
     = 
    \sum_{C',Y, X',A',R,X_o} 
    \left(\sum_{t=1}^T 
    \mathbb{E}[C'^t|    
    \underline{X}'^{t-1},            
    \underline{A}'^{t},         
    X_o ]
    \right) 
    q'(
    \underline{X}'^{T-1},            
    \underline{A}'^{T},         
    R, 
    X_o) 
\end{align}
\noindent
where $q'$ is given by Eq. \ref{eq_identificiation_semi_offline_RL_2} and 
\begin{align*}
   \mathbb{E}[ C'^t|    
    \underline{X}'^{t-1},            
    \underline{A}'^{t},         
    X_o ] 
    = 
   \begin{cases}
\sum_{C'_{mc}, Y,Y'^{*}}
    C'_{mc} g(C'_{mc}|Y,Y'^{*}) p( Y| \underline{X}^{T-1},\underline{A}^T, X_o) g(Y'^{*}| \underline{X}'^{T-1},\underline{A}'^T)
    & \text{ if } t = T\\
\sum_{C'_{a}} C'_{a} g(C'_{a}|A'^t)  & \text{ if } t < T.
\end{cases}
\end{align*}
\end{corollary}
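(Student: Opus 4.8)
The plan is to reduce the claim to Theorem~\ref{theorem_identification_semi_offline_RL} by exploiting linearity of expectation, and then to handle the new intermediate (acquisition) cost terms by a marginalization argument. First I would write
\[
J_\textit{total} = \mathbb{E}\left[\sum_{t=1}^T C^t_{(\pi_\alpha)}\right] = \sum_{t=1}^T \mathbb{E}\left[C^t_{(\pi_\alpha)}\right],
\]
so that it suffices to identify each per-step counterfactual expected cost $\mathbb{E}[C^t_{(\pi_\alpha)}]$ separately and then collect the terms. The terminal term ($t=T$) is exactly the object identified in Theorem~\ref{theorem_identification_semi_offline_RL}: here $C^T = C_{mc}$ is generated by the classifier $g(Y'^{*}|\cdot)$, the label mechanism $p(Y|\cdot)$, and the cost mechanism $g(C'_{mc}|Y,Y'^{*})$, which produces the $t=T$ branch of the stated conditional expectation and, integrated against $q'$, reproduces Eq.~\ref{eq_identificiation_semi_offline_RL_1}.

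For the intermediate terms ($t < T$), I would observe that each acquisition cost $C^t = C_a^t$ is generated solely by the known mechanism $g(C'_a|A'^t)$ and therefore depends on the trajectory only through the single action $A'^t$. Consequently its per-step conditional expectation is immediate, namely $\mathbb{E}[C'^t|\underline{X}'^{t-1},\underline{A}'^t,X_o] = \sum_{C'_a} C'_a\, g(C'_a|A'^t)$, giving the $t<T$ branch. The remaining task is to verify that integrating this quantity against the full counterfactual distribution $q'$ from Eq.~\ref{eq_identificiation_semi_offline_RL_2} returns the correctly identified $\mathbb{E}[C_a^t{}_{(\pi_\alpha)}]$. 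For this I would appeal to the step-by-step construction of $q'$ in the proof of Theorem~\ref{theorem_identification_semi_offline_RL}: that argument establishes $q'$ as a valid counterfactual joint over the entire trajectory $(\underline{X}'^{T-1}, \underline{A}'^T, R, X_o)$, so every marginal of $q'$ is correctly identified as well.

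The one point that requires care, and which I expect to be the main obstacle, is reconciling the terminal propensity factor $\pi_\beta(R|R\geq R'^T, X_o)$ appearing in $q'$ with the fact that each intermediate cost is incurred at a step $t<T$ where only $R'^t \le R'^T$ features have been acquired. I would resolve this by marginalization: since $C_a^t$ does not depend on $R$, summing $q'$ over $R$ invokes $\sum_R \pi_\beta(R|R\geq R'^T, X_o) = 1$, collapsing the propensity factor and leaving the clean factorization $\prod_{\tau=1}^T \pi_\alpha(A'^\tau|\cdot)\prod_{\tau=0}^{T-1} p'(X'^\tau|\cdot)\,p(X_o)$; further marginalizing the post-$t$ variables then yields precisely the step-$t$ counterfactual marginal over $(\underline{X}'^{t-1}, \underline{A}'^t, X_o)$. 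This bookkeeping is exactly what allows a single distribution $q'$ to identify all $T$ cost contributions simultaneously, so that no separate step-$t$ propensity factor is needed in the integral representation. Summing the identified per-step expectations and recombining the integral against the full $q'$ then gives Eq.~\ref{eq_identificiation_semi_offline_RL_1_perstep}, completing the argument.
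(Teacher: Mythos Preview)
The paper does not actually supply a proof for this corollary: Appendix~\ref{app_other_variants} states that the extended setting ``only differs slightly'' from the main theorem and explicitly ``exclude[s] additional proofs.'' Your proposal is correct and makes explicit precisely the reduction the paper gestures at---invoking Theorem~\ref{theorem_identification_semi_offline_RL} for the terminal misclassification cost and handling the intermediate acquisition costs via the known deterministic mechanism $g(C'_a\mid A'^t)$ together with marginalization of $q'$. The marginalization bookkeeping you flag as the main obstacle is indeed the only point requiring care, and your resolution (sum out $R$ using $\sum_R \pi_\beta(R\mid R\geq R'^T,X_o)=1$, then marginalize the post-$t$ trajectory factors, each of which is a proper conditional distribution) is sound.
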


\noindent 
Next, we continue with a corollary that extends Theorem \ref{theorem_Bellman_equation} for the per-step costs setting.
\begin{corollary}
\label{corollary_perstep_Bellman_equation} 
(Bellman equation for semi-offline RL ($J_\textit{total}$)).
The semi-offline RL view admits under the no direct effect (NDE) assumption, the MAR assumption from Eq. \ref{eq_MAR}, the consistency assumption, the no interference assumption, the static feature assumption, and the positivity assumption from Eq. 
\ref{eq_positivity_semi_offline_RL} the following semi-offline RL version of the Bellman equation for the target $J_\textit{total}$:

\begin{align}
\label{eq_bellman_1_perstep}
Q_{\textit{Semi}}(\underline{X}'^{t-1}, &
\underline{A}'^{t},            
X_o ) 
=
\mathbb{E}[C'^t|    
    \underline{X}'^{t-1},            
    \underline{A}'^{t},         
    X_o ] 
+
\sum_{\mathclap{X'^t}}                          
V_{\textit{Semi}}
(              \underline{X}'^{t}, \underline{A}'^{t}, 
X_o) 
p(X'^t|                   
\underline{X}'^{t-1},          
A'^{t}, 
X_o)
\\
 V_{\textit{Semi}}
(\underline{X}'^{t}, 
\underline{A}'^{t},&
X_o)  
=
\sum_{A'^{t+1}}  
Q_{\textit{Semi}}(\underline{X}'^{t}, \underline{A}'^{t+1},      X_o  )
\pi_{\alpha}(A'^{t+1}|
\underline{X}'^{t},
\underline{A}'^{t})
\label{eq_bellman_2_perstep}
\end{align}

\noindent 
with semi-offline RL versions of the state-action value function $Q_{\textit{Semi}}$ and state value function $V_{\textit{Semi}}$:
\begin{align*}
 Q_{\textit{Semi}}^t 
 & \equiv 
 Q_{\textit{Semi}}(\underline{X}'^{t-1},
\underline{A}'^{t},
X_o) 
\equiv  
\mathbb{E}_{p'}\left[\sum_{\tau={t}}^T 
C'^\tau_{(\overline{\pi}^{t+1}_\alpha)}|
\underline{X}'^{t-1},
\underline{A}'^{t},
X_o
\right]  
\\
V_{\textit{Semi}}^t
& \equiv 
V_{\textit{Semi}}(
\underline{X}'^t,
\underline{A}'^{t},
X_o) 
\equiv 
\mathbb{E}_{p'}\left[\sum_{\tau={t+1}}^T C'^\tau_{(\overline{\pi}^{t+1}_\alpha)}|
\underline{X}'^t,
\underline{A}'^{t},
X_o
\right].
\end{align*}
\end{corollary}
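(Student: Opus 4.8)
The plan is to obtain Corollary \ref{corollary_perstep_Bellman_equation} as a direct consequence of the derivation underlying Theorem \ref{theorem_Bellman_equation}, exploiting the fact that the total cost $\sum_{\tau=1}^T C'^\tau$ is additive and that each per-step cost therefore enters the recursion linearly. The only genuinely new ingredient relative to the terminal-cost setting is the immediate cost term $\mathbb{E}[C'^t | \underline{X}'^{t-1}, \underline{A}'^t, X_o]$; everything else is handled by the same exchangeability, consistency, and positivity arguments already carried out in Appendix \ref{app_theorem_identification}.

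First I would start from the definition $Q_{\textit{Semi}}^t = \mathbb{E}_{p'}[\sum_{\tau=t}^T C'^\tau_{(\overline{\pi}^{t+1}_\alpha)} | \underline{X}'^{t-1}, \underline{A}'^t, X_o]$ and split the sum, by linearity of expectation, into the immediate cost $C'^t$ and the continuation $\sum_{\tau=t+1}^T C'^\tau_{(\overline{\pi}^{t+1}_\alpha)}$. For the immediate term, note that for $t < T$ the acquisition cost is produced by the known deterministic map $g(C_a | A'^t)$ and thus depends only on the action $A'^t$, which is already fixed in the conditioning set; consequently it is unaffected by any future intervention $\overline{\pi}^{t+1}$ and equals exactly the quantity $\mathbb{E}[C'^t | \underline{X}'^{t-1}, \underline{A}'^t, X_o]$ given in Corollary \ref{corollary_perstep_identification_semi_offline_RL}. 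At $t = T$ the immediate term is the misclassification cost, which depends only on $(Y, Y'^{*})$ through the known $g$ and the classifier, and is treated identically to the terminal cost in the original proof.

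Next I would treat the continuation term by replicating ``part 1'' of the step-$t$ derivation (Eq. \ref{eq_bellman_1_derivation}): expand over the just-revealed feature $X'^t$ using the transition density $p(X'^t | \underline{X}'^{t-1}, A'^t, X_o)$, which leaves $\mathbb{E}_{p'}[\sum_{\tau=t+1}^T C'^\tau_{(\overline{\pi}^{t+1}_\alpha)} | \underline{X}'^t, \underline{A}'^t, X_o] = V_{\textit{Semi}}^t$ inside the sum. Combining the immediate and continuation terms yields the first Bellman equation, Eq. \ref{eq_bellman_1_perstep}. For the second equation, Eq. \ref{eq_bellman_2_perstep}, I would start from $V_{\textit{Semi}}^t$, introduce the next action $A'^{t+1} \sim \pi_\alpha$, and apply verbatim the chain of steps from ``part 2'' (Eq. \ref{eq_bellman_2_derivation}): the auxiliary intervention $\pi_{id}$ on $R$ chosen as $\pi_\beta(r | r \geq r'^{t+1}, X_o)$, followed by the MAR exchangeability from Eq. \ref{eq_MAR}, the consistency substitutions, and the conditional independence of the continuation cost from $R$ given the history. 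Since these arguments never referenced the specific form of the cost, they carry over unchanged and produce $\sum_{A'^{t+1}} Q_{\textit{Semi}}^{t+1} \pi_\alpha(A'^{t+1} | \underline{X}'^t, \underline{A}'^t)$. The positivity assumption from Eq. \ref{eq_positivity_semi_offline_RL} guarantees, exactly as before, that each conditioning event has positive probability.

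The main obstacle is bookkeeping rather than analysis: one must assign each per-step cost to the correct value function, so that the immediate cost $C'^t$ is retained inside $Q_{\textit{Semi}}^t$ but excluded from $V_{\textit{Semi}}^t$ (which begins accruing at $\tau = t+1$), while verifying that the shared cost $C'^{t+1}$ is consistently accounted for on both sides of Eq. \ref{eq_bellman_2_perstep}. One must also confirm that no acquisition cost introduces a dependence on $R$ or on unobserved features that would break the conditional-independence steps borrowed from Theorem \ref{theorem_Bellman_equation}. Because each $C_a^t$ is a deterministic function of $A'^t$ alone and the misclassification cost depends only on $(Y, Y'^{*})$, this check is immediate, and the corollary follows.
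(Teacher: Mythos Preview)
Your proposal is correct and matches the paper's approach: the paper explicitly declines to give a separate proof, stating that ``the derived fundamental concepts can still be applied'' and that it therefore ``exclude[s] additional proofs'' for the per-step setting. Your plan---splitting off the immediate cost (which is a deterministic function of $A'^t$ or of $(Y,Y'^*)$ and thus unaffected by future interventions) and reusing the step-$t$ derivation of Theorem~\ref{theorem_Bellman_equation} verbatim for the continuation---is exactly the straightforward adaptation the paper has in mind.
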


\subsection{Estimation}

The semi-offline RL estimators for $J_{total}$ are given in the following. 

\vspace{10pt}
\noindent 
\textit{1) Inverse probability weighting (IPW):} 

\noindent 
The semi-offline RL IPW estimator for $J_{total}$ has the following form: 
\begin{align*} 
    J_{\textit{IPW-Semi}}
    & = 
    \hat{\mathbb{E}}_{n'}
    \left[ \sum_{t=1}^T \rho_{
    \textit{Semi}}^t
    \text{ } 
    C'^t
    \right], 
\end{align*} 
without any adjustment for $\rho_\textit{Semi}^t$.

\vspace{10pt}
\noindent
\textit{2) Direct method (DM):} 

\noindent
The semi-offline RL DM estimator for $J_{total}$ has the following form: 
\begin{align*} 
    J_{\textit{DM-Semi}} = 
    \hat{\mathbb{E}}_{n'}[V_{\textit{Semi}}^0]
\end{align*}

\noindent 
with the adapted $J_{total}$ version of $V_{\textit{Semi}}$ from Corollary \ref{corollary_perstep_Bellman_equation}.

\vspace{10pt}
\noindent 
\textit{3) Double reinforcement learning (DRL):} 

\noindent
The semi-offline DRL estimator for $J_{total}$ has the following form: 
\begin{align*}
J_{\textit{DRL-Semi}} = 
\hat{\mathbb{E}}_{n'}
\left[
\sum_{t=1}^{T} 
\left( 
\rho_\textit{Semi}^t 
C'^t  
- 
\rho_\textit{Semi}^t 
Q_\textit{Semi}^t
+
\rho_{\textit{Semi}}^{t-1}  
V_\textit{Semi}^{t-1}
\right) 
\right].
\end{align*}

\noindent 
where $V_{\textit{Semi}}$ and $Q_{\textit{Semi}}$ come from Corollary \ref{corollary_perstep_Bellman_equation}.


%
%
%
%

\section{Experiment Details}
\label{Appendix_experiments}

This appendix contains the experiment details. 
A detailed list of the parameters and configurations for each experiment can be found in Tables \ref{tab:details synthetic}, \ref{tab:details heloc}, and  \ref{tab:details income}.

\subsection{Data and Costs}
In our experiments, we characterized a "superfeature" as a feature encompassing multiple subfeatures, typically obtained or omitted together and incurring a unified cost. Additionally, we presumed the existence of a subset of features that incur no cost (termed free features) and assigned predetermined acquisition costs ($c_{acq}$) to the remaining superfeatures. We selected misclassification costs such that effective AFA policies need to find a balance between the cost of acquiring features and the predictive value of those features.
To assess the convergence of various estimators, we consider the average cost incurred by running the AFA policy on the ground truth dataset without missingness as the ground truth for $J$. This involves estimating $J$ through Eq. \ref{eq:AFAPE_objective_miss} by employing a Monte Carlo estimate for $\hat{\mathbb{E}}\left[ C_{(\pi_{\alpha})}|X_{(1)},Y \right]$, using the ground truth data without missingness (i.e., samples from $p(X_{(1)},Y)$).

\subsection{Training}

We employed an impute-then-regress classifier \cite{le_morvan_whats_2021} with unconditional mean imputation and a random forest classifier for the classification task. This classifier was trained on both the available data and additional randomly subsampled data, where the probability $p(R_i = 1)$ was set to 0.5. Random acquisition policies were tested, with each costly feature being acquired with a 10\%, 50\% or 90\% probability. Additionally, we assessed a vanilla deep Q-network (DQN) RL agent \cite{mnih_human-level_2015}, which was trained on $\mathcal{D}'$ with the objective of maximizing 
$\mathbb{E}[C']$. 

The datasets were partitioned into a training set, utilized for training both the DQN agent and the classifier, a nuisance function training set, and a test set for evaluating the estimators. The necessity of splitting the dataset into a nuisance function training set and a test set stems from the complexity of the employed nuisance model functions classes \cite{kennedy_semiparametric_2022}. However, this potential loss of efficiency can be circumvented by employing a cross-fitting approach \cite{kennedy_semiparametric_2022}.

\subsection{Synthetic Dataset}

We evaluated different estimators on a synthetic dataset where 
the features were simulated using Scikit-learn's \verb|make_classification| function \cite{pedregosa_scikit-learn_2011}.
The labels were distributed according to 
\begin{align*}
    p(Y=1) = 
    \begin{cases}
    1
    , & \text{if }  \sum_i X_{(1),i}  > 0  \\
   0.3 , & \text{otherwise}.
\end{cases}
\end{align*}

\noindent 
This distribution for  $Y$ was chosen to simulate a setting that makes the classification for some data points more difficult than for others. 

We induced a MAR missingness scenario where missingness depended only on the always observed features. In a second experiment, we induced an MNAR missingness pattern that 
resembles the scenario given in Appendix \ref{app_missing_data} as the missingness of superfeature 3 depends on superfeature 2 which might be missing itself. See Table \ref{tab:details synthetic} for more details on the synthetic data experiments. 

\subsection{Heloc Dataset}
We also tested on the Heloc (Home Equity Line of Credit) dataset which is part of the FICO explainable machine learning (xML) challenge \cite{fico_fico_2018}. The dataset contains credit applications made by homeowners. The ML task was to predict based on information at the application, whether an applicant is able to repay their loan within two years. We assume sensitive information should only be accessed at a cost to formulate the AFA problem. See Table~\ref{tab:details heloc} for full details on the Heloc data experiment.

\subsection{Income Dataset}
We evaluated on the Income dataset from the UCI data repository \cite{newman_uci_1998}. The ML task was to predict whether a person has an income over 50'000\$. We considered private information (such as education, work experience, etc.) as costly. See Table~\ref{tab:details income} for full details.

\begingroup
\setlength{\tabcolsep}{6pt} 
\renewcommand{\arraystretch}{1.2} 
\begin{center}
\begin{table}
\begin{tabular}{| p{0.25\textwidth} | p{0.75\textwidth} |}
    \hline
    \multicolumn{2}{| c |}{\textbf{Data and environment}} \\
    \hline\hline
    Sample size $n_{D}$ & $150'000$ divided into $20\%$ training set (for DQN agent and classifier), $40\%$ nuisance function training set, and $40\%$ test set. \\
    \hline
    Superfeatures & super$X_0$: $[X_0]$, super$X_1$: $[X_1]$, super$X_2$: $[X_2, X_3]$\\
    \hline
    Costs & $c_{acq}=[0 , 1,  1]$ and $c_{mc} = 14$ \\
    \hline\hline
    \multicolumn{2}{| c |}{\textbf{Missingness mechanisms}} \\
    \hline\hline
    MAR & \makecell*[l]{
        $p(R_0 =  1) = 1.0,$ \\
        $p(R_1 = 1) = \sigma(-0.3 + 0.5 X_{(1),0}),$ \\
        $p(R_2 = 1) = \sigma(-0.1 + 0.6 X_{(1),0})$ \\
        Complete cases ratio: $p(A = \vec{1}) \approx 24\%$} \\
        \hline
    MNAR & \makecell*[l]{
        $p(R_0 = 1) = 1.0,$ \\
        $p(R_1 = 1) = 0.7,$ \\
        $p(R_2 = 1) = \sigma(-1.5 + X_{(1),1})$ \\
        Complete cases ratio: $p(A= \vec{1}) \approx 22\%$} \\
    \hline\hline
    \multicolumn{2}{| c |}{\textbf{Models}} \\
    \hline\hline
    Classifier &           RandomForest (max depth: 10,
          number of estimators: 50)
          \\
    \hline
    Agents & \makecell*[l]{
   DQN (learning rate: 0.0001, number of layers: 2, \\
   \quad\quad\quad hidden layer neurons per layer: 16, \\
   \quad\quad\quad
   hidden layer activation function: ReLU) } \\
    \hline
    Nuisance functions & \makecell*[l]{
    $Q_{Semi}$ (learning rate: 0.001, number of layers: 2, 
    \\ \quad\quad\quad
    hidden layer neurons per layer: 16, \\ 
    \quad\quad\quad hidden layer activation function: ReLU  ) } \\
    \hline
\end{tabular}
    \caption{Synthetic data experiment details}
    \label{tab:details synthetic}
\end{table}
\end{center}

\begingroup
\setlength{\tabcolsep}{6pt} 
\renewcommand{\arraystretch}{1.1} 
\begin{center}
\begin{table}
\begin{tabular}{| p{0.25\textwidth} | p{0.75\textwidth} |}
    \hline
    \multicolumn{2}{| c |}{\textbf{Data and environment}} \\
    \hline\hline
    Sample size $n_{D}$ & $35'562$ divided into $40\%$ training set (for DQN agent and classifier), $30\%$ nuisance function training set, and $30\%$ test set. 
    \\
    \hline
    Superfeatures & \makecell*[l]{
        super$X_0$: [`ExternalRiskEstimate'] \vspace*{5pt} \\ 
        super$X_1$: [`MSinceOldestTradeOpen', `MSinceMostRecentTradeOpen', \\
        \hspace*{44pt}`AverageMInFile', `NumSatisfactoryTrades'] \vspace*{5pt} \\
        super$X_2$: [`NumTrades60Ever2DerogPubRec', \\
        \hspace*{44pt}`NumTrades90Ever2DerogPubRec'] \vspace*{5pt} \\
        super$X_3$: [`PercentTradesNeverDelq', `MSinceMostRecentDelq', \\
        \hspace*{44pt}`MaxDelq2PublicRecLast12M'] \vspace*{5pt} \\
        super$X_4$: [`MaxDelqEver', `NumTotalTrades'] \vspace*{5pt} \\
        super$X_5$: [`NumTradesOpeninLast12M', `PercentInstallTrades', \\
        \hspace*{44pt}`MSinceMostRecentInqexcl7days'] \vspace*{5pt} \\
        super$X_6$: [`NumInqLast6M, NumInqLast6Mexcl7days'] \vspace*{5pt} \\
        super$X_7$: [`NetFractionRevolvingBurden', `PercentTradesWBalance'] \vspace*{5pt} \\
        super$X_8$: [`NetFractionInstallBurden', `NumRevolvingTradesWBalance', \\
        \hspace*{44pt}`NumInstallTradesWBalance', \\
        \hspace*{44pt}`NumBank2NatlTradesWHighUtilization']} \\
    \hline
    Label & `RiskPerformance' (class 0: 48\%,  class 1: 52\%)\\
    \hline
    Costs & $c_{acq}=[1, 1, 1 ,  1, 1,  1,  1, 1, 1]$ and $c_{mc}= 20$ 
    \\
    \hline\hline
    \multicolumn{2}{| c |}{\textbf{Missingness mechanisms}} \\
    \hline\hline
    MAR & \makecell*[l]{
        $p(R_i = 1) = 1.0, \quad i \in \{0, 2, 4, 6, 8\}$ \\
        $p(R_j = 1) = \sigma(0.0 + 5.0 \text{ ExternalRiskEstimate})$ \\
        Complete case ratio: $p(\bar{R} = \vec{1}) \approx 35\%$} \\
    \hline\hline
    \multicolumn{2}{| c |}{\textbf{Models}} \\
    \hline\hline
    Classifier & 
    RandomForest (max depth: 5,
          number of estimators: 100) \\
    \hline
        Agents & \makecell*[l]{
   DQN (learning rate: 0.0001, number of layers: 2, \\
   \quad\quad\quad hidden layer neurons per layer: 16, \\
   \quad\quad\quad
   hidden layer activation function: ReLU) } \\
    \hline
    Nuisance functions & \makecell*[l]{
    $Q_{Semi}$ (learning rate: 0.001, number of layers: 2, 
    \\ \quad\quad\quad
    hidden layer neurons per layer: 16, \\ 
    \quad\quad\quad hidden layer activation function: ReLU  ) 
    }\\
    \hline
\end{tabular}
    \caption{Heloc data experiment details}
    \label{tab:details heloc}
\end{table}
\end{center}

\newpage

\begingroup
\setlength{\tabcolsep}{6pt} 
\renewcommand{\arraystretch}{1.1} 
\begin{center}
\begin{table}[h!]
\begin{tabular}{| p{0.25\textwidth} | p{0.75\textwidth} |}
    \hline
    \multicolumn{2}{| c |}{\textbf{Data and environment}} \\
    \hline\hline
    Sample size $n_{D}$ & 32'561 divided into $40\%$ training set (for DQN agent and classifier), $30\%$ nuisance function training set, and $30\%$ test set.
    \\
    \hline
    Superfeatures & \makecell*[l]{
        workclass: [`Federal-gov', `Local-gov', `Never-worked', `Private', \\
        \hspace*{49pt}`Self-emp-inc', `Self-emp-not-inc', `State-gov', `Without-pay'] \vspace*{4pt} \\ 
        education: [`1st-4th', `5th-6th', `7th-8th', `9th', `10th', `11th', `12th', \\
        \hspace*{49pt}`Assoc-acdm', `Assoc-voc', `Bachelors', `Doctorate', \\
        \hspace*{49pt}`HS-grad', `Masters', `Preschool', `Prof-school', \\
        \hspace*{49pt}`Some-college', `-num'] \vspace*{4pt} \\
        marital-status: [`Married-AF-spouse', `Married-civ-spouse', \\
        \hspace*{64pt}`Married-spouse-absent', Never-married', `Separated', \\
        \hspace*{64pt}`Widowed', `relationship Not-in-family', \\
        \hspace*{64pt}`relationship Other-relative', `relationship Own-child', \\
        \hspace*{64pt}`relationship Unmarried', `relationship Wife'] \vspace*{4pt} \\
        occupation: [`Adm-clerical', `Armed-Forces', `Craft-repair', \\
        \hspace*{53pt}`Exec-managerial', `Farming-fishing', `Handlers-cleaners', \\
        \hspace*{53pt}`Machine-op-inspct', `Other-service', `Priv-house-serv', \\
        \hspace*{53pt}`Prof-specialty', `Protective-serv', `Sales', `Tech-support', \\
        \hspace*{53pt}`Transport-moving'] \vspace*{4pt} \\
        race: [`Asian-Pac-Islander', `Black', `Other'], sex: [`sex Male'] \\
        age: [`age'], hours-per-week: ['hours-per-week'] \\
        capital-gain: [`capital-gain'], capital-loss: [`capital-loss']} \\
    \hline
    Label & `income' (class 0: 76.0\%, class 1: 24.0\%)\\
    \hline
    Costs & $c_{acq}=[1,1,1,1,1,1,1,1,1,1]$ and $c_{mc}= 50$
    \\
    \hline\hline
    \multicolumn{2}{| c |}{\textbf{Missingness mechanisms}} \\
    \hline\hline
MAR & \makecell*[l]{
        $p(R_i = 1) = 1.0, \quad i \in \{5, 6\}$ \\
        $p(R_j = 1) = \sigma(1.0 - \text{ Male} + 4.0 \text{ age}), \quad j \in \{0,1, 2, 3, 4\}$ \\
        $p(R_k = 1) = \sigma(\text{ Male} + 3.0 \text{ age}), \quad k \in \{7, 8, 9\}$ \\
        Complete case ratio: $p(\bar{R} = \vec{1}) \approx 57\%$} 
        \\
    \hline\hline
    \multicolumn{2}{| c |}{\textbf{Models}} \\
    \hline\hline
    Classifier & RandomForest (max depth: 10,
          number of estimators: 50) \\
    \hline
        Agents & \makecell*[l]{
   DQN (learning rate: 0.0001, number of layers: 2, \\
   \quad\quad\quad hidden layer neurons per layer: 16, \\
   \quad\quad\quad
   hidden layer activation function: ReLU) } \\
    \hline
   Nuisance functions & \makecell*[l]{
  $Q_{Semi}$ (learning rate: 0.001, number of layers: 2, 
    \\ \quad\quad\quad
 hidden layer neurons per layer: 16, \\ 
   \quad\quad\quad hidden layer activation function: ReLU  ) 
    }\\
    \hline
\end{tabular}
    \caption{Income data experiment details}
    \label{tab:details income}
\end{table}
\end{center}

\clearpage

\bibliographystyle{unsrt}  
\bibliography{references}  


\end{document}